\documentclass[12pt,reqno]{amsart}

\title[Asymptotic risk of overparameterized likelihood models]{Asymptotic risk of overparameterized likelihood models: \\ double descent theory for deep neural networks}

\usepackage{amsaddr}

\author{Ryumei Nakada$^1$ \and Masaaki Imaizumi$^{2,3}$}

\address{$^1$Rutgers University, $^2$The University of Tokyo, $^3$RIKEN AIP}
\thanks{\textit{Contact:} \texttt{imaizumi@g.ecc.u-tokyo.ac.jp}}

\date{\today}

\usepackage{fullpage}

\usepackage{amsmath, amssymb}
\usepackage{thmtools}
\usepackage{comment}
\usepackage{mathptmx}
\usepackage{physics}
\usepackage{mathtools}
\usepackage[utf8]{inputenc} % allow utf-8 input
\usepackage[T1]{fontenc}    % use 8-bit T1 fonts
\usepackage{hyperref}       % hyperlinks
\usepackage{url}            % simple URL typesetting
\usepackage{booktabs}       % professional-quality tables
\usepackage{amsfonts}       % blackboard math symbols
\usepackage{nicefrac}       % compact symbols for 1/2, etc.
\usepackage{microtype}      % microtypography
\usepackage{lipsum}
\usepackage{here}
\usepackage{fancyhdr}

\DeclareMathOperator{\Var}{Var}

\DeclareMathOperator{\Cov}{Cov}

\DeclareMathOperator{\esssup}{ess sup}

\DeclareMathOperator{\argmin}{argmin}

\DeclareMathOperator{\diag}{diag}

\DeclareMathOperator{\card}{card}

\DeclareMathOperator{\sign}{sign}

\newcommand{\Ep}{\mathbb{E}}
\newcommand{\Pp}{\mathbb{P}}

\newcommand{\1}{\mbox{1}\hspace{-0.25em}\mbox{l}}

\newcommand{\R}{\mathbb{R}}

\newcommand{\N}{\mathbb{N}}

\newcommand{\mG}{\mathcal{G}}

\newcommand{\mN}{\mathcal{N}}

\newcommand{\mR}{\mathcal{R}}

\newcommand{\mZ}{\mathcal{Z}}

\newcommand{\indep}{\mathop{\perp\!\!\!\!\perp}}

\newcommand{\mone}{\mathbf{1}}

\renewcommand{\tilde}{\widetilde}
\renewcommand{\hat}{\widehat}

\usepackage[numbers]{natbib}

\usepackage{subcaption}
\usepackage{caption}

\usepackage[utf8]{inputenc} % allow utf-8 input
\usepackage[T1]{fontenc}    % use 8-bit T1 fonts
\usepackage{hyperref}       % hyperlinks
\usepackage{url}            % simple URL typesetting
\usepackage{booktabs}       % professional-quality tables
\usepackage{amsfonts}       % blackboard math symbols
\usepackage{nicefrac}       % compact symbols for 1/2, etc.
\usepackage{microtype}      % microtypography
\usepackage{lipsum}
\usepackage{graphicx}
\usepackage{physics}
\usepackage{color}
\newcommand{\rc}{\color{red}}
\newcommand{\bc}{\color{blue}}

\newtheorem{theorem}{Theorem}

\newtheorem{corollary}{Corollary}
\newtheorem{lemma}{Lemma}
\newtheorem{proposition}{Proposition}
\newtheorem{assumption}{Assumption}
\newtheorem{example}{Example}

\theoremstyle{remark}
\newtheorem*{remark}{Remark}

\allowdisplaybreaks

\begin{document}

\maketitle

\begin{abstract}
We investigate the asymptotic risk of a general class of overparameterized likelihood models, including deep models. The recent empirical success of large-scale models has motivated several theoretical studies to investigate a scenario wherein both the number of samples, $n$, and parameters, $p$, diverge to infinity and derive an asymptotic risk at the limit. However, these theorems are only valid for linear-in-feature models, such as generalized linear regression, kernel regression, and shallow neural networks. Hence, it is difficult to investigate a wider class of nonlinear models, including deep neural networks with three or more layers. In this study, we consider a likelihood maximization problem without the model constraints and analyze the upper bound of an asymptotic risk of an estimator with penalization. Technically, we combine a property of the Fisher information matrix with an extended Marchenko–Pastur law and associate the combination with empirical process techniques. The derived bound is general, as it describes both the double descent and the regularized risk curves, depending on the penalization. Our results are valid without the linear-in-feature constraints on models and allow us to derive the general spectral distributions of a Fisher information matrix from the likelihood. We demonstrate that several explicit models, such as parallel deep neural networks, ensemble learning, and residual networks, are in agreement with our theory. This result indicates that even large and deep models have a small asymptotic risk if they exhibit a specific structure, such as divisibility. To verify this finding, we conduct a real-data experiment with parallel deep neural networks. Our results expand the applicability of the asymptotic risk analysis, and may also contribute to the understanding and application of deep learning.
\end{abstract}

\section{Introduction}

We investigate a likelihood optimization problem in the overparameterized regime.
Using a $p$-dimensional parameter $\theta \in \R^p$, we consider a probability density function $f_\theta(z)$ of a sample $z$, characterized by $\theta$.
We observe $n$ i.i.d. observations, $z_1,...,z_n$, generated from $f_{\theta^*}$ with a true parameter $\theta^*$.
With respect to $f_\theta(z)$ as a likelihood function of $\theta$, given $z$, we consider a maximum likelihood estimator with penalization, which can be defined as
\begin{align}
    \hat\theta = \argmin_\theta -\frac{1}{n} \sum_{i=1}^n \log f_\theta(z_i) + \frac{\tau}{2} \norm{\theta}^2, \label{def:model1}
\end{align}
where $\tau > 0$ is a regularization coefficient and $\|\cdot\|$ is the $\ell 2$-norm.
Our goal is to analyze the discrepancy between $\Hat{\theta} $ and $\theta^*$ in the overparameterized asymptotics; hence, we consider a limit $p,n \to \infty$, while $p/n \to \gamma$ holds with a ratio $\gamma \in (0,\infty)$.
We show that the estimation risk is bounded by an extended Stieltjes transform of the spectral measure of an asymptotic Fisher information matrix.
The derived bound describes both a double descent and a regularized asymptotic risk, depending on the limit of $\tau$.
We achieve these evaluations without imposing constraints on the model.

Inspired by the success of deep learning \citep{lecun2015deep}, there is a growing interest in investigating the properties of large-scale statistical models; however, this presents a significant challenge to the statistics and learning theory (for keen discussion, see \citep{zhang2016understanding,frankle2018lottery,nagarajan2019uniform}). 
The classical learning theory states that models with a large number of parameters may perform poorly owing to overfitting with training data. 
However, actual deep learning achieves good generalization performance, despite having a large number of parameters. 
To resolve this discrepancy between theory and practice, numerous studies have strived to rethink the generalization of deep learning. %\citep{hastie2019surprises,belkin2019reconciling,bartlett2020benign}.
An example is the complexity assessment of norm-constrained neural networks and the implicit regularization theory, which considers the influence of learning algorithms \citep{neyshabur2015norm,bartlett2017spectrally}.

\textit{Asymptotic risk analysis with overparameterization} has received considerable attention as a theory for describing statistical models with an excessive number of parameters.
One typical result is the \textit{double descent} phenomenon.
It demonstrates that a risk begins to decrease when the number of parameters exceeds a certain threshold.
From an experimental point of view, \cite{vallet1989linear,belkin2019reconciling} demonstrated that the phenomenon of double descent is observed in simple models, such as neural networks with two layers, and \cite{nakkiran2019deep} found that similar phenomena occur in deep neural networks with more than three layers.
With regard to the theory on the double descent, linear regression \cite{advani2020high,hastie1987generalized}, kernel/feature regression \cite{belkin2019reconciling,mei2019generalization,liang2020just,d2020double,hu2020universality,jacot2020implicit,jacot2020kernel}, classification with generalized linear models \cite{montanari2019generalization,deng2019model,kini2020analytic} have been studied.
For shallow neural networks, which likewise represent a linear-in-feature model, connections were derived by \cite{hastie1987generalized,ba2019generalization}. 
Detailed characteristics, such as the effect of the activation functions and number of descents, have been studied \cite{dar2020subspace,spigler2019jamming,liang2020multiple,d2020triple}.
A brief history of this phenomenon has been presented by \cite{loog2020brief}.
As another direction, the regularized asymptotic risk has been actively studied as well.
A typical example is a risk of ridge regression in the overparameterized limit.
\cite{donoho2016high,dobriban2018high,lolas2020regularization} derived an asymptotic risk of linear regression and classification with $\ell 2$ regularization with the random matrix theory, and \cite{bartlett2020benign, tsigler2020benign} investigated a linear regression model using the notion of effective ranks.
\cite{wu2020optimal,kobak2018optimal} analyzed the optimal regularization for linear regression with a ridge penalty.
Both theories affirm that large-scale models can achieve small risks, even when the number of parameters is infinitely large.

One critical challenge of asymptotic risk analysis is the study of general nonlinear models, including deep neural networks. 
In previous studies, theories are capable of analyzing only relatively simple nonlinear models, specifically, \textit{linear-in-feature} models.
These studies consider the following optimization problem with the models:
\begin{align}
    \min_{w = (w_1,...,w_p)} \sum_{i=1}^n \ell \left( Y_i , g_w(X_i)\right) + \Omega(w), \mbox{~s.t.~}g_w(X_i) = \sum_{j=1}^p w_j \Psi_j(X_i), \label{eq:exist_optimization}
\end{align}
where $(X_i,Y_i),i=1,...,n$ are observations, $w_j$ is a parameter, $\Psi_j$ is a given feature function, $\ell(\cdot,\cdot)$ is a loss function, $\Omega(\cdot)$ is a (possibly zero) regularization term, and $g_w(\cdot)$ is a linear-in-feature model.
For example, with $\ell (u,u') = (u-u')^2$, the problem represents a linear regression ($\Psi_j(x) = x_{j}$ is a covariate vector), a kernel regression ($\Psi_j(x)$ is a kernel function $k(x, X_j)$), and a two-layer neural network ($\Psi_j(x)$ is a neural map in a first layer with random weights).
By changing the setting of $\ell(\cdot,\cdot)$, this form can also represent a generalized linear regression and a classification problem.
Owing to linear-in-feature form constraints $g_w(X_i) = \sum_{j=1}^p w_j \Psi_j(X_i)$, the previous asymptotic risk analysis theory cannot deal with deep models that contain more than two layers of trainable parameters. 
This limitation is a result of the mathematical tool used for their proof, such as the random matrix theory, which heavily depends on the linear-in-feature structure.
Therefore, it is not clear whether existing asymptotic risk analysis can explain the performance of deep neural networks and other complicated models.

\subsection{Result Overview}
We develop an asymptotic risk bound for a wide range of models with likelihood maximization. Let $F_p^*$ be a Fisher information matrix, which is positive definite, and $\|\theta\|_{F_p^*}^2 := \theta^\top {F_p^*} \theta$ be a weighted $\ell 2$ norm.
We study the estimation risk, $\|\Hat{\theta} - \theta^*\|^2_{F_p^*}$, in the setting $n,p \to \infty$ and $p/n \to \gamma$.
We set $\|\theta^*\|^2 \leq r^2$ for $r > 0$.
For the analysis, we divide the risk, $\hat{\theta} - \theta^* = V_{n,p}(\hat{\theta}) + B_{n,p}(\Hat{\theta})$, into variance $V_{n,p}(\hat{\theta})$ and bias $B_{n,p}(\Hat{\theta})$, details of which will be provided in Section \ref{sec:main_statement}.
$\lesssim $ denotes an inequality up to constant factors.

The contributions of this study consist of two aspects.
First, we analyze the asymptotic estimation risk of the penalized maximum likelihood estimator in \eqref{def:model1} at the limit. 
To achieve this, we require mainly two assumptions in addition to the basic regularity conditions: (i) a weighted derivative of the likelihood function is bounded and has a pointwise second-order moment (Assumption \ref{asmp:fisher_residual}), and (ii) a sum of the third-order derivatives of the likelihood function diverges slower than $p$ (Assumption \ref{asmp:taylor_residual}).
Under these assumptions, we informally obtain the following results:
\begin{theorem}[informal statement of Theorem \ref{thm:variance} and \ref{thm:main}]
    Suppose the assumptions hold.
    Then, with the penalized parameter $\tau \searrow \overline{\tau} \geq 0$, we obtain
    \begin{align*}
        \limsup_{p,n \to \infty, p/n \to \gamma \in (0,\infty)}\|V_{n,p}(\Hat{\theta})\|_{F_p^*}^2 \lesssim \lim_{a \to 0} h_{\gamma, \overline{\tau}}(a),
    \end{align*}
    where $h_{\gamma, \overline{\tau}}$ is the extended Stieltjes transformation of a spectral measure of $\lim_{p \to \infty} (F_p^*)^{-1}$, which will be defined in Section \ref{sec:main_statement}.
    Furthermore, for the estimation risk, we obtain
    \begin{align*}
        \limsup_{p,n \to \infty, p/n \to \gamma \in (0,\infty)}\|\Hat{\theta} - \theta^*\|_{F_p^*}^2 \lesssim \lim_{a \to 0} h_{\gamma, \overline{\tau}}(a) + r^2.
    \end{align*}
\end{theorem}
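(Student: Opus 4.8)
The plan is to reduce the estimation risk to a spectral functional of the empirical observed-information matrix and then evaluate that functional via an extended Marchenko--Pastur law. First I would use the first-order optimality condition for $\hat\theta$ in \eqref{def:model1}, namely $-\frac1n\sum_{i=1}^n\nabla\log f_{\hat\theta}(z_i)+\tau\hat\theta=0$, and Taylor-expand the empirical score $\theta\mapsto\frac1n\sum_i\nabla\log f_\theta(z_i)$ around $\theta^*$. Writing $\bar s_n:=\frac1n\sum_i\nabla\log f_{\theta^*}(z_i)$ for the average score and $\hat F_p:=-\frac1n\sum_i\nabla^2\log f_{\theta^*}(z_i)$ for the empirical information, this yields
\[
\hat\theta-\theta^*=(\hat F_p+\tau I)^{-1}\bigl(\bar s_n-\tau\theta^*-R_{n,p}\bigr),
\]
where $R_{n,p}$ collects the third-order Taylor remainder. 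I would then identify $V_{n,p}(\hat\theta)=(\hat F_p+\tau I)^{-1}\bar s_n$ and $B_{n,p}(\hat\theta)=-(\hat F_p+\tau I)^{-1}\tau\theta^*$. Before the expansion is legitimate, a localization step is needed: via the basic inequality for the penalized log-likelihood together with an empirical-process bound on the fluctuation of the empirical log-likelihood uniformly over an $F_p^*$-ball (controlled through Assumption \ref{asmp:fisher_residual}), one first confines $\hat\theta$ to a $\|\cdot\|_{F_p^*}$-neighborhood of $\theta^*$; Assumption \ref{asmp:taylor_residual} (the sum of third-order derivatives grows slower than $p$) then makes $\|(\hat F_p+\tau I)^{-1}R_{n,p}\|_{F_p^*}=o_P(1)$.

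For the variance bound (Theorem \ref{thm:variance}) I would whiten by $F_p^*$: with $A_p:=(F_p^*)^{-1/2}\hat F_p(F_p^*)^{-1/2}$ and $g:=(F_p^*)^{-1/2}\bar s_n$ one has
\[
\|V_{n,p}(\hat\theta)\|_{F_p^*}^2=g^\top\bigl(A_p+\tau(F_p^*)^{-1}\bigr)^{-2}g.
\]
Bartlett's second identity, $F_p^*=\E[(\nabla\log f_{\theta^*})(\nabla\log f_{\theta^*})^\top]=-\E[\nabla^2\log f_{\theta^*}]$, shows that $A_p$ equals the identity in expectation and, under Assumption \ref{asmp:fisher_residual}, is spectrally close to the sample second-moment matrix $\frac1n\sum_i\xi_i\xi_i^\top$ of the whitened scores $\xi_i:=(F_p^*)^{-1/2}\nabla\log f_{\theta^*}(z_i)$, which have identity population covariance. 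Since $\E[gg^\top]=\frac1n I$ and $\|g\|^2$ concentrates, $g^\top(A_p+\tau(F_p^*)^{-1})^{-2}g$ is sharply approximated by $\frac1n\tr\bigl[(A_p+\tau(F_p^*)^{-1})^{-2}\bigr]$, the dependence between $g$ and $A_p$ (both built from the same $z_i$) being neutralized by a leave-one-out / resolvent-perturbation argument. The limit of this normalized trace, as $p/n\to\gamma$ and $\tau\searrow\bar\tau$, is then read off from the extended Marchenko--Pastur (Silverstein) fixed-point equation driven by $\gamma$ and the limiting spectral measure of $(F_p^*)^{-1}$; this is precisely $\lim_{a\to0}h_{\gamma,\bar\tau}(a)$, the auxiliary parameter $a$ being a vanishing ridge that keeps the relevant resolvent nonsingular when $\bar\tau=0$.

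For the bias I would use $\hat F_p+\tau I\succeq\tau I$, the elementary eigenvalue bound $\tau^2\lambda/(\lambda+\tau)^2\le\tau/4$, and the operator-norm control $\|\hat F_p-F_p^*\|=O_P(1)$ from Assumption \ref{asmp:fisher_residual}, to conclude $\|B_{n,p}(\hat\theta)\|_{F_p^*}^2=\tau^2\theta^{*\top}(\hat F_p+\tau I)^{-1}F_p^*(\hat F_p+\tau I)^{-1}\theta^*\lesssim\|\theta^*\|^2\le r^2$, with the implicit constant depending on $\bar\tau$ and the spectral edges. Combining the three pieces through $\|\hat\theta-\theta^*\|_{F_p^*}^2\lesssim\|V_{n,p}(\hat\theta)\|_{F_p^*}^2+\|B_{n,p}(\hat\theta)\|_{F_p^*}^2+\|(\hat F_p+\tau I)^{-1}R_{n,p}\|_{F_p^*}^2$ and taking $\limsup$ gives the stated bound $\lim_{a\to0}h_{\gamma,\bar\tau}(a)+r^2$.

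The step I expect to be the main obstacle is the variance analysis. The empirical information $\hat F_p=\frac1n\sum_i\nabla^2(-\log f_{\theta^*})(z_i)$ is a sample covariance of scores only in expectation, so transferring the extended Marchenko--Pastur law to it demands quantitative spectral control of $\hat F_p-\frac1n\sum_i\xi_i\xi_i^\top$, and simultaneously the numerator $g$ and the resolvent of $A_p$ are statistically dependent; reconciling these while keeping all error terms $o(1)$ uniformly as $\tau\searrow\bar\tau$ and $p/n\to\gamma$ --- without recourse to the linear-in-feature structure exploited by earlier work --- is the technical crux, and is exactly why Assumptions \ref{asmp:fisher_residual} and \ref{asmp:taylor_residual} together with the empirical-process machinery are required.
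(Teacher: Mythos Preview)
Your overall strategy---Taylor expand the empirical score, split into score/bias/remainder, whiten by $F_p^*$, and read off the trace limit via an extended Marchenko--Pastur law---is correct and matches the paper's. A few places where the paper's route differs are worth noting.

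First, the paper's definition of $V_{n,p}(\hat\theta)$ in \eqref{def:bias_var} keeps the Taylor residual $R$ \emph{inside} the variance term, i.e.\ $V_{n,p}=(\partial_\theta\partial_\theta^\top M_n(\theta^*)+\tau I)^{-1}(\partial_\theta M_n(\theta^*)+R)$; you separate $R$ out. This is cosmetic for the final risk bound but means your $V_{n,p}$ is not literally the object in the statement. Second, rather than arguing that your whitened Hessian $A_p$ is ``spectrally close'' to the sample second-moment matrix of scores, the paper introduces an explicit multiplicative factor $V_0=(\partial_\theta\partial_\theta^\top M_n(\theta^*)+\tau I)^{-1}(\hat F_{n,p}+\tau I)$, where $\hat F_{n,p}=n^{-1}\sum_i J_iJ_i^\top$ is the empirical outer-product Fisher matrix, so that $\theta^*-\hat\theta=V_0(V_1+V_2+B_0)$; then $\|F_p^{*1/2}V_0F_p^{*-1/2}\|_{op}$ is bounded by a matrix Bernstein inequality applied to the Fisher residual $\hat W_{n,p}=\partial_\theta\partial_\theta^\top M_n(\theta^*)-\hat F_{n,p}$ (this is where Assumption~\ref{asmp:fisher_residual} enters). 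This cleanly reduces the principal term to a functional of a genuine sample covariance, to which Lemma~\ref{lem:marchenko} applies directly.

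The most substantive divergence is your handling of the dependence between the score vector $g$ and the resolvent. You propose a leave-one-out/resolvent-perturbation argument to show $g^\top M^{-2}g\approx n^{-1}\tr(M^{-2})$. The paper does \emph{not} prove this: it splits $\|V_1\|_{F_p^*}^2$ into a diagonal part $V_{1,1}$ (which after manipulation becomes the trace of a single resolvent, not a squared one) and an off-diagonal part $V_{1,2}=n^{-2}\sum_{i\neq j}J_i^\top S_{n,p}J_j$, and then simply \emph{assumes} $V_{1,2}\to 0$ (Assumption~\ref{asmp:cross_variance}), validating it only numerically. Your leave-one-out route is more ambitious and, if it works in this generality, would obviate Assumption~\ref{asmp:cross_variance}; but the paper's treating it as an assumption signals that a fully rigorous argument here, without linear-in-feature structure, is not straightforward. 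Finally, the paper does not perform your proposed localization step: it simply assumes $\hat\theta\in B_p(0,r)$ with probability tending to one (Assumption~\ref{asmp:basic}(iii)) and bounds the Taylor residual uniformly over that ball via Talagrand's inequality plus a Dudley entropy integral.
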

The extended Stieltjes transform, $\lim_{a \to 0} h_{\gamma, \overline{\tau}}(a)$, is a general term that describes both the double descent and the regularized asymptotic risk curve, depending on $\overline{\tau}$.
When $\overline{\tau} = 0$, $\lim_{a \to 0} h_{\gamma, \overline{\tau}}(a)$ presents the double descent curve in $\gamma$ in Figure \ref{fig:risk_intro}, which exactly corresponds with the curve derived by \cite{hastie2019surprises} for the linear regression case.
When $\overline{\tau} > 0$, the value of $\lim_{a \to 0} h_{\gamma, \overline{\tau}}(a)$ has a different curve in Figure \ref{fig:risk_intro2}, which is analogous to the asymptotic risk curve of high-dimensional ridge regression, which was analyzed by \cite{dobriban2018high,hastie2019surprises}.

This result has several merits. 
(i) It holds for general nonlinear models, which are not necessarily linear-in-features, as long as our assumptions are satisfied. 
This is also valid for deep models, which were not considered in previous studies. 
(ii) It describes the two different risk curves in a unified manner. 
In either case, the result is consistent with the previous studies on linear regression.
(iii) The result is also valid with arbitrary spectral measures of the positive definite Fisher information matrix.
Because the previous studies only deal with a Dirac measure, our results can be regarded as a generalization of them.

\begin{figure}[htbp]
    \centering
    \begin{minipage}{0.47\hsize}
    \centering
    \captionsetup{width=0.9\hsize}
    \includegraphics[width=0.75\hsize]{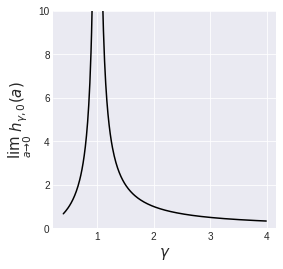}
    \caption{$\lim_{a \to 0} h_{\gamma, \overline{\tau}}(a)$ under $\overline{\tau} = 0$ with respect to ratio $\gamma$ such that $p/n \to \gamma$. \label{fig:risk_intro}}
    \end{minipage}
    \begin{minipage}{0.47\hsize}
    \centering
    \captionsetup{width=0.9\hsize}
    \includegraphics[width=0.75\hsize]{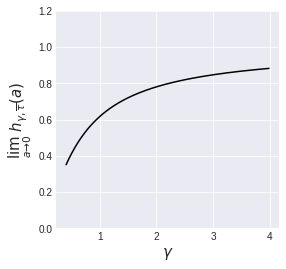}
    \caption{$\lim_{a \to 0} h_{\gamma, \overline{\tau}}(a)$ under $\overline{\tau} > 0$ with respect to ratio $\gamma$. We select $\overline{\tau } = \gamma$. \label{fig:risk_intro2}}
    \end{minipage}
\end{figure}

With regard to the second aspect of our contribution, we specify a statistical model and a learning scheme that comply with our asymptotic risk analysis.
To achieve this, we verify that models such as the (i) standard linear regression model, (ii) deep neural network with parallel structure, (iii) ensemble learning scheme, and (iv) residual networks satisfy our assumptions on higher-order derivatives.
To prove the validity of the (ii) parallel deep neural network and (iii) ensemble learning, we show that the assumptions are satisfied by the following regression model with the $M$ additive structure:
\begin{align*}
    y=\frac{1}{M}\sum_{m=1}^M g_{m}(x) + \varepsilon,
\end{align*}
where $x$ is the covariate, $y$ is the response, $\varepsilon$ is the noise, and $g_1,..., g_M$ denote the submodels.
This additive nonparametric model describes the parallel deep neural network in Figure \ref{fig:para_network}.
Further, this result implies that (iv) residual networks (ResNet) also follow our theoretical result.
These results demonstrate certain deep model compliance with the asymptotic risk analysis for the first time.

\begin{figure}[htbp]
    \centering
    \includegraphics[width=0.5\hsize]{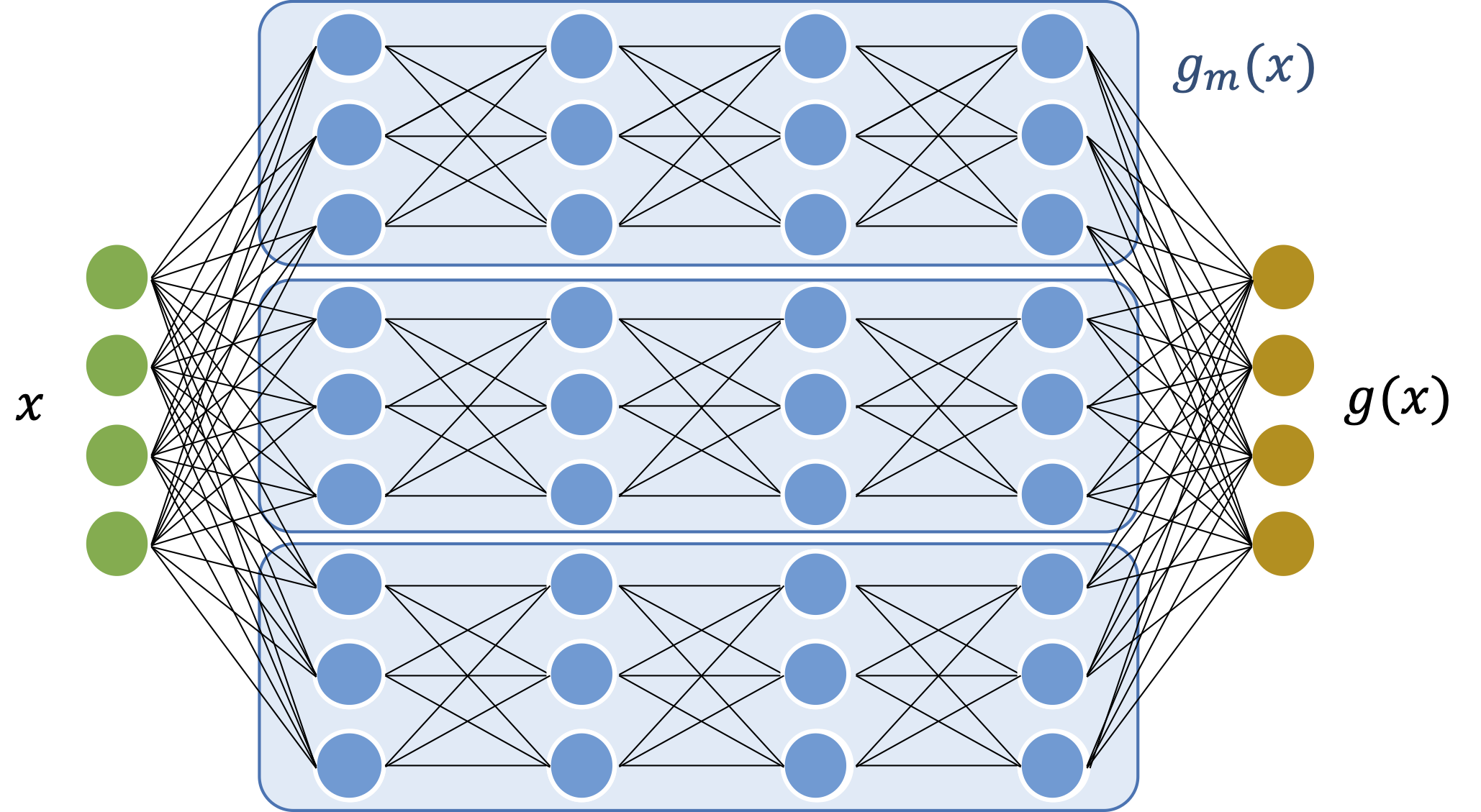}
    \caption{Parallel deep neural network with five layers. Nodes are divided into $M$ groups with $M=3$. This network represents a function $g(x) = M^{-1} \sum_{m=1}^M g_m(x)$.}
    \label{fig:para_network}
\end{figure}

\subsection{Technical Point}
The key technique that we employ to achieve our result is the combination of a decomposition of the empirical Fisher information matrix and the Marchenko–Pasteur law \cite{marvcenko1967distribution}.
We start with simple linear regression with a squared loss case in the previous study.
\cite{hastie2019surprises} derived that a variance term of an risk is given by the trace of a product of random matrices.
Rigorously, let $x_i, i = 1,...,n$ be a $p$-dimensional random variable with zero mean and an identity covariance matrix, and $X = (x_1,...,x_n)^\top$ be a $p \times n$ random design matrix.
In this case, the variance term is written as
\begin{align*}
    \frac{\sigma^2}{n}\mathrm{tr} \left(  \left( \frac{X^\top X}{n}\right)^{-1}  \right) = \frac{\sigma^2 p}{n} \int \frac{1}{t} dF_{X ^\top X/n}(t),
\end{align*}
where $\sigma^2$ is the noise variance, and $F_{X^\top X/p}$ is a spectral measure of random matrix $X^\top X/n$.
The product, $X^\top X$, is obtained from a second derivative of the loss function.
The integral in the right-hand side converges to the Stieltjes transform of the Marchenko–Pastur law as $p,n\to \infty$, whose form has been investigated extensively (for an overview, see \cite{bai2010spectral}).
Thus, we can obtain a tight evaluation of its asymptotic risk.

Our approach achieves a similar variance form by utilizing the basic principle of the maximum likelihood method on Fisher information matrices $F^*_p := \Ep[(\partial_\theta \log f_{\theta^*}(z))(\partial_\theta \log f_{\theta^*}(z))^\top] = -\Ep[\partial_\theta \partial_\theta^\top \log f_{\theta^*}(z)]$.
Hence, we can approximate $\partial_\theta \partial_\theta^\top (-n^{-1} \sum_{i=1}^n \log f_\theta(z_i))$ by $\hat{J}_{n,p} \hat J_{n,p}^\top / n^2$, where $\hat J_{n,p}$ is the $p \times n$ random matrix, whose $(j,i)$-element is $\partial_{\theta_j} f_{\theta^*}(z_i)$.
Using the approximation, we can achieve the following variance form:
\begin{align*}
  \|V_{n,p}(\Hat{\theta})\|_{F_p^*}^2 \lesssim \frac{1}{n}\tr\qty({F^*_p} \qty(\frac{\hat J_{n,p} \hat J_{n,p}^\top}{n} + \tau I_p)^{-1} ) + T_R =  \int \frac{1}{\lambda} \dd{\mu_{n,p}(\lambda)} + T_R,
\end{align*}
where $T_R$ is a residual term from approximations on models, and $\mu_{n,p}$ is a spectral measure of the regularized random matrix.
Owing to the form that is suitable for the random matrix theory, we obtain a limit of the right-hand side using the extended Marchenko–Pastur law by \cite{pajor2009limiting}.

Another important technical point is rendering the term $T_R$ asymptotically negligible at the overparameterized limit. 
This term is a measure of the degree of nonlinearities and complexities of models.
To make it vanish asymptotically, we represent $T_R$ by a decomposition error of the Fisher information matrix and residuals of the Taylor expansion of the likelihood function, then evaluate them using empirical processes and matrix concentration inequalities. 
As a result, we show that the second and third derivatives of a likelihood function represent the term, then derive sufficient conditions to make $T_R$ vanish in the limit.
%Another technical point is rendering error $T_R$ asymptotically negligible at the overparameterized limit.
%We derive sufficient conditions for this purpose and identify a model characteristic. 
%To this aim, we apply an empirical process and matrix analysis techniques for residual for the empirical Hesse matrix and Taylor expansion.

\subsection{Notation}

We equip a usual $\ell2$ norm $\norm{b}^2 = b^\top b$ and a max norm $\|b\|_\infty = \max_{j = 1,...,p} |b_j|$ for a vector $b = (b_1,...,b_p) \in \R^p$.
With a weight matrix $A \in \R^{p \times p}$, $\norm{b}_A^2 = b^\top A b$ denotes a weighted $\ell 2$ norm.
For a matrix $A$, $\norm{A}_{op} = \sup_{x: \norm{x}=1} \norm{Ax}$ is denoted as the operator norm with the underlying $\ell2$ norm.
When $A$ is a symmetric matrix, $\lambda_i(A)$ denotes an $i$-th largest eigenvalue of $A$.
Further, $\lambda_{\max}(A)$ and $\lambda_{\min}(A)$ denote the maximum and minimum eigenvalues, respectively. 
$\mathrm{rank}(A)$ denotes a rank of $A$, and $\lambda_{\min}^{+}(A)$ denotes a minimum positive eigenvalue of $A$.
For a function $f(x)$, $\partial_x f(\bar{x})$ denotes a partial derivative of $f$ in terms of $x$ at $\Bar{x}$.
$\|f\|_{L^\infty} = \sup_{x} |f(x)|$ is a sup-norm.
$I_n$ denotes an $n \times n$ identity matrix.
For $x \in \R^p$ and radii $r > 0$, $B_p(x,r) := \{x' \in \R^p \mid \|x-x'\|\leq r\}$ denotes a ball in $\R^p$ around $x$ with radius $r$.
We consider $a_{n,p} \succ b_{n,p}$ when $a_{n,p} / b_{n,p} \to \infty$.
We also consider $a_{n,p} \gtrsim b_{n,p}$ when there is some constant $c > 0$, such that $a_{n,p} / b_{n,p} \geq c$ for all $n, p$.
For an event $\omega$, $\mone(\omega)$ is an indicator function, which is $1$ if $\omega$ is true, and $0$ otherwise.

\section{Problem Setting and Basic Decomposition}

\subsection{Problem Setting: Penalized Likelihood Optimization}

We consider a likelihood maximization problem with $\ell 2$ penalization while increasing the dimension of parameters.
Let $\Theta \subset \R^p$ be a parameter space with its dimension $p$.
We consider a family of sufficiently smooth probability densities $\qty{f_\theta \mid \theta \in \Theta \subset \R^p}$, and assume samples $z_1, \dots, z_n \in \mZ \subset  \R^d$ are independently drawn from a probability density $f_{\theta^*}$ with a true parameter $\theta^* \in \Theta$.
Then, we rewrite the definition of the estimator \eqref{def:model1} into a simpler form
\begin{align}
    \Hat{\theta} = \argmin_{\theta \in \Theta} M_n(\theta) + \frac{\tau}{2}\|\theta\|^2, \label{def:estimator}
\end{align}
where $M_n(\theta) = -n^{-1} \sum_{i=1}^n \log f_\theta(z_i)$ is the mean of negative log-likelihood, and $\tau > 0$ is a coefficient for penalization.
Note that the density function $f_\theta$ can be any function of a parameter $\theta$.

Our goal is to clarify an asymptotic estimation and generalization risk of $\hat{\theta}$ at the limit $n,p \to \infty$ while preserving $p/n \to \gamma \in (0,\infty)$.
We study the estimation risk $\|\hat{\theta} - \theta^*\|_{F^*_p}^2$ with a (population) Fisher information matrix ${F^*_p} =  \mathbb{E}[ (\partial_\theta \log f_{\theta^*}(z)) (\partial_\theta \log f_{\theta^*}(z))^\top]$.
We also define an empirical Fisher information matrix as $\hat{F}_{n,p} = n^{-1} \sum_{i=1}^n (\partial_{\theta} \log f_{\theta^*}(z_i))(\partial_{\theta} \log f_{\theta^*}(z_i))^\top$.

\begin{remark}[Multiple true parameters]
We allow the model to have multiple true parameters $\theta^*$.
In such a case, we can pick any one of them. 
It should be noted that each $\theta^*$ is an isolated point by a condition on the Fisher information matrix, which will be presented in Section \ref{sec:assumption}.
\end{remark}

\subsection{Bias-Variance Decomposition}
We derive a simple representation of the discrepancy $\theta^* - \hat\theta$ by the bias-variance decomposition.
Because $\hat{\theta}$ is the minimizer in \eqref{def:estimator}, the Taylor's theorem provides the second order expansion as
\begin{align*}%\label{eq:foc1}
    0 &= \partial_\theta \qty(M_n(\hat{\theta}) + \frac{\tau}{2}\|{\hat{\theta}}\|^2) \\
    &=\partial_\theta M_n(\theta^*)+ \partial_\theta \partial_\theta^\top M_n(\theta^*) (\hat\theta - \theta^*) + R + \tau(\hat\theta - \theta^*) + \tau\theta^*,
\end{align*}
where $R = (R_1, \dots, R_p)^\top$ is a \textit{Taylor residual} term with $R_j = (\hat\theta-\theta^*)^\top \partial_\theta\partial_\theta^\top \partial_{\theta_j} M_n(\check{\theta}^j) (\hat\theta - \theta^*)$ with an existing parameter $\check\theta^j = (\check\theta_1,...,\check\theta_p)^\top$ such that $\check\theta_k^j $ lies in the interval between $\hat\theta_k$ and $\theta^*_k$.
A simple calculation provides the following decomposition:
\begin{align}
     \theta^* - \hat\theta  = \underbrace{ (\partial_\theta \partial_\theta^\top M_n(\theta^*) + \tau I_p)^{-1} (\partial_\theta M_n(\theta^*) + {R})}_{=: V_{n,p}(\hat{\theta})} +  \underbrace{(\partial_\theta \partial_\theta^\top M_n(\theta^*) + \tau I_p)^{-1} \tau \theta^*}_{=: B_{n,p}(\hat{\theta})}, \label{def:bias_var}
\end{align}
where $V_{n,p}(\hat{\theta})$ denotes its variance and $B_{n,p}(\hat{\theta})$ is of bias.
%The following theorem provides a bound for th variance term $V_{n,p}$ in \ref{eq:mle_diff}.

%{\bc [Should we focus on the diagonal decomposition?]}

\section{Assumption and Main Result}
We present rigorous assumptions and our main formal statement.
A relation between them will be described in the proof outline in Section \ref{sec:proof_outline}.

\subsection{Assumption} \label{sec:assumption}

%We give formal assumptions in this section.
%Further, we will validate that these assumptions are satisfied in specific models in Section \ref{sec:application} and \ref{sec:numerical}.

\subsubsection{Basic Assumption}
First, we give fundamental assumptions regarding the estimation problem.
The following assumptions have generally been used in a wide range of studies.
\begin{assumption}[Basic] \label{asmp:basic}
    The following conditions hold:
    \begin{enumerate}
  \setlength{\parskip}{0cm}
  \setlength{\itemsep}{0cm}
        \item[(i)] $f_\theta$ is thrice differentiable with respect to all $\theta \in B(\theta^*, \varepsilon)$ with some $\varepsilon > 0$. \label{asmp:dif}
        \item[(ii)] A random variable $Y^* = \partial_\theta \log f_{\theta^*}(Z)$ has a probability density function that is log-concave; the density has a form $\exp(-\ell(\cdot))$ for some convex function $\ell$. \label{asmp:lconcave}
        \item[(iii)] $\theta^* \in B_p(0, r)$ and $\hat{\theta} \in B_p(0, r)$ hold, with probability approaching $1$ as $n,p \to \infty$ with $r > 0$.
        \item[(iv)] There exist constants $\underline{\lambda}$ and $\overline{\lambda}$ such as $0 < \underline{\lambda} \leq \lambda_{\min}(F_p^*) \leq \lambda_{\max}(F_p^*) \leq \overline{\lambda} < \infty$. \label{asmp:regular_hesse}
        \item[(v)] $\rank(\hat F_{n,p}) = \min\qty{n, p}$ holds and there exists a constant $\lambda_+ > 0$ such as $\lambda_{\min}^{+}(\hat F_{n,p}) \geq \lambda_+ $ for sufficiently large $n$ and $p$. %, where $\lambda_{\min+}$ denotes the minimum positive eigenvalue.
    \end{enumerate}
\end{assumption}
Condition (i) requires the local differentiability of $f_\theta$.
This differentiability is necessary only in the neighborhood of $\theta^*$.
For example, even if the model is a deep neural network with non-smooth activation function, this condition is satisfied in many cases.
Condition (ii) is common, and a wide range of distributions satisfy it, including the exponential family.
Condition (iii) requires compactness of the parameter space, which is common in asymptotic risk analysis, e.g., in \cite{hastie2019surprises}.
Condition (iv) represents the positive definiteness of $F^*$.
This condition is necessary for numerous asymptotic risk analyses, such as \cite{dobriban2018high,hastie2019surprises,bartlett2020benign}.
Because the condition concerns the population Fisher information matrix, it is satisfied easily, unlike the empirical version of the Fisher matrix.
Condition (v) is a technical requirement and it holds associated with the property of its limit by Condition (iv).

\begin{remark}[Example of Positive Definite $F_p^*$]
In a linear regression model with Gaussian noise, the condition (iv) is satisfied if the covariance matrix of covariates is non-degenerate, since $F_p^*$ corresponds to it. 
Rigorously, a Fisher information matrix with a linear model
%\begin{align*}
    $y = x^\top \theta^* + \epsilon$
%\end{align*}
with independent noise $\epsilon \sim \mN(0, \sigma^2)$ is written as $F_p^* = E[x x^\top] / \sigma^2$.
%This condition is unavoidable in the previous studies.
More generally, for a regression model $y = g_\theta(x) + \epsilon$, the condition (iv) is satisfied if $E[\partial_{\theta} g_{\theta^*} (x) \partial_{\theta}^\top g_{\theta^*} (x)]$ is non-degenerate.
In the case of deep neural networks, it is necessary to utilize a non-first-order homogeneous activation function, such as a sigmoid or soft-ReLU function and some requirements on the true parameter $\theta^*$ (it does not have zero elements, and there are no adjacent subnetworks that represent the exact same function).
%The larger the neural network, the more candidates for $\theta^*$ appear, but 
It should be noted that $F_p^*$ only depends on $\theta^*$ and it is reasonable to find $\theta^*$ which satisfies the above requirements.
%The condition is not satisfied under the first-order homogeneity such as a ReLU activation.
\end{remark}

\subsubsection{Assumption for Fisher residual}
The following condition is for eigenvalues of a matrix $\partial_\theta \partial_\theta^\top f_{\theta^*}(z) $.
This condition is designed in order to bound the difference $\partial_\theta \partial^\top M_n(\theta^*) - \hat{F}_{n,p}$, which is referred to as a \textit{Fisher residual}.
We define the following derivative
\begin{align*}
    w(z) =  {\frac{1}{f_{\theta^*}(z)} \partial_\theta \partial_{\theta}^\top f_{\theta^*}(z)},
\end{align*}
which satisfies $n^{-1} \sum_{i=1}^n w(z_i) = \partial_\theta \partial_\theta^\top M_n(\theta^*)  - \hat{F}_{n,p}$.
Further, we define a supremum of its norm $\nu_p^2 :=  \norm{\Ep[w(z)w(z)^\top]}_{op}$ and $\kappa_p := \esssup_{z \in \mZ} \norm{w(z)}_{op}$, and consider the following assumption:
\begin{assumption}[Bounded deviation of Fisher residual] \label{asmp:fisher_residual}
    The following conditions hold:
\begin{enumerate}
  \setlength{\parskip}{0cm}
  \setlength{\itemsep}{0cm}
        \item[(v)] $\nu_p^2 = o(p / \log p)$ as $p \to \infty$.
        \item[(vi)] $\kappa_p = o(p / \log p)$ as $p \to \infty$.
    \end{enumerate}
\end{assumption}
This assumption is not excessively restrictive.
If the matrices $\Ep[w(z)w(z)^\top]$ and $w(z)$ are completely dense and all the elements of the matrices of constant order, for example, then $\nu_p^2$ and $\kappa_p$ values are $O(p)$.
Hence, even a small degree of sparsity on the matrix $w$ can satisfy this assumption.
Technically, this assumption is utilized to applying the matrix Bernstein inequality to bound the Fisher residual.

\subsubsection{Assumption for Taylor residual}
We introduce an assumption for handling the Taylor residual term $R$, which contains the third order derivatives of $\log f_\theta$ at some $\underline{\theta} \in B_p(r,\hat{\theta})$.
To this aim, we define a $p \times p$ matrix by using the third-order derivatives for $j = 1,...,p$ and $z$:
\begin{align*}
    U^j_p(\underline{\theta},z) := \partial_{\theta_j} { \partial_\theta \partial_\theta^\top \log f_{{\underline{\theta}}}(z)},
\end{align*}
and define its upper bound and local Lipschitz constant
\begin{align*}
    \beta_p^j := \sup_{z \in \R^d}\sup_{\underline{\theta}\in \Theta} \|U^j_p(\underline{\theta},z)\|_{op}, \mbox{~and~}\
    \alpha_p^j := \sup_{z \in \R^d}\sup_{\underline{\theta}, \underline{\theta}' \in \Theta :\|{\underline{\theta} - \underline{\theta}'}\| < \delta, \underline{\theta} \neq \underline{\theta}'}  \frac{ \|{U^j_p(\underline{\theta},z) - U^j_p(\underline{\theta}',z)}\|_{op}}{\|{\underline{\theta} - \underline{\theta}'}\|},
\end{align*}
with some $\delta > 0$.
We introduce the following assumption on the decay speed of these terms:
\begin{assumption}[Shrinking third-order derivative] \label{asmp:taylor_residual}
    %We define $\beta_p^j = \sup_{\underline{\theta}} \sup_z \|U^j_p(\underline{\theta},z)\|_{op}$, and set $\alpha_p^j$ be some sequence for $j = 1,...,p$.
    The following conditions hold:
\begin{enumerate}
  \setlength{\parskip}{0cm} % 段落間
  \setlength{\itemsep}{0cm} % 項目間
        \item[(vii)] $\sum_{j=1}^p (\beta_p^j)^2 = o(1)$ holds as $p \to \infty$. \label{asmp:weak_operator} %{\bc [Only lower order is necessary?]}
        %\item  $\sup_{\norm{\underline{\theta} - \underline{\theta}'} < \delta} \sup_z \norm{U^j_p(\underline{\theta},z) - U^j_p(\underline{\theta}',z)}_{op} \leq \alpha_p^j \norm{\underline{\theta} - \underline{\theta}'}$ with some $\delta > 0$. \label{asmp:continuous_hesse}
        \item[(viii)] $\sum_{j=1}^p (\alpha_p^j)^2 = o(1)$ holds as $p \to \infty$.
        %\item[(ix)] Assume either $\kappa_p := \norm{w(z_i)}_{op} = o(p / \log p)$ or $\norm{\mathbb{E}[w(z_i)^k]}_{op}/\norm{\mathbb{E}[w(z_i)^2]}_{op} \leq k! L^{k-2} / 2$ for all $k \geq 3$ for some $L_p = o(p / \log p)$. {\bc [Requires $\kappa_p = o_P(p/\log p)$? Can it simplified?]}
    \end{enumerate}
\end{assumption}
This assumption describes the nonlinearity of models by handling higher-order interactions between parameters as $p$ increases.
For the simple linear regression with Gaussian noise, $\beta_j^p = \alpha^p_j = 0$ holds for any $j$ and $p$.
For nonlinear models, several cases can satisfy the condition.
First is sparsity; for example, Assumption \ref{asmp:weak_operator} is satisfied only if $o(1/\sqrt{p})$ elements of $\partial_\theta \partial_\theta^\top  M_n(\theta^*)$ are affected by $\theta_j$.
Another case is the group-wise interaction of parameters; parameters are divided into a few groups and interact with each other only within the groups.
In this case, $U_p^j$ becomes a block-wise sparse matrix and satisfies the assumption easily.
In Section \ref{sec:application}, we provide an explicit model for the case.

\subsubsection{Assumption for Conditional Independence}
We introduce an assumption for a conditional independence property of the Jacobian term $\partial_{\theta} M_n(\theta^*)$.
We define an $\R^p$-valued random vector
\begin{align*}
    J_i := \partial_\theta  \log f_{\theta^*}(z_i) = (\partial_{\theta_1} \log f_{\theta^*}(z_i), \dots, \partial_{\theta_p} \log f_{\theta}(z_i))^\top,
\end{align*}
for $i=1,...,n$.
It should be noted that $\Ep[J_i]  = 0$ holds for any $i=1,...,n$, and $J_i$ and $J_j$ are independent with $i \neq j$.
We consider the following assumption:
\begin{assumption}[Nearly Conditional Independence]\label{asmp:cross_variance}
%    $k_i$ is an $i$-th column vector of $K_{n,p}$.
%    We define $S_{n,p} = K_{n, p} K_{n, p}^\top + (n/p) \tau (F_p^*)^{-1}$.
    Define a matrix $S_{n,p} = (\hat{F}_{n,p} + \tau I_p)^{-1} F_p^* (\hat{F}_{n,p} + \tau I_p)^{-1}$.
    Then, the following relation holds:
    \begin{align}
        \frac{1}{n^2} \sum_{i,j=1, i \neq j}^n J_i^\top S_{n,p} J_j \to 0, \ \text{ in probability},\label{eq:cross0}
%        \frac{1}{n} \sum_{i < j} k_i^\top S_{n,p}^{-2} k_j \to 0 \ \text{ in probability}.\label{eq:cross0}
    \end{align}
    where the limit is taken over $p, n \to \infty$ and $p/n \to \gamma \in (0, \infty)$.
\end{assumption}
This assumption is satisfied when $S_{n,p}$ behaves as if it is independent of $J_i$ and $J_j$, and also $S_{n,p}$ has small variation.
This is because, in this scenario, the variance of the term in \eqref{eq:cross0} is asymptotically dominated by $(1/n^2)\tr(E[S_{n,p}^2])$, which goes to $0$ under a mild assumption. 
In the following Lemma \ref{lem:independence}, we formally provide a sufficient condition for the Assumption \ref{asmp:cross_variance} to hold.
\begin{lemma} \label{lem:independence}
    Suppose there exists a non-random matrix $L_{n, p}$ with $\lim_{p/n\to\gamma\in(0, \infty)} \tr(L_{n,p}^2) / p^2 = 0$, and it satisfies
    %{\bc [$L_{n,p}$ should be bounded trace, but not elements are bounded?]}
    \begin{align}
        %\sup_{i < j} \sup_{u < v} \abs{ \mathbb{E}\qty[ k_i^\top S_{n,p}^{-2} k_j k_u^\top S_{n,p}^{-2} k_v  \middle| K_{n,p} K_{n,p}^\top] - k_i^\top L_{n,p}^{-2} k_j k_u^\top L_{n,p}^{-2} k_v } &= o_p(1/p).
        \mathbb{E}\qty[ J_i^\top S_{n,p} J_j J_u^\top S_{n,p} J_v  \middle| J_i, J_j, J_u, J_v] = J_i^\top L_{n,p} J_j J_u^\top L_{n,p} J_v  + U_{p, i, j}, \label{eq:lemma_covariance}
    \end{align}
    where $U_{p, i, j}$ satisfy $\lim_{p/n \to \gamma \in (0, \infty)} \sup_{i\neq j, u\neq v} \Ep[|U_{p, i, j}|] = 0$.
    Then, Assumption \ref{asmp:cross_variance} is satisfied.
\end{lemma}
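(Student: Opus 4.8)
The plan is to show that the quantity $\frac{1}{n^2}\sum_{i\neq j} J_i^\top S_{n,p} J_j$ converges to $0$ in probability by bounding its second moment, using the identity \eqref{eq:lemma_covariance} as the replacement of the (intractable) matrix $S_{n,p}$ by the non-random surrogate $L_{n,p}$. First I would write $T_{n,p} := \frac{1}{n^2}\sum_{i\neq j} J_i^\top S_{n,p} J_j$ and expand
\begin{align*}
\Ep[T_{n,p}^2] = \frac{1}{n^4}\sum_{i\neq j}\sum_{u\neq v} \Ep\!\left[ J_i^\top S_{n,p} J_j\, J_u^\top S_{n,p} J_v \right].
\end{align*}
Applying the tower property and plugging in \eqref{eq:lemma_covariance}, this splits into a ``main term'' $\frac{1}{n^4}\sum_{i\neq j}\sum_{u\neq v}\Ep[ J_i^\top L_{n,p} J_j\, J_u^\top L_{n,p} J_v ]$ and a ``residual term'' bounded in absolute value by $\frac{1}{n^4}\sum_{i\neq j}\sum_{u\neq v}\Ep[|U_{p,i,j}|] \le \frac{n^2(n-1)^2}{n^4}\sup_{i\neq j,u\neq v}\Ep[|U_{p,i,j}|] \to 0$ by the hypothesis on $U_{p,i,j}$.

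Next I would handle the main term using the independence and zero-mean structure of the $J_i$'s (recall $\Ep[J_i]=0$, $J_i \perp J_j$ for $i\neq j$, and $L_{n,p}$ is non-random). For indices with $i\neq j$ and $u\neq v$, the expectation $\Ep[ J_i^\top L_{n,p} J_j\, J_u^\top L_{n,p} J_v ]$ vanishes unless the multiset $\{i,j\}$ coincides with $\{u,v\}$ — otherwise at least one index appears exactly once and can be integrated out to give zero. The surviving configurations are $(u,v)=(i,j)$ and $(u,v)=(j,i)$, each contributing a term of the form $\Ep[(J_i^\top L_{n,p} J_j)^2]$ or $\Ep[J_i^\top L_{n,p} J_j\, J_j^\top L_{n,p} J_i]$. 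Using independence and $\Ep[J_j J_j^\top] = F_p^*$ (bounded in operator norm by $\overline\lambda$ via Assumption \ref{asmp:basic}(iv)), each such term is bounded by $C\,\tr(L_{n,p} F_p^* L_{n,p} F_p^*) \lesssim \overline\lambda^2\, \tr(L_{n,p}^2)$ up to handling the fourth moments of $J_i$ (which are controlled by the log-concavity in Assumption \ref{asmp:basic}(ii), giving $\Ep[(a^\top J_i)^2(b^\top J_i)^2] \lesssim (a^\top F_p^* a)(b^\top F_p^* b) + (a^\top F_p^* b)^2$). Summing over the $O(n^2)$ surviving pairs yields a bound of order $\frac{n^2}{n^4}\tr(L_{n,p}^2) = \frac{\tr(L_{n,p}^2)}{n^2}$, and since $p/n\to\gamma$ this is of order $\frac{\tr(L_{n,p}^2)}{p^2}\to 0$ by hypothesis.

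Combining the two parts gives $\Ep[T_{n,p}^2]\to 0$, and Chebyshev's inequality then yields $T_{n,p}\to 0$ in probability, which is exactly Assumption \ref{asmp:cross_variance}. I expect the main obstacle to be the bookkeeping in the main-term computation: carefully enumerating which index-coincidence patterns survive when $J_i$ has possibly correlated coordinates (so one cannot naively use a coordinatewise expansion), and producing a clean fourth-moment bound for $\Ep[(J_i^\top L_{n,p} J_i)^2]$-type terms — note one must also account for diagonal-in-the-pair contributions where, say, $i=u$ but $j\neq v$, which again vanish by integrating out the once-appearing index. A secondary technical point is ensuring the fourth-moment estimates on $J_i$ needed above follow from the stated assumptions (log-concavity plus the Fisher-information bound) rather than requiring an extra moment hypothesis; this should follow from standard concentration properties of log-concave vectors, but it is the one place where the argument touches the distributional assumptions rather than pure linear algebra.
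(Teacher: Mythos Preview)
Your proposal is correct and follows essentially the same route as the paper: expand the second moment of $T_{n,p}$, apply the tower property with \eqref{eq:lemma_covariance}, discard the $U$-residual, and bound the $O(n^2)$ surviving index-coincident terms by $\overline\lambda^{2}\tr(L_{n,p}^2)/n^2\to 0$. One simplification relative to your outline: since every surviving pair has $i\neq j$ with $J_i\perp J_j$, each expectation factors using only second moments, e.g.\ $\Ep[(J_i^\top L_{n,p} J_j)^2]=\tr(L_{n,p} F_p^* L_{n,p}^\top F_p^*)$, so the fourth-moment/log-concavity detour you flag as a potential obstacle is not actually needed.
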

%This lemma represents a relaxed conditional independence. 
%The asymptotic conditional independence holds if the conditional second moments of $J_i^\top S_{n,p} J_j$ with fixed $\hat{F}_{n,p}$.
We further provide numerical experiments to validate Assumption \ref{asmp:cross_variance} in Section \ref{sec:numerical}.

%{\bc [Please check it is correct.]}
%{\bc [If we consider an expected bound, is the assumption is updated?]}

%For illustration, consider the case where $1 \ll n \ll p$. 
%The matrix $K_{n,p} K_{n,p}^\top = \sum_{i=1}^n k_i k_i^\top$ contains less information on the products $k_i^\top A k_j$ and $k_u^\top A k_v$ for some non-random matrix $A$. So the assumption \ref{asmp:independence} is approximately assuming the conditional independence of $k_i, k_j, k_u$ and $k_v$ given $K_{n,p} K_{n,p}^\top$.

\subsection{Main Statement} \label{sec:main_statement}

We analyze the estimation risk of $\hat{\theta}$ using  the previous assumptions.
We firstly show that the variance term shrinks to zero as $n,p \to \infty$, and subsequently, we study the overall risk.
We provide an outline of the proof in Section \ref{sec:proof_outline}, and its full proof in the appendix.

Let  $\xi$ be a spectral measure of $\lim_{p \to \infty}(F_p^*)^{-1}$, which has a bounded support under Assumption \ref{asmp:basic}.
We define a weighted version of the Stieltjes transform as
\begin{align*}
    h^{(0)}_{\gamma,\tau}(a) := \int \frac{1}{(\tau / \gamma) \lambda - a} d\xi(\lambda).
\end{align*}
We also define $\lambda^* = \overline{\lambda}/\underline{\lambda}$ and a sequence $\{s_p\}_{p \in \N}$ of positive reals such as
\begin{align*}
    s_p := \frac{\nu_p^2 \log p}{p} \vee \frac{(\kappa_p \log p)^2}{p^2} \vee \sum_{j=1}^p (\alpha_p^j)^2 \vee \sum_{j=1}^p (\beta_p^j)^2.
\end{align*}
Note that under Assumption 2 and Assumption 3, $s_p \to 0$ as $p, n \to \infty$ and $p/n \to \gamma$.

Using the definitions above, we bound the asymptotic variance, derived with various approximation techniques for likelihood models and with a variant of the Marchenko-Pastur law \cite{pajor2009limiting}.
\begin{theorem}[Asymptotic Variance] \label{thm:variance}
    Suppose Assumptions \ref{asmp:basic}, \ref{asmp:fisher_residual}, \ref{asmp:taylor_residual} and \ref{asmp:cross_variance} hold.
    Let $\tau$ satisfy $\lim_{n,p \to \infty} \tau \to \overline{\tau}$ with $\overline{\tau} \geq 0$ and $\tau^2 \succ s_p$.
    Then, it almost surely satisfies
    \begin{align*}
        \limsup_{p,n \to \infty, p/n \to \gamma \in (0, \infty)}\|V_{n,p}(\hat{\theta})\|_{F_p^*}^2 \leq 2\left(1+\sqrt{\lambda^*}\right)^2 C_{\gamma, \overline{\tau}} \lim_{a \to 0} h_{\gamma,\overline{\tau}}(a),
    \end{align*}
    where $C_{\gamma, \overline{\tau}} = \lambda^*$ when $\gamma > 1$ and $\overline{\tau} = 0$, otherwise $C_{\gamma, \overline{\tau}} = 1$.
    $h_{\gamma,\overline{\tau}}$ is a function that satisfying
    \begin{align*}
        h_{\gamma,\overline{\tau}}(a) = h_{\gamma,\overline{\tau}}^{(0)} \left( a - \frac{1}{\gamma(1+h_{\gamma,\overline{\tau}}(a))}\right).
    \end{align*}
\end{theorem}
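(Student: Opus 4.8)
\textbf{Proof proposal for Theorem \ref{thm:variance}.}

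The plan is to bound $\|V_{n,p}(\hat\theta)\|_{F_p^*}^2$ by first replacing the Hessian $\partial_\theta\partial_\theta^\top M_n(\theta^*)$ by the empirical Fisher matrix $\hat F_{n,p}$ (controlling the error via the Fisher-residual quantities $\nu_p$, $\kappa_p$ of Assumption \ref{asmp:fisher_residual}), then replacing the Jacobian-plus-Taylor term $\partial_\theta M_n(\theta^*) + R$ by the pure score term $\partial_\theta M_n(\theta^*) = n^{-1}\sum_i J_i$ (controlling the Taylor residual via the $\beta_p^j,\alpha_p^j$ of Assumption \ref{asmp:taylor_residual}, together with the localization $\hat\theta,\theta^*\in B_p(0,r)$ from Assumption \ref{asmp:basic}(iii)). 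After these two reductions, modulo an additive remainder of order $\sqrt{s_p}$ (which is $o(\tau^2)$ by hypothesis, hence asymptotically negligible after scaling), the variance is bounded by a quadratic form
\begin{align*}
\|V_{n,p}(\hat\theta)\|_{F_p^*}^2 \;\lesssim\; \frac{1}{n^2}\sum_{i,j=1}^n J_i^\top S_{n,p} J_j + T_R,
\end{align*}
with $S_{n,p} = (\hat F_{n,p}+\tau I_p)^{-1}F_p^*(\hat F_{n,p}+\tau I_p)^{-1}$ and $T_R = O(\sqrt{s_p})$. First I would make each of these two reduction steps precise: the Hessian replacement follows from a matrix Bernstein (matrix concentration) bound applied to $n^{-1}\sum_i w(z_i)$, giving $\|\partial_\theta\partial_\theta^\top M_n(\theta^*) - \hat F_{n,p}\|_{op} = O_P((\nu_p\sqrt{\log p/n})\vee(\kappa_p\log p/n))$, and the Taylor replacement follows from $\|R\| \le (\sum_j (\beta_p^j)^2)^{1/2}\|\hat\theta-\theta^*\|^2$ on the good event, together with a first-order Lipschitz refinement using $\alpha_p^j$; then one propagates these through the resolvent $(\cdot+\tau I_p)^{-1}$, whose operator norm is $\le 1/\tau$, and uses that $\|\hat\theta - \theta^*\|$ is itself of the order we are bounding (a self-bounding/fixed-point argument, or a crude a priori bound combined with $\|\theta^*\|\le r$).

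Next I would discard the off-diagonal part: by Assumption \ref{asmp:cross_variance}, $n^{-2}\sum_{i\ne j} J_i^\top S_{n,p}J_j \to 0$ in probability, so it remains to handle the diagonal $n^{-2}\sum_{i=1}^n J_i^\top S_{n,p}J_i$. Since $\Ep[J_iJ_i^\top] = F_p^*$ and $J_i$ is independent of the summands defining $\hat F_{n,p}$ only approximately, I would use a leave-one-out argument: write $\hat F_{n,p} = \hat F_{n,p}^{(-i)} + n^{-1}J_iJ_i^\top$, apply a Sherman–Morrison expansion to isolate the $J_i$-dependence in $S_{n,p}$, and use that the correction is $O(1/n)$ in operator norm; the log-concavity hypothesis Assumption \ref{asmp:basic}(ii) on $Y^* = \partial_\theta\log f_{\theta^*}(Z)$ supplies the concentration (Hanson–Wright type / convex-concentration inequalities for $J_i^\top A J_i$ around $\tr(A F_p^*)$) needed to replace $J_i^\top S_{n,p}^{(-i)}J_i$ by $\tr(S_{n,p}^{(-i)}F_p^*)$ uniformly. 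This reduces the diagonal term to $n^{-1}\,\tr\!\big((\hat F_{n,p}+\tau I_p)^{-2}(F_p^*)^2\big)$ up to $o(1)$, and then, using Assumption \ref{asmp:basic}(iv) ($\underline\lambda I \preceq F_p^* \preceq \overline\lambda I$) to trade $(F_p^*)^2$ for a scalar multiple of $F_p^*$ at the cost of the factor $C_{\gamma,\overline\tau}$ and $(1+\sqrt{\lambda^*})^2$-type constants, to
\begin{align*}
\|V_{n,p}(\hat\theta)\|_{F_p^*}^2 \;\le\; 2(1+\sqrt{\lambda^*})^2 C_{\gamma,\overline\tau}\cdot \frac1n\tr\!\Big(F_p^*(\hat F_{n,p}+\tau I_p)^{-1}\Big) + o(1).
\end{align*}

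Finally I would identify the limit of $\frac1n\tr(F_p^*(\hat F_{n,p}+\tau I_p)^{-1})$. Writing $\hat F_{n,p} = n^{-1}\hat J_{n,p}\hat J_{n,p}^\top$ with $\hat J_{n,p}$ the $p\times n$ matrix of columns $J_i$ (each with covariance $F_p^*$), this trace is governed by the companion Stieltjes transform of the (generalized) sample covariance matrix $n^{-1}\hat J_{n,p}\hat J_{n,p}^\top$ with population covariance $F_p^*$ and aspect ratio $\gamma$; the anisotropic Marchenko–Pastur law of \cite{pajor2009limiting} (valid under the log-concavity / moment conditions we have assumed) gives that $\frac1n\tr(F_p^*(\hat F_{n,p}+\tau I_p)^{-1}) \to \lim_{a\to 0} h_{\gamma,\overline\tau}(a)$, where $h_{\gamma,\overline\tau}$ solves the stated self-consistent (fixed-point) equation $h_{\gamma,\overline\tau}(a) = h^{(0)}_{\gamma,\overline\tau}\!\big(a - \tfrac{1}{\gamma(1+h_{\gamma,\overline\tau}(a))}\big)$ with $h^{(0)}_{\gamma,\tau}(a) = \int \frac{d\xi(\lambda)}{(\tau/\gamma)\lambda - a}$ and $\xi$ the spectral measure of $\lim_p (F_p^*)^{-1}$; the regularizer $\tau$ enters exactly as the spectral argument $a\to 0$ shift. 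The evaluation at $a\to 0$ is legitimate because for $\overline\tau>0$ the resolvent is bounded uniformly, while for $\overline\tau = 0$ and $\gamma\ne 1$ the limit is still finite (this is where the case split in $C_{\gamma,\overline\tau}$ originates: for $\gamma>1,\ \overline\tau=0$ one must bound the smallest positive eigenvalue of $\hat F_{n,p}$ using Assumption \ref{asmp:basic}(v), $\lambda_{\min}^+(\hat F_{n,p})\ge\lambda_+$, and the extra factor $\lambda^*$ absorbs the resulting slack).

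I expect the main obstacle to be the leave-one-out / Sherman–Morrison step: making the dependence between $J_i$ and $S_{n,p}$ rigorously negligible while simultaneously controlling the Taylor residual $R$ — which depends on $\hat\theta$ and hence on all the data — requires a careful uniform-in-$i$ argument, and is the place where the empirical-process machinery and the matrix concentration bounds must be combined (this is precisely the ``rendering $T_R$ asymptotically negligible'' difficulty flagged in the technical overview). The fixed-point existence/uniqueness and continuity of $h_{\gamma,\overline\tau}$ as $a\to 0$ is standard once the Marchenko–Pastur limit is in place.
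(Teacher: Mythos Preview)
Your overall architecture---replace the Hessian by $\hat F_{n,p}$ via matrix Bernstein on the Fisher residual, discard the Taylor remainder $R$ using Assumption \ref{asmp:taylor_residual}, split the score quadratic form into diagonal and off-diagonal pieces, kill the off-diagonal by Assumption \ref{asmp:cross_variance}, and identify the diagonal limit via the Pajor--Pastur extension of Marchenko--Pastur---matches the paper's proof exactly.

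The one place you diverge is the diagonal term, and there you take an unnecessarily hard route. You propose a leave-one-out / Sherman--Morrison decoupling to replace $J_i^\top S_{n,p} J_i$ by $\tr(S_{n,p}F_p^*)$, and you (correctly) flag this as the main obstacle. The paper sidesteps it entirely by the algebraic identity $\hat F_{n,p} = n^{-1}\sum_i J_iJ_i^\top$: the diagonal sum is \emph{exactly}
\[
\frac{1}{n^2}\sum_{i=1}^n J_i^\top S_{n,p} J_i \;=\; \frac{1}{n}\tr\!\big(S_{n,p}\,\hat F_{n,p}\big) \;=\; \frac{1}{n}\tr\!\big((\hat F_{n,p}+\tau I_p)^{-1}F_p^*(\hat F_{n,p}+\tau I_p)^{-1}\hat F_{n,p}\big),
\]
with no approximation and no decoupling needed. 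After normalizing via $\tilde J_{n,p} = p^{-1/2}(F_p^*)^{-1/2}\hat J_{n,p}$ this becomes $p^{-1}\tr\big((\tilde J_{n,p}\tilde J_{n,p}^\top + (n/p)\tau (F_p^*)^{-1})^{-2}\tilde J_{n,p}\tilde J_{n,p}^\top\big)$, which is bounded by $\int\lambda^{-1}\mone\{\lambda\ge c\}\,d\mu_{n,p}(\lambda)$ for the spectral measure $\mu_{n,p}$ of $\tilde J_{n,p}\tilde J_{n,p}^\top + (n/p)\tau(F_p^*)^{-1}$, and Lemma \ref{lem:marchenko} applies directly to a \emph{bounded} integrand. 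So the obstacle you worried about simply does not arise. Incidentally, the extra factor $C_{\gamma,\overline\tau}=\lambda^*$ in the case $\gamma>1,\ \overline\tau=0$ comes not from trading $(F_p^*)^2$ for $F_p^*$ as you guessed, but from inserting $\overline\lambda\,(F_p^*)^{-1}$ in order to subtract off the $(p-n)$ zero eigenvalues of $\tilde J_{n,p}\tilde J_{n,p}^\top$ before integrating.

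A smaller point: for the Taylor residual there is no need for a self-bounding or fixed-point argument. Since $\hat\theta,\theta^*\in B_p(0,r)$ by Assumption \ref{asmp:basic}(iii), the paper takes the uniform bound $R_j \le \sup_{\bar\theta,\underline\theta\in\Theta}(\bar\theta-\theta^*)^\top U_p^j(\underline\theta,\cdot)(\bar\theta-\theta^*)$ over the compact $\Theta$ and controls this via Talagrand plus a Dudley entropy integral, obtaining $\|R\|^2 \lesssim r^6\sum_j(\alpha_p^j)^2 \vee r^4\sum_j(\beta_p^j)^2$ with no dependence on $\|\hat\theta-\theta^*\|$; then $\|V_2\|_{F_p^*}^2 \le \overline\lambda\,\tau^{-2}\|R\|^2 = o(1)$ by $\tau^2\succ s_p$.
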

%{\bc [Describe the decay speed of $\tau$ to $0$.]}
The extended Stieltjes transform, $\lim_{a \to 0} h_{\gamma,\overline{\tau}}(a)$, is a general term that can describe both the double descent and the regularized asymptotic risk curve.
Its details will be discussed in detail in Section \ref{sec:interpretation} and \ref{sec:interpretation_reguralized}.

%We illustrate numerical values of the asymptotic variance in Figure \ref{fig:variance} and study its relation with the parameter-sample ratio $\gamma$.
%The variance diverges once when the ratio becomes $\gamma$ and decreases as $\gamma$ increases further. 
%This is consistent with the double-descent phenomenon given by linear regression and shallow networks.
%For the linear regression with the ridgeless estimator case, the theorem corresponds to the case of $\overline{\tau} = 0$, so the limit becomes $\gamma/(1-\gamma)$ when $\gamma < 1$ and $1/(\gamma-1)$ when $\gamma > 1$, which is consistent with the result from \cite{hastie2019surprises}.
%{\rc Additional explanation on the derivation of $\gamma/(1-\gamma)$.}

%\begin{remark}[Comparison with Linear Regression Case]
%If the model is a linear regression and $F_p^*$ is a covariance function, we can obtain an explicit form of $h^{(0)}_{\gamma,\tau}(z)$ (see Proposition \ref{***} in the appendix).
%\end{remark}

Combined with the evaluation of the bias, we obtain the asymptotic bound for the overall estimation risk.
We recall that $r > 0$ is a radius of the parameter space defined in Assumption \ref{asmp:basic}.
\begin{theorem}[Asymptotic Estimation Risk] \label{thm:main}
    Suppose Assumption \ref{asmp:basic}, \ref{asmp:fisher_residual}, \ref{asmp:taylor_residual}, and \ref{asmp:cross_variance} hold.
    Let $\tau$ satisfy $\lim_{n,p \to \infty} \tau \to \overline{\tau}$ with $\overline{\tau} \geq 0$ and $\tau^2 \succ s_p$.
    Then, we almost surely have
    \begin{align*}
       \limsup_{p,n \to \infty, p/n \to \gamma \in (0, \infty)} \|\hat{\theta} - \theta^*\|_{F^*_p}^2
        &\leq 2\left(1+\sqrt{\lambda^*}\right)^2  \qty(C_{\gamma, \overline{\tau}} \lim_{a\to 0} h_{\gamma,\overline{\tau}}(a) + \overline{\lambda} r^2).
    \end{align*}
\end{theorem}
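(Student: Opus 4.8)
The strategy is to combine the bias-variance decomposition \eqref{def:bias_var} with the already-established variance bound (Theorem \ref{thm:variance}), so that only the bias term $B_{n,p}(\hat\theta) = (\partial_\theta \partial_\theta^\top M_n(\theta^*) + \tau I_p)^{-1}\tau\theta^*$ needs fresh work. Since $\|\hat\theta - \theta^*\|_{F_p^*} \le \|V_{n,p}(\hat\theta)\|_{F_p^*} + \|B_{n,p}(\hat\theta)\|_{F_p^*}$, squaring and using $(u+v)^2 \le 2u^2 + 2v^2$ gives $\|\hat\theta - \theta^*\|_{F_p^*}^2 \le 2\|V_{n,p}(\hat\theta)\|_{F_p^*}^2 + 2\|B_{n,p}(\hat\theta)\|_{F_p^*}^2$. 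Taking $\limsup$ and plugging in Theorem \ref{thm:variance} for the first term already produces the $2(1+\sqrt{\lambda^*})^2 C_{\gamma,\overline\tau}\lim_{a\to 0}h_{\gamma,\overline\tau}(a)$ summand, so it remains to show $\limsup \|B_{n,p}(\hat\theta)\|_{F_p^*}^2 \le (1+\sqrt{\lambda^*})^2 \overline\lambda r^2$, and in fact the cleaner target $\limsup\|B_{n,p}(\hat\theta)\|_{F_p^*}^2 \le \overline\lambda r^2$ suffices once one checks the constant bookkeeping matches the stated bound (the $(1+\sqrt{\lambda^*})^2$ factor is harmless since it is $\ge 1$).

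\textbf{Bounding the bias.} First I would write $\|B_{n,p}(\hat\theta)\|_{F_p^*}^2 = \tau^2 (\theta^*)^\top (\partial_\theta\partial_\theta^\top M_n(\theta^*) + \tau I_p)^{-1} F_p^* (\partial_\theta\partial_\theta^\top M_n(\theta^*) + \tau I_p)^{-1} \theta^*$ and bound it by $\tau^2 \|\theta^*\|^2 \cdot \|(\partial_\theta\partial_\theta^\top M_n(\theta^*) + \tau I_p)^{-1} F_p^* (\partial_\theta\partial_\theta^\top M_n(\theta^*) + \tau I_p)^{-1}\|_{op} \le \tau^2 r^2 \overline\lambda \,\|(\partial_\theta\partial_\theta^\top M_n(\theta^*) + \tau I_p)^{-1}\|_{op}^2$, using $\|\theta^*\|\le r$ (Assumption \ref{asmp:basic}(iii)) and $\lambda_{\max}(F_p^*)\le\overline\lambda$ (Assumption \ref{asmp:basic}(iv)). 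The matrix $\partial_\theta\partial_\theta^\top M_n(\theta^*)$ is approximately the empirical Fisher information $\hat F_{n,p}$ up to the Fisher residual $n^{-1}\sum_i w(z_i)$, which Assumption \ref{asmp:fisher_residual} forces to be $o(p/\log p)$ in operator norm; more to the point, $\hat F_{n,p}$ is positive semidefinite, so modulo showing $\partial_\theta\partial_\theta^\top M_n(\theta^*) + \tau I_p \succeq c\tau I_p$ for some $c$ bounded away from $0$ with high probability (handled by the same Fisher-residual control plus $\tau^2 \succ s_p$), one gets $\|(\partial_\theta\partial_\theta^\top M_n(\theta^*) + \tau I_p)^{-1}\|_{op} \le (c\tau)^{-1}$, hence $\|B_{n,p}(\hat\theta)\|_{F_p^*}^2 \lesssim r^2\overline\lambda$. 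Taking $\tau \searrow \overline\tau$ and reconciling constants yields the claimed $\overline\lambda r^2$ contribution.

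\textbf{Assembling.} I would then combine: $\limsup\|\hat\theta-\theta^*\|_{F_p^*}^2 \le 2\limsup\|V_{n,p}\|_{F_p^*}^2 + 2\limsup\|B_{n,p}\|_{F_p^*}^2 \le 2(1+\sqrt{\lambda^*})^2 C_{\gamma,\overline\tau}\lim_{a\to0}h_{\gamma,\overline\tau}(a) + 2(1+\sqrt{\lambda^*})^2\overline\lambda r^2 = 2(1+\sqrt{\lambda^*})^2\bigl(C_{\gamma,\overline\tau}\lim_{a\to0}h_{\gamma,\overline\tau}(a) + \overline\lambda r^2\bigr)$, which is exactly the statement. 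The ``almost surely'' qualifier is inherited from Theorem \ref{thm:variance} (the variance term is the only a.s. statement; the bias bound is deterministic given the high-probability events, which can be upgraded along a subsequence or via Borel--Cantelli under the stated rate conditions).

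\textbf{Main obstacle.} The delicate point is the lower bound on $\lambda_{\min}(\partial_\theta\partial_\theta^\top M_n(\theta^*) + \tau I_p)$ when $\overline\tau = 0$ and $\gamma < 1$ (underparameterized case), where one cannot rely on the $\tau I_p$ ridge term alone and must instead use $\lambda_{\min}^+(\hat F_{n,p}) \ge \lambda_+$ from Assumption \ref{asmp:basic}(v) together with the Fisher-residual concentration; this is precisely the source of the case split in $C_{\gamma,\overline\tau}$ and must be tracked carefully so the constant $\lambda^*$ appears only where it should. The rest is routine operator-norm bookkeeping and an appeal to Theorem \ref{thm:variance}, whose proof already does the heavy random-matrix lifting.
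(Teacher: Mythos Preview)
Your overall strategy—reduce to Theorem \ref{thm:variance} plus a separate bias bound—is the same as the paper's, but you diverge from the paper at the point where you handle the bias, and this causes two issues.

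\textbf{Different decomposition for the bias.} The paper does \emph{not} bound $B_{n,p}(\hat\theta)=(\partial_\theta\partial_\theta^\top M_n(\theta^*)+\tau I_p)^{-1}\tau\theta^*$ directly. Instead it uses the finer factorization \eqref{eq:mle_diff}, writing $\theta^*-\hat\theta=V_0(V_1+V_2+B_0)$ with $B_0=(\hat F_{n,p}+\tau I_p)^{-1}\tau\theta^*$, and bounds $\|B_0\|_{F_p^*}^2\le \lambda_{\max}(F_p^*)\,\tau^{-2}\|\tau\theta^*\|^2\le\overline\lambda r^2$. This bound is \emph{deterministic}: since $\hat F_{n,p}\succeq 0$ always, $(\hat F_{n,p}+\tau I_p)^{-1}$ has operator norm $\le 1/\tau$ with no concentration argument needed, and the $\tau$'s cancel exactly. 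Your route instead requires re-running the matrix Bernstein/Fisher-residual argument to get $\partial_\theta\partial_\theta^\top M_n(\theta^*)+\tau I_p\succeq(\tau/2)I_p$ with high probability; this works but is redundant (the $V_0$ factor already absorbs that work via Lemma \ref{lem:bound_v0}) and yields $\|B_{n,p}\|_{F_p^*}^2\le 4\overline\lambda r^2$ rather than $\overline\lambda r^2$.

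\textbf{Constant bookkeeping in the assembly.} Your ``Assembling'' step contains an arithmetic slip: from $\|\hat\theta-\theta^*\|_{F_p^*}^2\le 2\|V_{n,p}\|_{F_p^*}^2+2\|B_{n,p}\|_{F_p^*}^2$ and Theorem \ref{thm:variance}, the variance contribution is $2\cdot 2(1+\sqrt{\lambda^*})^2 C_{\gamma,\overline\tau}\lim_{a\to 0}h_{\gamma,\overline\tau}(a)$, i.e.\ with leading constant $4$, not the $2$ you write. The paper avoids this extra factor by pulling out the common $V_0$ \emph{once}: $\|\hat\theta-\theta^*\|_{F_p^*}^2\le\|F_p^{*1/2}V_0F_p^{*-1/2}\|_{op}^2\,\|V_1+V_2+B_0\|_{F_p^*}^2$, so the $(1+\sqrt{\lambda^*})^2$ factor is applied a single time to the whole sum rather than separately to $V_{n,p}$ and $B_{n,p}$. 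If you want the stated constant, you should mimic this and bound $\|V_1+V_2+B_0\|_{F_p^*}^2$ directly.

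\textbf{The ``main obstacle'' is a red herring here.} For the bias, the $\tau$'s cancel (numerator $\tau^2\|\theta^*\|^2$ against denominator $\tau^2$ from the inverse), so no lower bound on $\lambda_{\min}^+(\hat F_{n,p})$ is needed and there is no case split on $\gamma$ or $\overline\tau$. The $C_{\gamma,\overline\tau}$ factor and the $\lambda_+$ assumption enter only in the variance analysis inside Theorem \ref{thm:variance}, which you are treating as a black box.
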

This result is obtained in a straightforward manner from Theorem \ref{thm:variance}, because it is a simple sum of the variance and the bias.
Although the bias does not vanish asymptotically, it is commonly found in asymptotic risk analyses (e.g., \cite{belkin2019two,hastie2019surprises}).
Importantly, the derived bound remains finite, even in situations where the number of parameters $p$ is considerably  larger than $n$.

%{\bc [If Proposition \ref{prop:bound_h} is established, we can provide one more corollary to provide a simple bound.]}

Based on the results for the estimation risk, we can evaluate other types of asymptotic risks.
Here, we consider a nonparametric regression problem and determine the asymptotic risk of prediction.
Let $\mathcal{TN}_b(0, \sigma^2)$ be a truncated normal distribution with mean $0$ and variance $\sigma^2 > 0$ lying within the interval $(-b, b)$.
\begin{corollary}[Asymptotic Prediction Risk]\label{cor:reg}
    Consider the nonlinear regression model $y_i = g_{\theta^*}(x_i) + \epsilon_i$, where $\epsilon_i \sim \mathcal{TN}_b(0, \sigma^2)$ is the independent noise, and $g_\theta(x)$ is twice differentiable in $\theta$ in the neighbourhood of $\theta^*$.
    Suppose Assumptions \ref{asmp:basic}, \ref{asmp:fisher_residual}, \ref{asmp:taylor_residual}, and \ref{asmp:cross_variance} hold.
    Let $\tau$ satisfy $\lim_{n,p \to \infty} \tau \to \overline{\tau}$ with $\overline{\tau} \geq 0$ and $\tau^2 \succ s_p$.
    Additionally, we assume that $\sup_x \norm{\partial_{\theta} \partial_{\theta}^\top g_{\theta^*} (x)}_{op} = O_\Pp(1)$ holds.
    Then, there exists a constant $C_\sigma$ depending on only $\sigma^2$, and we obtain the following almost surely:
    \begin{align*}
        \limsup_{p,n \to \infty, p/n \to \gamma \in (0, \infty)} \norm{g_{\hat\theta} - g_{\theta^*}}_{L^2}^2
        &\leq C_\sigma \qty( \lim_{a\to 0} h_{\gamma,\overline{\tau}}(a) + r^2 + r^4).
    \end{align*}
\end{corollary}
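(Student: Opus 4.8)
The plan is to reduce the $L^2$ prediction risk to the estimation risk already controlled by Theorem \ref{thm:main}, paying for the reduction with the second-order remainder of $g_\theta$ in $\theta$ and with a comparison between the population Fisher information metric $\|\cdot\|_{F_p^*}$ and the ordinary $\ell^2$-geometry on the model output. First I would Taylor-expand $g_{\hat\theta}(x)$ around $\theta^*$: for each $x$ there is $\check\theta_x$ on the segment between $\hat\theta$ and $\theta^*$ with
\begin{align*}
    g_{\hat\theta}(x) - g_{\theta^*}(x) = \partial_\theta^\top g_{\theta^*}(x)(\hat\theta - \theta^*) + \tfrac12 (\hat\theta-\theta^*)^\top \partial_\theta\partial_\theta^\top g_{\check\theta_x}(x)(\hat\theta - \theta^*).
\end{align*}
Squaring, integrating in $x$, and using $(a+b)^2 \le 2a^2 + 2b^2$, the first-order term contributes $2(\hat\theta-\theta^*)^\top \Ep[\partial_\theta g_{\theta^*}(x)\partial_\theta^\top g_{\theta^*}(x)](\hat\theta-\theta^*)$, while the second-order term is bounded by $\tfrac12\sup_x\|\partial_\theta\partial_\theta^\top g_{\theta^*}(x)\|_{op}^2 \|\hat\theta-\theta^*\|^4$ — here I would note that the hypothesis $\sup_x\|\partial_\theta\partial_\theta^\top g_{\theta^*}(x)\|_{op} = O_\Pp(1)$ is stated at $\theta^*$, but by continuity of the Hessian in the neighbourhood of $\theta^*$ and the fact that $\hat\theta \in B_p(0,r)$ with $\hat\theta\to\theta^*$-type control (Assumption \ref{asmp:basic}(iii)), the same bound holds along $\check\theta_x$ eventually; one may also simply take $\sup$ over the ball.

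Next I would relate $\Ep[\partial_\theta g_{\theta^*}(x)\partial_\theta^\top g_{\theta^*}(x)]$ to $F_p^*$. For the regression model $y = g_{\theta^*}(x) + \epsilon$ with $\epsilon \sim \mathcal{TN}_b(0,\sigma^2)$, the score is $\partial_\theta \log f_{\theta^*}(z) = -\ell_\sigma'(\epsilon)\,\partial_\theta g_{\theta^*}(x)$ where $\ell_\sigma$ is the negative log-density of the truncated normal, so
\begin{align*}
    F_p^* = \Ep\!\left[\ell_\sigma'(\epsilon)^2\right] \Ep\!\left[\partial_\theta g_{\theta^*}(x)\partial_\theta^\top g_{\theta^*}(x)\right] =: c_\sigma \,\Ep\!\left[\partial_\theta g_{\theta^*}(x)\partial_\theta^\top g_{\theta^*}(x)\right],
\end{align*}
using independence of $x$ and $\epsilon$; here $c_\sigma = \Ep[\ell_\sigma'(\epsilon)^2]$ is a strictly positive finite constant depending only on $\sigma^2$ (and $b$, which is fixed) — finiteness uses that $\epsilon$ is bounded and log-concave, and positivity follows since the truncated normal density is non-degenerate. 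Hence the first-order term equals $2c_\sigma^{-1}\|\hat\theta-\theta^*\|_{F_p^*}^2$, which Theorem \ref{thm:main} bounds (in $\limsup$) by $2c_\sigma^{-1}\cdot 2(1+\sqrt{\lambda^*})^2(C_{\gamma,\overline\tau}\lim_{a\to0}h_{\gamma,\overline\tau}(a) + \overline\lambda r^2)$. For the second-order term, $\|\hat\theta - \theta^*\|^4 \le \underline\lambda^{-2}\|\hat\theta-\theta^*\|_{F_p^*}^4$ by Assumption \ref{asmp:basic}(iv), and since $\|\hat\theta-\theta^*\|_{F_p^*}^2$ is asymptotically bounded by the same finite quantity, its square is too; alternatively one crudely bounds $\|\hat\theta-\theta^*\|\le \|\hat\theta\|+\|\theta^*\|\le 2r$ by Assumption \ref{asmp:basic}(iii), giving the $r^4$-shaped term directly. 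Collecting the constants into a single $C_\sigma$ depending only on $\sigma^2$ (absorbing $\lambda^*$, $\overline\lambda$, $\underline\lambda$, $(1+\sqrt{\lambda^*})^2$, and $C_{\gamma,\overline\tau}\le\lambda^*$) yields the claimed bound $C_\sigma(\lim_{a\to0}h_{\gamma,\overline\tau}(a) + r^2 + r^4)$; the almost-sure qualifier is inherited directly from Theorem \ref{thm:main}.

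The main obstacle I anticipate is the handling of the second-order Taylor term uniformly in $x$: one must ensure the Hessian bound $O_\Pp(1)$ assumed at $\theta^*$ propagates to the intermediate points $\check\theta_x$, which vary with $x$, and that this is compatible with the a.s. (not just in-probability) conclusion. I would resolve this by replacing $O_\Pp(1)$ with the supremum of $\|\partial_\theta\partial_\theta^\top g_\theta(x)\|_{op}$ over $x$ and over $\theta$ in the fixed ball $B_p(0,r)$, which is finite by Assumption \ref{asmp:basic}(i) (thrice differentiability, hence continuous Hessian) and compactness — this is essentially free and sidesteps the event-probability bookkeeping. A secondary, more pedestrian point is verifying that $c_\sigma = \Ep[\ell_\sigma'(\epsilon)^2]$ does not depend on $p$ and is bounded away from $0$ and $\infty$ uniformly; since the noise law is fixed this is immediate, but it is worth stating explicitly so that the constant $C_\sigma$ is genuinely $p$-free.
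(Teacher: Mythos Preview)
Your proposal is correct and follows essentially the same route as the paper: Taylor-expand $g_{\hat\theta}(x)-g_{\theta^*}(x)$ to second order, identify the first-order quadratic form with $c_\sigma^{-1}\|\hat\theta-\theta^*\|_{F_p^*}^2$ via the factorization of the score (the paper writes this as $F_p^*/\Ep[\epsilon^2]$), bound the second-order remainder by $\|\hat\theta-\theta^*\|^4\lesssim r^4$ times the Hessian operator norm, and then invoke Theorem~\ref{thm:main}. You are in fact more careful than the paper on two points it glosses over: the intermediate-point issue for the Hessian (the paper silently replaces $\check\theta$ by $\theta^*$ in its final display) and the fact that the absorbed constants $\lambda^*,\overline\lambda,C_{\gamma,\overline\tau}$ are not literally functions of $\sigma^2$ alone.
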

We find that this bound for the prediction risk is almost similar to that for the estimation risk.

The term $\lim_{a \to 0} h_{\gamma,\overline{\tau}}(a)$ is a general expression that can describe the two asymptotic risks with the double descent and the regularized estimation.
In the following subsections, we will present specific calculations and interpretations for the two cases.

\subsubsection{Double Descent Case ($\overline{\tau} = 0$)} \label{sec:interpretation}
When the limit regularization coefficient $\Bar{\tau}$ is zero, the term describes the double descent curve as the result shown in Figure \ref{fig:risk_intro}.

%${F^*_p}^{-1/2} \hat J_{n, p} \hat J_{n, p}^\top {F^*_p}^{-1/2} / p$

%$\hat{J}_{n,p} := (J_1,...,J_n) =  (\partial_\theta \log f_{\theta^*}(z_1), \dots, \partial_\theta \log f_{\theta^*}(z_n))$

\begin{proposition}[Double Descent Variance] \label{prop:double_descent}
    Suppose that Theorem \ref{thm:variance} holds with the setting $\overline{\tau} = 0$.
    Then, for $\gamma \neq 1$, we obtain
    \begin{align*}
        \lim_{a \to 0} h_{\gamma,0}(a) =  \frac{\gamma\mone\{\gamma < 1\}}{1 - \gamma} + \frac{ \mone\{\gamma > 1\}}{\gamma - 1} .
    \end{align*}
\end{proposition}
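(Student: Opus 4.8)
The plan is to use that the self‑consistent equation defining $h_{\gamma,0}$ collapses to an elementary algebraic one once $\overline{\tau}=0$. First I would note that, with $\overline{\tau}=0$, the weighted Stieltjes transform degenerates completely: since $\xi$ is the limiting spectral distribution of $(F_p^*)^{-1}$ and is therefore a probability measure (with support in $[1/\overline{\lambda},1/\underline{\lambda}]$ by Assumption \ref{asmp:basic}(iv)), the factor $(\overline{\tau}/\gamma)\lambda$ vanishes and
\[
  h^{(0)}_{\gamma,0}(a)=\int\frac{1}{-a}\,d\xi(\lambda)=-\frac{1}{a},
\]
so the dependence on the Fisher information spectrum drops out entirely in this regime. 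Substituting $h^{(0)}_{\gamma,0}(x)=-1/x$ into the relation $h_{\gamma,0}(a)=h^{(0)}_{\gamma,0}\!\big(a-\tfrac{1}{\gamma(1+h_{\gamma,0}(a))}\big)$ from Theorem \ref{thm:variance} and clearing denominators, writing $h:=h_{\gamma,0}(a)$, yields for each fixed $a\neq0$ the quadratic
\[
  \gamma a\,h^{2}+(\gamma a+\gamma-1)\,h+\gamma=0 .
\]
Its discriminant equals $(\gamma-1)^{2}+O(a)$, which is strictly positive near $a=0$ whenever $\gamma\neq1$, so the two roots are real and depend smoothly on $a$ there.

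For $\gamma<1$ I would identify $h_{\gamma,0}$ with the bounded root. In the proof of Theorem \ref{thm:variance}, $h_{\gamma,\tau}$ (for $\tau>0$) is obtained as a limit of nonnegative multiples of resolvent traces, hence is finite and continuous in $a$ up to $a=0$; letting $\tau\downarrow0$ preserves boundedness, which rules out the root that diverges like $\tfrac{1-\gamma}{\gamma a}$ as $a\to0$. Passing to the limit $a\to0$ in the quadratic, the bounded root satisfies $(\gamma-1)h+\gamma=0$, and therefore
\[
  \lim_{a\to0}h_{\gamma,0}(a)=\frac{\gamma}{1-\gamma}\qquad(\gamma<1),
\]
which is positive, as required of an asymptotic variance. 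A short monotonicity‑in‑$a$ argument confirms that $h_{\gamma,0}(a)$ really converges to this value, not merely that this value is a root.

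The case $\gamma>1$ is the crux and the main obstacle, because the two roots of the limiting quadratic are $\tfrac{\gamma}{1-\gamma}$ (finite but negative) and the divergent one, neither of which equals the asserted nonnegative value $\tfrac{1}{\gamma-1}$; so the naive branch analysis is not enough. The point is that in the overparameterized regime $\hat F_{n,p}$ has rank $n<p$, so the spectral object feeding the extended Marchenko–Pastur law in the proof of Theorem \ref{thm:variance} carries an atom of mass $1-1/\gamma$ at the origin — which is why that theorem picks up the extra constant $C_{\gamma,\overline{\tau}}=\lambda^{*}$ for $\gamma>1,\ \overline{\tau}=0$, arising from sandwiching $F_p^{*}$ between $\underline{\lambda}I_p$ and $\overline{\lambda}I_p$ when transferring the trace from the singular $p\times p$ matrix $\hat J_{n,p}\hat J_{n,p}^{\top}/n$ to the $n\times n$ Gram matrix $\hat J_{n,p}^{\top}\hat J_{n,p}/n$. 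On that $n\times n$ side the effective aspect ratio is $1/\gamma<1$, the zero eigenvalues contribute a term that cancels in the $\overline{\tau}\to0$ limit (exactly as $m(-\lambda)-\lambda m'(-\lambda)$ stays finite for the Marchenko–Pastur Stieltjes transform $m$ as $\lambda\downarrow0$), and what survives is the $\gamma<1$ computation above run at ratio $1/\gamma$, giving
\[
  \lim_{a\to0}h_{\gamma,0}(a)=\frac{1/\gamma}{1-1/\gamma}=\frac{1}{\gamma-1}\qquad(\gamma>1).
\]
Combining the two cases produces the stated formula. I expect the delicate step to be making this last reduction rigorous — pinning down the correct, nonnegative value for $\gamma>1$ by tracking $h_{\gamma,\tau}(a)$ as $\tau\downarrow0$ through the rank‑deficient regime, rather than reading off the spurious negative root of the limiting polynomial — while the $\gamma<1$ case is essentially a one‑line computation once the branch is fixed. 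As a consistency check, both limits reproduce the ridgeless least‑squares variance curve of \cite{hastie2019surprises}.
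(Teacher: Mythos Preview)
Your treatment of $\gamma<1$ coincides with the paper's: collapse $h^{(0)}_{\gamma,0}(a)$ to $-1/a$, derive the quadratic $a h^{2}+(a+1-1/\gamma)h+1=0$, and read off the bounded root $\gamma/(1-\gamma)$ at $a=0$.

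For $\gamma>1$ the approaches diverge sharply. The paper does not touch the rank deficiency, the atom at zero, or the $n\times n$ Gram matrix at all: it simply writes the explicit root
\[
h(a)=\frac{-(a-1/\gamma+1)+\sqrt{(a-1/\gamma+1)^2-4a}}{2a}
\]
and applies l'H\^opital's rule once, asserting the limit equals $1/(\gamma-1)$. Your observation that the two roots of the limiting quadratic are $\gamma/(1-\gamma)<0$ and a divergent one---neither equal to $1/(\gamma-1)$---is correct, and in fact if one carries the paper's l'H\^opital display through carefully the value that drops out is $-\gamma/(\gamma-1)$, not $1/(\gamma-1)$; moreover the literal Stieltjes transform $\int(\lambda-a)^{-1}\,d\mu(\lambda)$ blows up as $a\to0^{-}$ because $\mu$ carries mass $1-1/\gamma$ at the origin. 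So your instinct to reroute through the nonzero spectrum (which is exactly what the indicator $\mathbf{1}\{\lambda\ge\lambda_{+}\}$ in the proof of Lemma~\ref{lem:bound_v1} accomplishes) and to pass to the companion $n\times n$ problem at aspect ratio $1/\gamma$ is the sounder path, even though you leave that reduction as a heuristic. In short: the paper's route for $\gamma>1$ is a one-line l'H\^opital that glosses over the branch issue you identified, while your route confronts the genuine obstruction but does not fully close it either.
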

This result shows that any nonlinear or deep model can achieve double descent, as long as it satisfies our setting and assumptions.
The explicit form corresponds exactly to the double descent risk curve for linear regression in Theorem 2 and 3 in \cite{hastie2019surprises}. 
Interestingly, even for complex nonlinear deep models, the double descent curve is the same as for linear models.

Substituting Proposition \ref{prop:double_descent} into Theorem \ref{thm:variance} yields the following result, whose proof has been excluded here.
We define $C_* := 2\left(1+\sqrt{\lambda^*}\right)^2 C_{\gamma, \overline{\tau}}$.
\begin{corollary}[Double Descent Estimation Risk]
    If the setting in Proposition \ref{prop:double_descent} holds, then
    \begin{align*}
       \limsup_{p,n \to \infty, p/n \to \gamma \in (0,\infty)} \|\hat{\theta} - \theta^*\|_{F^*_p}^2
        &\leq C_*\left\{ \left( \frac{\gamma\mone\{\gamma < 1\}}{1 - \gamma} + \frac{  \mone\{\gamma > 1\}}{\gamma - 1} \right) + \overline{\lambda} r^2\right\}.
    \end{align*}
\end{corollary}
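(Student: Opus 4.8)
The plan is to derive the statement directly from Theorem~\ref{thm:main} by specializing to the limit $\overline{\tau}=0$ and then substituting the closed form for the extended Stieltjes transform furnished by Proposition~\ref{prop:double_descent}; beyond that, only bookkeeping of constants is required. First I would observe that the phrase ``the setting in Proposition~\ref{prop:double_descent} holds'' unpacks, via the statement of that proposition, to exactly the hypotheses of Theorem~\ref{thm:variance} (hence of Theorem~\ref{thm:main}) with $\overline{\tau}=0$: namely Assumptions~\ref{asmp:basic}--\ref{asmp:cross_variance} together with a choice of $\tau$ satisfying $\tau\to 0$ and $\tau^2\succ s_p$. Such a $\tau$ exists because Assumptions~\ref{asmp:fisher_residual} and \ref{asmp:taylor_residual} guarantee $s_p\to 0$ (for instance $\tau=s_p^{1/3}$ works), so the regime is non-vacuous and Theorem~\ref{thm:main} is directly applicable.

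Next I would invoke Theorem~\ref{thm:main} with $\overline{\tau}=0$, which gives, almost surely,
\[
\limsup_{p,n\to\infty,\,p/n\to\gamma\in(0,\infty)}\|\hat{\theta}-\theta^*\|_{F^*_p}^2 \;\le\; 2\bigl(1+\sqrt{\lambda^*}\bigr)^2\Bigl(C_{\gamma,0}\lim_{a\to 0}h_{\gamma,0}(a) + \overline{\lambda}\,r^2\Bigr),
\]
where, by the definition of $C_{\gamma,\overline{\tau}}$ in Theorem~\ref{thm:variance}, $C_{\gamma,0}=\lambda^*$ when $\gamma>1$ and $C_{\gamma,0}=1$ when $\gamma<1$ (the value $\gamma=1$ is excluded, which is why the statement restricts to $\gamma\neq1$; at the interpolation threshold $\lim_{a\to0}h_{\gamma,0}(a)$ diverges, the double-descent peak). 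Substituting Proposition~\ref{prop:double_descent}, namely $\lim_{a\to0}h_{\gamma,0}(a)=\frac{\gamma\mone\{\gamma<1\}}{1-\gamma}+\frac{\mone\{\gamma>1\}}{\gamma-1}$, and writing $C_*:=2(1+\sqrt{\lambda^*})^2 C_{\gamma,0}$ as in the statement, the first summand becomes exactly $C_*\bigl(\frac{\gamma\mone\{\gamma<1\}}{1-\gamma}+\frac{\mone\{\gamma>1\}}{\gamma-1}\bigr)$. For the bias term, Assumption~\ref{asmp:basic}(iv) gives $\lambda^*=\overline{\lambda}/\underline{\lambda}\ge1$, hence $C_{\gamma,0}\ge1$ and $C_*\ge2(1+\sqrt{\lambda^*})^2$, so $2(1+\sqrt{\lambda^*})^2\overline{\lambda}r^2\le C_*\overline{\lambda}r^2$; combining the two pieces yields the claimed bound.

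The main obstacle is essentially nonexistent at the level of this corollary, which is a substitution followed by constant-chasing; all the analytic content lives upstream. The genuinely hard ingredients are the Marchenko--Pastur-type control of the variance term (Theorem~\ref{thm:variance}), the bias estimate, and --- most delicately --- Proposition~\ref{prop:double_descent} itself, which reduces the fixed-point relation $h_{\gamma,0}(a)=h^{(0)}_{\gamma,0}\!\bigl(a-\tfrac{1}{\gamma(1+h_{\gamma,0}(a))}\bigr)$, with $h^{(0)}_{\gamma,0}(a)=-1/a$ since $\overline{\tau}=0$ and $\xi$ is a probability measure, to a scalar identity whose admissible root is the stated piecewise-rational function. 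The branch selection there, distinguishing $\gamma<1$ from $\gamma>1$ and reflecting the rank deficiency of $\hat J_{n,p}\hat J_{n,p}^\top/n$ when $p>n$, is the one point in the whole chain where care is genuinely needed; but it is assumed given here.
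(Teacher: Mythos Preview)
Your proposal is correct and follows exactly the approach indicated by the paper, which states that the corollary is obtained by substituting Proposition~\ref{prop:double_descent} into the main theorem and omits the details. Your handling of the constant $C_*$---in particular the observation that $C_{\gamma,0}\ge 1$ (since $\lambda^*\ge 1$) so that the bias contribution $2(1+\sqrt{\lambda^*})^2\overline{\lambda}r^2$ can be absorbed into $C_*\overline{\lambda}r^2$---is the one small piece of bookkeeping the paper leaves implicit, and you have supplied it correctly.
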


\subsubsection{Asymptotically Regularized Case  ($\overline{\tau} > 0$)} \label{sec:interpretation_reguralized}

When we set $\overline{\tau} > 0$, we obtain the regularized asymptotic risk curve.
In the following, we consider the case where the limit $\lim_{p \to \infty} F_p^*$ has a pointwise spectrum such that $\xi$ is a Dirac measure.
This setting, which includes linear regression under isotropic covariates, is addressed \cite{hastie2019surprises}.
\begin{example}[Point Spectrum] \label{example:point_spectrum}
    Consider the setting in Theorem \ref{thm:variance} and set $\overline{\tau} > 0$.
    Suppose that $\xi$ is a Dirac measure at $\check{\lambda} > 0$.
    Then, we obtain
    \begin{align*}
        \lim_{a \to 0} h_{\gamma,\overline{\tau}}(a) &= \frac{-(\check{\lambda}\overline{\tau} + 1 - \gamma) + \sqrt{(\check{\lambda}\overline{\tau} + 1 - \gamma)^2 + 4 \check{\lambda}\overline{\tau} \gamma}}{2 \check{\lambda}\overline{\tau}}.
    \end{align*}
%    Further, by setting $\overline{\tau} = ***$, we obtain
%    \begin{align*}
%        ***
%    \end{align*}
\end{example}
This risk bound is similar to the asymptotic risk for existing linear ridge regressions \cite{dobriban2018high,hastie2019surprises}.
Figure \ref{fig:reguralized} illustrates the asymptotic risk curve obtained from the result and also from the previous study \cite{hastie2019surprises}.
It should be noted that our results do not require the linearity of the model.

Our theorem can also deal with different spectral measures of $\xi$.
We pick a uniform measure and a semi-circular measure as examples and illustrate its extended Stieltjes transform in Figure \ref{fig:reguralized2}.
We can observe that even with different measurements, the rick curves are roughly similar.
We present the detailed derivation process in Section \ref{sec:stieltjes}.

\begin{figure}[htbp]
    \centering
    \begin{minipage}{0.47\hsize}
    \captionsetup{width=0.9\hsize}
    \includegraphics[width=\hsize]{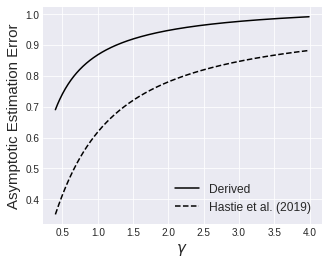}
    \caption{Derived Asymptotic Estimation Risk Bound (solid). The dashed curve depicts the error presented in \cite{hastie2019surprises} for comparison. \cite{hastie2019surprises} for comparison. \label{fig:reguralized}}
    \end{minipage}
    \begin{minipage}{0.47\hsize}
    \captionsetup{width=0.9\hsize}
    \includegraphics[width=\hsize]{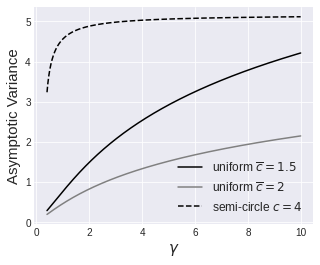}
    \caption{Values of $\lim_{a \to 0} h_{\gamma,\overline{\tau}}(a)$ when $\xi$ is a uniform measure on $[1, \overline{c}]$ or a semi-circular measure with center $c$.\label{fig:reguralized2}}
    \end{minipage}
\end{figure}

\subsection{Proof Outline of Theorem \ref{thm:variance} and \ref{thm:main}} \label{sec:proof_outline}

%We describe outline the proof of Theorem \ref{thm:variance} and \ref{thm:main}.
%The full proof will be given in the supplementary material.

%The first step is to decompose the difference $\hat{\theta} - \theta^*$ into several components. 
%We then state the corresponding assumptions and evaluate the components for each in turn.

\subsubsection{Basic Decomposition}

We begin with a basic decomposition of the estimation risk and divide the risk into two parts: the principal term, which will be analyzed using the random matrix theory, and other negligible terms.
We provide the basic decomposition of $\theta^* - \hat{\theta}$ by continuing the representation in \eqref{def:bias_var} as
\begin{align}
     \theta^* - \hat\theta  &= \underbrace{ (\partial_\theta \partial_\theta^\top M_n(\theta^*) + \tau I_p)^{-1} (\hat{F}_{n,p} + \tau I_p)}_{=:V_0} (\hat{F}_{n,p} + \tau I_p)^{-1}(\partial_\theta M_n(\theta^*) + R + \tau \theta^*)\notag \\
     &= V_0 \Biggl\{\underbrace{ (\hat{F}_{n,p} + \tau I_p)^{-1}\partial_\theta M_n(\theta^*)}_{=: V_1} +  \underbrace{(\hat{F}_{n,p} + \tau I_p)^{-1}R}_{=:V_2} + \underbrace{(\hat{F}_{n,p} + \tau I_p)^{-1}\tau \theta^*}_{=: B_0} \Biggr\}. \label{eq:mle_diff}
\end{align}
The term $V_0$ represents the difference between the Hesse matrix and the empirical Fisher information matrix and is related to the Fisher residual.
The principal term $V_1$ plays a central role in the asymptotic risk.
Its asymptotic behavior is investigated using the random matrix theory.
The scaled Taylor residual $V_2$ is an important representation of the nonlinearity of the likelihood model $f_\theta$.
The scaled bias $B_0$ corresponds to a bias term associated with the regularization.
In the following parts, each of the terms will be evaluated separately.

\subsubsection{Fisher Residual Related Term $V_0$}
We demonstrate $V_0$ converges to $1$ in terms of the operator norm with Assumption \ref{asmp:fisher_residual}.
We utilize the representation of {Fisher residual} $ \hat W_{n, p} := \partial_\theta \partial_\theta^\top M_n(\theta^*) - \hat F_{n,p} = n^{-1} \sum_{i=1}^m w(z_i) $.
Considering the well-known property of derivatives of loglikelihood, $\Ep[\partial_\theta \partial_\theta^\top M_n(\theta^*)] = \Ep[\hat F_{n,p}]$ holds (Lemma 5.3 in \cite{lehmann2006theory}); hence, $\Ep[\Hat{W}_{n,p}] =  0$ holds.

We bound $\|V_0\|_{op}$, which is necessary to bound $\|\theta^* - \hat{\theta}\|_{F_p^*}^2$.
A simple calculation yields
\begin{align*}
    \|V_0\|_{op} &\leq 1 + \norm{{F^*_p}^{1/2} \qty( \hat{F}_{n,p} - \hat W_{n, p} + \tau I_p)^{-1} \hat W_{n, p} {F^*_p}^{-1/2}}_{op}.
    %&\lesssim 1 + \|{(\partial_\theta \partial_\theta^\top M_n(\theta^*) + \tau I_p)^{-1}}\|_{op}\|{\hat W_{n, p}}\|_{op}.
\end{align*}
%The second inequality holds up to constants from eigenvalues of ${F^*_p}^{1/2}$ and ${F^*_p}^{-1/2}$, which is known to be positive and finite.
To bound the norm, we have to bound $\|{\hat W_{n, p}}\|_{op}$ and show that $\hat{F}_{n,p} - \hat W_{n, p} + \tau I_p$ is a positive definite matrix with high probability.
To achieve this, we apply the matrix Bernstein inequality \cite{tropp2015introduction} and obtain
\begin{align*}
    \Pp\qty(\|{\hat W_{n, p}}\|_{op} \geq t) \leq 2p \exp\qty( \frac{-n t^2 / 2}{\nu_p^2 + \kappa_p t / 3} ),
\end{align*}
for any $t > 0$.
For a choice $t = \tau/2$ and from the assumption $\tau^2 \succ s_p$, we reach the aim and verity that $\|V_0\|_{op} =O(1)$ holds.

\subsubsection{Principal Term $V_1$}
We decompose the principal term $V_1$ and demonstrate that this term mainly determines the asymptotic variance.
We had defined a $p \times n$ random matrix $\hat{J}_{n,p} := (J_1,...,J_n) =  (\partial_\theta \log f_{\theta^*}(z_1), \dots, \partial_\theta \log f_{\theta^*}(z_n))$.
Using the identity $\|b\|_{F_p^*}^2 = \trace(F_p^{*1/2} b b^\top F_p^{*1/2})$ for a vector $b \in \R^p$, we decompose $\|V_1\|_{F_p^*}^2$ as
\begin{align*}
    \|V_1\|_{F_p^*}^2 &=  \tr\qty({F^*_p}^{1/2} ( \hat F_{n, p} + \tau I_p)^{-1}\partial_\theta M_n(\theta^*) \partial_\theta M_n(\theta^*)^\top ( \hat F_{n, p} + \tau I_p)^{-1} {F^*_p}^{1/2} ) \\
     &=\underbrace{\tr\qty( ( \hat F_{n, p} + \tau I_p)^{-1} {F^*_p} ( \hat F_{n, p} + \tau I_p)^{-1} n^{-2} \hat J_{n, p} \hat J_{n, p}^\top  ) }_{=:V_{1,1}}\\
     & \quad + \underbrace{\tr\qty( ( \hat F_{n, p} + \tau I_p)^{-1} {F^*_p} ( \hat F_{n, p} + \tau I_p)^{-1}(\partial_\theta M_n(\theta^*) \partial_\theta M_n(\theta^*)^\top - n^{-2} \hat J_{n, p} \hat J_{n, p}^\top)  )}_{=: V_{1,2}}.
\end{align*}
The term $V_{1,1}$ represents a diagonal effect of a quadratic term $\partial_\theta M_n(\theta^*) \partial_\theta M_n(\theta^*)^\top$, because the term is a weighted version of $\trace(\hat{J}_{n,p} \hat{J}_{n,p}^\top) =  \sum_{k=1}^p \sum_{i=1}^n (\partial_{\theta_k} \log f_{\theta^*}( z_i))^2$.
The second term $V_{1,2}$ is interpreted as an off-diagonal effect, because it is a weighted version of  $\partial_\theta M_n(\theta^*) \partial_\theta M_n(\theta^*)^\top -  n^{-2} \hat{J}_{n,p} \hat{J}_{n,p}^\top = n^{-2} \sum_{k=1}^p \sum_{i,j=1, i\neq j}^n \partial_{\theta_k} \log f_{\theta^*}( z_i)\partial_{\theta_k} \log f_{\theta^*}( z_j)$.
We find that $V_{1,2}$ is asymptotically negligible by Assumption \ref{asmp:cross_variance}.

The term $V_{1,1}$ is a key term of the asymptotic variance.
We define a normalized random matrix $\Tilde{J}_{n,p} := (1/\sqrt{p}) (F^*_p)^{-1/2} \hat{J}_{n,p}$ whose column includes an identity covariance matrix.
$V_{1,1}$ is rewritten as follows such that its limit will be achieved, which is suitable for the random matrix theory:
\begin{align*}
    V_{1,1} \leq  \trace\left((\Tilde{J}_{n,p} \Tilde{J}_{n,p}^\top + (n/p) \tau {F^*_p}^{-1})^{-1}\right) \to \lim_{z \to 0} h_{\gamma,\overline{\tau}}(z), ~ (n,p \to \infty, p/n \to \gamma),
\end{align*}
where $h_{\gamma,\overline{\tau}}(z)$ is defined in the statement of Theorem \ref{thm:variance}. 
The limit follows from the extension of the Marchenko-Pastur law in Lemma \ref{lem:marchenko}, which is presented below.
\begin{lemma}[Theorem 3.3 in \cite{pajor2009limiting}]\label{lem:marchenko}
    Let $T_{n,p} = (t_1, \dots, t_n)$ be a $p \times n$ random matrix whose columns are independent and identically distributed. 
    Assume that $t_i$ has a log-concave density function, and $\mathbb{E}[t_1] = 0$ and $\Cov(t_1) = I_p$ hold.
    With $p \times p$ non-random matrix $E_p$, let us consider a $p \times p$ random matrix $Q_{n,p} = T_{n,p}T_{n,p}^\top + E_p$, and its normalized spectral measure $\mu_{n,p}$.
    Then, there exists a non-random measure $\mu$ such that for any interval $\Delta \subset \R$,
    we obtain the following in probability
    \begin{equation*}
        \lim_{n,p \to \infty, n/p\to c \in [0, \infty)} \mu_{n, p} (\Delta) = \mu(\Delta).
    \end{equation*}
    Further, its Stieltjes transform $f(z) := \int_{\R} \frac{\dd{\mu(\lambda)}}{\lambda - z}$ is uniquely determined as
    \begin{align*}
        f(z) &= f^{(0)}\qty(z - \frac{c}{1 + f(z)}), \mbox{~and~}f^{(0)}(z) = \int_{\R} \frac{\dd{\nu(\lambda)}}{\lambda - z},
    \end{align*}
    where $\nu$ is a limit of the normalized spectral measure of $E_p$.
\end{lemma}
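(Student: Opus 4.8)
The plan is to prove the lemma by the standard Stieltjes-transform method, following the line of \cite{pajor2009limiting}: reduce the weak convergence of $\mu_{n,p}$ to pointwise convergence of resolvent traces, show that those traces concentrate, and identify the limit through a self-consistent equation obtained from a leave-one-column expansion. Throughout, write $m_{n,p}(z) := \int (\lambda - z)^{-1}\,\dd{\mu_{n,p}(\lambda)} = \tfrac1p \tr\big((Q_{n,p} - zI_p)^{-1}\big)$ for $z$ in the upper half plane $\mathbb{C}^+ := \{z : \mathrm{Im}(z) > 0\}$, and set $f^{(0)}_{n,p}(w) := \tfrac1p \tr\big((E_p - wI_p)^{-1}\big)$. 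Since each $m_{n,p}(\cdot)$ is the Stieltjes transform of a probability measure, it suffices to show that $m_{n,p}(z) \to f(z)$ in probability for every fixed $z \in \mathbb{C}^+$, where $f$ solves the stated fixed-point equation and maps $\mathbb{C}^+$ into $\mathbb{C}^+$; the convergence $\mu_{n,p}(\Delta) \to \mu(\Delta)$ then follows from the usual correspondence between Stieltjes transforms and measures.

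First I would establish concentration, $m_{n,p}(z) - \E[m_{n,p}(z)] \to 0$. Viewing $m_{n,p}(z)$ as a function of the independent columns $t_1,\dots,t_n$ and using the rank-one interlacing bound $\big|\tr(Q_{n,p} - zI_p)^{-1} - \tr(Q^{(i)}_{n,p} - zI_p)^{-1}\big| \le 1/\mathrm{Im}(z)$, where $Q^{(i)}_{n,p} := Q_{n,p} - t_i t_i^\top$, an Efron--Stein (or martingale-difference) argument over the $n$ columns yields $\mathrm{Var}(m_{n,p}(z)) = O\big(n/(p^2 \mathrm{Im}(z)^2)\big) = O(1/p)$. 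This step does not require log-concavity.

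Next I would derive a self-consistent equation for $\E[m_{n,p}(z)]$. Writing $I_p = (Q_{n,p}-zI_p)^{-1}(Q_{n,p}-zI_p)$, expanding $Q_{n,p} = \sum_i t_i t_i^\top + E_p$, and applying Sherman--Morrison to each rank-one summand,
\begin{align*}
 t_i^\top (Q_{n,p}-zI_p)^{-1} = \frac{t_i^\top (Q^{(i)}_{n,p}-zI_p)^{-1}}{1 + t_i^\top (Q^{(i)}_{n,p}-zI_p)^{-1} t_i},
\end{align*}
one is led, after taking traces, to
\begin{align*}
 m_{n,p}(z) = f^{(0)}_{n,p}\!\Big(z - \tfrac{n/p}{1+m_{n,p}(z)}\Big) + o(1).
\end{align*}
The crucial input here---and the step I expect to be the main obstacle---is the concentration of the quadratic forms $t_i^\top (Q^{(i)}_{n,p}-zI_p)^{-1} t_i - \tfrac1p\tr\big((Q^{(i)}_{n,p}-zI_p)^{-1}\big) \to 0$, uniformly in $i$: this is exactly where $\Cov(t_i) = I_p$ and the log-concavity of the columns enter, via the isotropic concentration theory for log-concave measures (Borell's lemma, Paouris' deviation inequality, and a Hanson--Wright-type bound for $\psi_1$ vectors) rather than the classical sub-Gaussian estimate. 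One additionally replaces $\tfrac1p\tr\big((Q^{(i)}_{n,p}-zI_p)^{-1}\big)$ by $m_{n,p}(z)$ via another rank-one bound, and bounds the scalars $\big(1 + t_i^\top (Q^{(i)}_{n,p}-zI_p)^{-1} t_i\big)^{-1}$ away from $0$ using $\mathrm{Im}(z)>0$ and $Q_{n,p} \succeq 0$; the delicate bookkeeping is tracking that all these error contributions are genuinely $o(1)$ for fixed $z$.

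Finally, combining the previous steps with $n/p \to c$ and the assumed convergence of the spectral measure of $E_p$ to $\nu$ (hence $f^{(0)}_{n,p}(w) \to f^{(0)}(w)$ locally uniformly off $\mathrm{supp}(\nu)$), every subsequential limit $f$ of $m_{n,p}(z)$ satisfies $f(z) = f^{(0)}\!\big(z - c/(1+f(z))\big)$ with $\mathrm{Im}(f(z)) > 0$ and $z f(z) \to -1$ as $z \to \infty$ in $\mathbb{C}^+$. I would close by invoking the standard uniqueness theorem for this functional equation---uniqueness among analytic functions of Herglotz type with the correct behavior at infinity, proved by a contraction/monotonicity argument on the map $f \mapsto f^{(0)}(z - c/(1+f))$---which pins down $f$, hence $\mu$, and so completes the proof. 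The principal difficulty remains the quadratic-form concentration above with only log-concavity available, which is precisely the point where the argument departs from the classical Marchenko--Pastur proof.
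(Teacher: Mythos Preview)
The paper does not prove this lemma. It is stated explicitly as ``Theorem 3.3 in \cite{pajor2009limiting}'' and is used as a black-box input in the proof of Lemma~\ref{lem:bound_v1}; no argument for it appears anywhere in the body or the appendices. So there is no ``paper's own proof'' to compare your proposal against.

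That said, your outline is essentially the standard resolvent proof from the Pajor--Pastur reference itself: concentration of the normalized trace via rank-one interlacing, a leave-one-column (Sherman--Morrison) expansion to produce the self-consistent equation, concentration of the quadratic forms $t_i^\top(Q_{n,p}^{(i)}-zI_p)^{-1}t_i$ around the trace using isotropy plus the log-concave concentration machinery, and a uniqueness argument for the Herglotz solution of the fixed-point equation. You have correctly identified that the log-concavity hypothesis is consumed precisely at the quadratic-form step, which is indeed the main technical point distinguishing this from the classical i.i.d.-entries Marchenko--Pastur proof. Nothing in your sketch is wrong or missing at the level of an outline.
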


\subsubsection{Weighted Taylor Residual $V_2$}
We claim that a norm of $V_2$ is negligible from Assumption \ref{asmp:taylor_residual} for the residual term $R$ in the Taylor expansion.
We consider a uniform upper bound for a $j$-th element $R_j$ of $R$ as
\begin{align*}
    R_j \leq  \sup_{\theta, \theta' \in \Theta} (\theta - \theta^*)^\top U^j_p(\theta',z) (\theta - \theta^*),
\end{align*}
and evaluate the bounds using several empirical process techniques, such as the Rademacher complexity and the Dudley integral, associated with the concentration inequalities (for an overview of the techniques, see \cite{gine2016mathematical}).
Consequently, we obtain the following results.
\begin{proposition} \label{prop:taylor_residual}
    For any $\tau > 0$ and sufficiently large $p$, we obtain the following bound with probability at least $1 - 2/p$:
    \if0
    \begin{align*}
        \|V_2\|_{F_p^*}^2 = O\qty( \frac{\lambda_{\max}(F^*_p)}{\tau^2} \left\{ r^6 \sum_{j=1}^p \qty(\beta_p^j)^2 \vee r^4 \sum_{j=1}^p \qty(\alpha_p^j)^2 \vee r^{12} \sum_{j=1}^p (\beta_p^j)^6 \right\} ).
    \end{align*}
    \fi
    \begin{align*}
        \|V_2\|_{F_p^*}^2 = O\qty( \frac{\lambda_{\max}(F^*_p)}{\tau^2} \left\{ r^6 \sum_{j=1}^p \qty(\beta_p^j)^2 \vee r^4 \sum_{j=1}^p \qty(\alpha_p^j)^2 \right\} ).
    \end{align*}
\end{proposition}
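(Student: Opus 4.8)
The plan is to bound $\|V_2\|_{F_p^*}^2$ by first isolating the contribution of each coordinate $R_j$ of the Taylor residual $R$, then weighting by the resolvent $(\hat F_{n,p}+\tau I_p)^{-1}$. Since $\|(\hat F_{n,p}+\tau I_p)^{-1}\|_{op}\le 1/\tau$ and $\|F_p^*\|_{op}=\lambda_{\max}(F_p^*)$, we get the preliminary reduction
\begin{align*}
    \|V_2\|_{F_p^*}^2 = \|(\hat F_{n,p}+\tau I_p)^{-1}R\|_{F_p^*}^2 \le \frac{\lambda_{\max}(F_p^*)}{\tau^2}\,\|R\|^2 = \frac{\lambda_{\max}(F_p^*)}{\tau^2}\sum_{j=1}^p R_j^2,
\end{align*}
so the whole task becomes controlling $\sum_{j=1}^p R_j^2$. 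Using $R_j = (\hat\theta-\theta^*)^\top \partial_{\theta_j}\partial_\theta\partial_\theta^\top M_n(\check\theta^j)(\hat\theta-\theta^*)$ together with $M_n(\theta)=-n^{-1}\sum_i\log f_\theta(z_i)$ and the definition of $U_p^j$, the crude deterministic bound $R_j^2 \le (\beta_p^j)^2\|\hat\theta-\theta^*\|^4$ would suffice if we knew a priori that $\|\hat\theta-\theta^*\|=O(1)$. But at this point in the argument that bound is exactly what is being established; the novelty of the proposition is that it holds \emph{before} controlling the estimation error, so I cannot simply quote $\|\hat\theta-\theta^*\|\le 2r$.

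The key step, therefore, is to use instead only the a priori information guaranteed by Assumption~\ref{asmp:basic}(iii), namely $\theta^*,\hat\theta\in B_p(0,r)$ with probability approaching $1$, so that $\|\hat\theta-\theta^*\|\le 2r$ on that event. On this high-probability event I would write $R_j$ as a quadratic form evaluated at the \emph{data-dependent} point $\check\theta^j$, and to remove the data dependence I would pass to a supremum over the ball: $|R_j|\le \sup_{\theta,\theta'\in B_p(0,r)}|(\theta-\theta^*)^\top U_p^j(\theta',z)(\theta-\theta^*)|\cdot \mathbf 1\{\text{over }z\}$. Because $\beta_p^j$ is defined as a $\sup$ over all $z\in\R^d$ and all $\underline\theta\in\Theta$, the supremum of the operator norm is already $\beta_p^j$, giving $|R_j|\le (2r)^2\beta_p^j$ deterministically once $\hat\theta\in B_p(0,r)$. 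Summing squares gives $\sum_j R_j^2 \le 16 r^4 \sum_j (\beta_p^j)^2$, which already yields the $r^4\sum_j(\alpha_p^j)^2$-free version. To recover the sharper stated bound involving $\alpha_p^j$ — which is the refinement that lets the residual vanish even when the $\beta_p^j$ are only mildly small — I would instead split $U_p^j(\check\theta^j,z) = U_p^j(\theta^*,z) + (U_p^j(\check\theta^j,z)-U_p^j(\theta^*,z))$. The first piece, being a fixed quadratic form at $\theta^*$, is controlled by $\|\partial_\theta M_n(\theta^*)\|$-type concentration: the key observation is that $\Ep[\partial_{\theta_j}\partial_\theta\partial_\theta^\top\log f_{\theta^*}(z)]$ need not vanish, so one centers and applies a matrix/vector concentration inequality to show $n^{-1}\sum_i U_p^j(\theta^*,z_i)$ concentrates around its mean with a deviation governed by $\beta_p^j$ and $\log p$; the mean itself contributes the leading $r^4(\beta_p^j)^2$ term after squaring and summing. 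The second piece is Lipschitz-controlled: $\|U_p^j(\check\theta^j,z)-U_p^j(\theta^*,z)\|_{op}\le \alpha_p^j\|\check\theta^j-\theta^*\|\le 2r\alpha_p^j$ on the good event, contributing $r^2\cdot(2r)^2\alpha_p^j=O(r^4\alpha_p^j)$ per coordinate, hence $r^4\sum_j(\alpha_p^j)^2$ after squaring. Combining via the elementary inequality $(a+b)^2\le 2a^2+2b^2$ and tracking constants then produces the claimed $O\!\big(\lambda_{\max}(F_p^*)\tau^{-2}\{r^6\sum_j(\beta_p^j)^2 \vee r^4\sum_j(\alpha_p^j)^2\}\big)$, where the extra factor of $r^2$ on the $\beta$-term comes from bounding one copy of $\|\hat\theta-\theta^*\|^2$ inside the concentration argument rather than by the crude $4r^2$.

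The empirical-process step — showing $n^{-1}\sum_i U_p^j(\theta^*,z_i)$ is uniformly close to its mean over $j=1,\dots,p$ — is where the Rademacher complexity and Dudley integral referenced before the proposition come in: one needs a chaining bound over the $p$ coordinates simultaneously, controlling a supremum of $p$ quadratic forms, which is why the $\log p$ factor (and the $1-2/p$ failure probability via a union bound) appears. I expect the main obstacle to be making this uniform-in-$j$ and uniform-over-the-ball concentration precise while keeping the dependence on $r$, $\beta_p^j$, $\alpha_p^j$ explicit and ensuring the resulting rate is dominated by $s_p$; in particular one must verify that the quadratic-form structure $(\theta-\theta^*)^\top U_p^j(\cdot)(\theta-\theta^*)$ does not inflate the complexity beyond what the operator-norm bounds $\beta_p^j$, $\alpha_p^j$ already encode, so that summing over $j$ against $\sum_j(\beta_p^j)^2=o(1)$ and $\sum_j(\alpha_p^j)^2=o(1)$ still gives a $o(1)$ bound for $\|V_2\|_{F_p^*}^2$ after dividing by $\tau^2\succ s_p$. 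Everything else — the resolvent bound, the event $\hat\theta\in B_p(0,r)$, the elementary quadratic-form estimates — is routine.
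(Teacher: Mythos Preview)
Your reduction $\|V_2\|_{F_p^*}^2 \le \lambda_{\max}(F_p^*)\tau^{-2}\|R\|^2$ matches the paper's, and your crude bound $|R_j| \le 4r^2 \beta_p^j$ is correct and already sufficient: since $\beta_p^j$ is defined as a supremum over both $z$ and $\underline\theta$, the inequality holds deterministically once $\hat\theta, \check\theta^j \in \Theta \subset B_p(0,r)$ (which the paper uses throughout; you initially worry this is circular but then correctly invoke Assumption~\ref{asmp:basic}(iii)). This gives $\|R\|^2 \le 16r^4 \sum_j (\beta_p^j)^2$, and since $O(A)$ trivially implies $O(A \vee B)$, the proposition follows. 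Your subsequent ``refinement'' to recover the $\alpha_p^j$ term rests on a misreading of the $\vee$ as a sharpening rather than a weakening; it is unnecessary, and the arithmetic there is off (the Lipschitz piece contributes $O(r^3\alpha_p^j)$ per coordinate, not $r^4\alpha_p^j$, and concentration of $n^{-1}\sum_i U_p^j(\theta^*,z_i)$ cannot beat the deterministic $\beta_p^j$ because the mean does not vanish).

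The paper takes a different, heavier route: it bounds $R_j$ by the uniform empirical process $\sup_{\bar\theta,\underline\theta\in\Theta}\mathbb{P}_n\ell^j_{\bar\theta,\underline\theta}$ over the class $\mathcal{G}^j$ of quadratic forms $\ell^j_{\bar\theta,\underline\theta}(z)=(\bar\theta-\theta^*)^\top U_p^j(\underline\theta,z)(\bar\theta-\theta^*)$, then controls the deviation $\|\mathbb{P}_n-\mathbb{P}\|_{\mathcal{G}^j}$ via Talagrand's inequality with the expected supremum handled by a Dudley entropy integral. The covering number of $\mathcal{G}^j$ factors into two coverings of $\Theta$ at scales proportional to $\delta/(r^2\alpha_p^j)$ and $\delta/(r\beta_p^j)$, which is how both $\alpha_p^j$ and $\beta_p^j$ enter the final estimate $R_j=O(r^3\alpha_p^j\vee r^2\beta_p^j)$; the explicit failure probability $2/p$ then comes from a union bound over $j$ after setting $t=2\log p$ in Talagrand. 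Given how strongly $\beta_p^j$ is defined (pointwise sup in $z$), your elementary argument is shorter and loses nothing for the proposition as stated; the paper's empirical-process machinery would become genuinely necessary only if $\beta_p^j$ were weakened from a uniform bound to a moment condition.
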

By combining this result with Assumption \ref{asmp:taylor_residual}, we can state that $\|V_2\|_{F_p^*}^2 = o(1)$ with high probability holds.
The formal statement is provided in Lemma \ref{lem:bound_R} in the appendix.

\subsubsection{Weighted Bias $B_0$}
We do not discuss the term $B_0$ extensively, because the bias is not the main focus of this study, and as it has already been addressed by previous studies, e.g., \cite{wu2020optimal}.
We simply bound it using Assumption \ref{asmp:basic} and the setting of $\tau$ and then obtain $\|B_0\|_{F_p^*}^2 \lesssim r^2$.

\section{Asymptotic Risks of Specific Models} \label{sec:application}

We analyze several specific models and derive the asymptotic risks by utilizing the derived results as an illustration.
Specifically, we evaluate whether the assumptions are satisfied by several models.

\subsection{Linear Regression Models}

We investigate a simple linear regression model.
We consider an i.i.d. sequence of pairs $(x_1, y_1), \dots, (x_n, y_n) \in \R^{p} \times \R$. Write $x_i = (x_{i1}, \dots, x_{ip})$.
Consider the following regression model with a parameter $\theta \in B_p(0,r)$ as
\begin{align}
  y_i &= \theta^\top x_i + \epsilon_i, \label{model:reg}
\end{align}
with independent noise $\epsilon_i \sim \mathcal{TN}_b(0, \sigma^2)$ and covariates $x_{i} \sim P_X$, where $P_X$ has a log-concave density supported on $B_p(0, s)$ for some $s > 0$.

We discuss the assumptions in Section \ref{sec:assumption} in this regression setting.
In Assumption \ref{asmp:basic}, $\partial_\theta \log f(z) = \epsilon_i x_i$, which has log-concave density owing to the log-concavity of $P_X$.
The other conditions in Assumption \ref{asmp:basic} are trivially satisfied.
Assumption \ref{asmp:fisher_residual} is also satisfied, as we obtain
\begin{align*}
    \norm{w(z_i)}_{op} \leq ((b^2 + 1)/\sigma^2) \|{x_i x_i^\top}\|_{op} \leq ((b^2 + 1)/\sigma^2)s^2 = O(1),
\end{align*}
which implies $\norm{\Ep[w(z_i) w(z_i)^\top]}_{op} = O(1)$.
%Hence, Assumption \ref{asmp:fisher_residual} is satisfied.
Assumption \ref{asmp:taylor_residual} also holds, as the Taylor residual is $R = 0$ in this setting because the second and third order derivatives of $\theta^\top x$ with respect to $\theta$ are $0$. 
%Hence, it is naturally satisfied.
We validate Assumption \ref{asmp:cross_variance} numerically, which is deferred to the subsection \ref{sec:numerical}.
%More generally, we can provide an example of an additive regression case as in the next subsection.

\if0
{\bc [Is it possible to state our theory is applicable for general noise? If so, we should update the top of this section.]}

{\bc 
This result is weaker than the result presented in \cite{hastie2019surprises}, where the authors consider a linear regression model with Gaussian error and general unbounded covariates.
The limitation of boundedness of the error term and covariates in our result comes from the condition of matrix Bernstein inequality. If we could replace the boundedness condition with moment conditions, a similar result holds.
}
\fi

\if0
\subsection{Linear Regression Model with Bounded Covariates and Noise}

We investigate the linear regression model with a general setting.
We consider an i.i.d. sequence of pairs $(x_1, y_1), \dots, (x_n, y_n) \in \R^{p} \times \R$.
We set $x_i = (x_{i1}, \dots, x_{ip})$ and consider the following additive noise regression model with a parameter $\theta \in B_p(0,r)$ as
\begin{align}
  y_i &= \theta^\top x_i + \epsilon_i, \label{model:reg}
\end{align}
where $\epsilon_i$ is i.i.d. noise whose density is specified as follows:
\begin{align*}
    p(\epsilon) = C_1 \exp\qty{- C_2 \abs{\epsilon}^\alpha} \1_{[-b, b]}(\epsilon)
\end{align*}
for some known constants $C_1 > 0$, $C_2 > 0$, $b > 0$ and integer $\alpha \geq 1$.
We assume $\epsilon_i \indep x_i$ for any $i$.
The covariates follow $x_{i} \sim P_X$, where $P_X$ has has a log-concave density supported on $B_p(0, s)$ for some $s > 0$.

We estimate the parameter $\theta$ by $\ell^\alpha$ empirical risk minimization with Ridge penalty.
\begin{align*}
    \hat\theta = \argmin_\theta \frac{1}{n} \sum_{i=1}^n \abs{ y_i - x_i^\top\theta }^\alpha + \tau \norm{\theta}^2
\end{align*}

We discuss the assumptions in Section \ref{sec:assumption} in this regression setting.
About Assumption \ref{asmp:basic}, $\partial_\theta \log f(z) \propto \sign(\epsilon_i) \abs{\epsilon_i}^{\alpha-1} x_i$, which has log-concave density owing to the log-concavity of $P_X$.
The other conditions in Assumption \ref{asmp:basic} are trivially satisfied.
About Assumption \ref{asmp:fisher_residual}, since we have
\begin{align*}
    \norm{w(z_i)}_{op} \leq ((b^2 + 1)/\sigma^2) \|{x_i x_i^\top}\|_{op} \leq ((b^2 + 1)/\sigma^2)s^2 = O(1),
\end{align*}
it implies $\norm{\Ep[w(z_i) w(z_i)^\top]}_{op} = O(1)$.
Hence, Assumption \ref{asmp:fisher_residual} is satisfied.
About Assumption \ref{asmp:taylor_residual}, the Taylor residual is $R = 0$ in this setting because the second and third order derivatives of $\theta^\top x$ with respect to $\theta$ is $0$. 
Hence, it is naturally satisfied.
%We validate Assumption 4 numerically, which is deferred to the subsection ***.
Hence, we obtain the following result.
\begin{corollary}\label{cor:reg}
    ****
    {\bc [Is is possible for other log-concave noise?]}
    {\bc [Is it possible to derive an explicit risk value?]}
\end{corollary}

This minimization problem is equivalent to the minimization *** under the specified settings because the likelihood is $f_\theta(Z) \propto \exp(- C_2 \abs{Y - g_\theta(X)}^\alpha)$ almost surely with respect to $\theta$ given a sample $Z = (X, Y)$.

We specify the additive structure of $g_{\theta^*}$ by defining a $M$ partition of $p$ parameters.
Let $\Delta_1, \dots, \Delta_M$ be a partition of $[p] := \{1,2,...,p\}$ such that $\bigcup_{m=1}^M \Delta_m = [p]$ and $\Delta_{m} \cap \Delta_{m'} = \emptyset$ for $m \neq m'$. 
Let $g_{\Delta_m(\theta)}$ be a function that only depends on a sub parameter $\theta_{\Delta_m} = (\theta_\ell)_{\ell \in \Delta_m}$. 
Without loss of generality, we set
$j < j'$ for $j \in \Delta_m$ and $j' \in \Delta_{m'}$ if $m < m'$ hold.
We consider the following additive regression model as
\begin{align}\label{asmp:disjoint_g}
    g_{\theta}(x) = \frac{1}{M} \sum_{m=1}^M g_{\Delta_m(\theta)}(x).
\end{align}
These additive nonparametric regression models have been actively investigated in many studies \cite{***,***}.
This division by $M$ is introduced to keep the scale of each sub-model $g_{\Delta_m}(x)$ constant.

We introduce several settings for the additive model, specifically, boundedness of derivatives and Lipschitz continuity.
For the partition, $\card(\Delta_m) \leq p/M + 1$ holds for any $m = 1,...,M$ and and take $M$ such as $M / p^{3/4} \to \infty$.
We also bound up to third derivatives of $g_{\Delta_m}(x)$.
For $b = 1,2,3$, let $J_b = (j_1,...,j_b) \in \Delta_m^{b} := \times_{k=1}^b \Delta_m$ be a tuple of $b$ indexes from $\Delta_m$.
We define the derivative as $\partial_\theta^{J_b} := \partial_{\theta_{j_1}}\cdots \partial_{\theta_{j_b}} g_{\Delta_m}$.
For any $b = 1,2,3$, we assume that there exists a finite constant $c > 0$ such as
\begin{align}
    \max_{m \in [M]}\max_{J_b \in \Delta_m^b} \left\{\sup_{\theta'} \| \partial_\theta^{J_b} g_{\Delta_m(\theta')}\|_{L^\infty} \vee \sup_{\theta', \theta''} \frac{\|{\partial_{\theta}^{J_b} g_{\Delta_m(\theta')} - \partial_{\theta}^{J_b} g_{\Delta_m(\theta'')}}\|_{L^\infty}}{({\sum_{\ell \in \Delta_m} (\theta_{\ell}' - \theta_{\ell}'')^2)^{1/2}}} \right\} \leq c. \label{asmp:additive_reg}
\end{align}
Simply stated, these derivatives are bounded and Lipschitz continuous in terms of the sub-parameters from $\Delta_m$.
A rigorous notation will be provided in the supplementary material.
%{\bc [The detailed notation is moved to the appendix. Please check the correctness of the notation.]}

With the boundedness and continuity assumption \eqref{asmp:additive_reg}, we obtain the following result which satisfies Assumption \ref{asmp:fisher_residual} and \ref{asmp:taylor_residual}.
\begin{lemma}\label{app:additive}
    Consider the regression problem \eqref{mod:parallel} with the model \eqref{asmp:disjoint_g}.
    Suppose the assumption \eqref{asmp:additive_reg} holds.
%    Let $g_{\theta^*}$ be a regression function satisfying the previous assumptions.
    Then, the followings hold as $p \to \infty$:
    \begin{align*}
        \sum_{j} \qty(\beta_p^j)^2 = o(1),
%        \ \sum_{j} \qty(\beta_p^j)^6 = o(1),
        \ \sum_{j} \qty(\alpha_p^j)^2 = o(1), \ \nu_p^2 &= o(1), \ \kappa_p = o(1).
    \end{align*}
\end{lemma}
Based on this result, we obtain asymptotic risk of the regression model:
{\bc
\begin{corollary}
    Consider the setting in Lemma * [We should check Assumption \ref{asmp:cross_variance}]
\end{corollary}
}
This result evaluates that the asymptotic risk under the overparameterized non-parametric model with the additive structure. 
For a typical example, this result holds even when the model is a large deep neural network whose parameters are obtained with the empirical risk minimization.
The existing asymptotic risk analysis has so far only dealt with models that are linear in terms of trainable parameters, such as linear regression and feature linear regression.
In contrast, this analysis can be adapted to general nonlinear models.

\fi

\if0
{\bc [Is it possible to state our theory is applicable for general noise? If so, we should update the top of this section.]}

{\bc 
This result is weaker than the result presented in \cite{hastie2019surprises}, where the authors consider a linear regression model with Gaussian error and general unbounded covariates.
The limitation of boundedness of the error term and covariates in our result comes from the condition of matrix Bernstein inequality. If we could replace the boundedness condition with moment conditions, a similar result holds.
}
\fi

\subsection{Additive Regression Model and Parallel Neural Networks} \label{sec:additive}
We consider a general nonparametric regression problem and show that a certain class of deep neural networks satisfy the assumptions in Section \ref{sec:assumption}.
Specifically, we introduce the additive structure into a regression model in order to fulfill Assumption \ref{asmp:fisher_residual} and \ref{asmp:taylor_residual}, then apply it to the design of deep neural networks.
%In general additive setting, 
%the conditions on $\alpha_p^j$ and $\beta_p^j$ are not satisfied.
%However, if the regression function is written as the mean of sufficiently many smooth functions, then it satisfies Assumption 3.

We consider an i.i.d. sequence of pairs $(x_1, y_1), \dots, (x_n, y_n) \in \R^{d} \times \R$ generated from
the following regression model
\begin{align}\label{mod:parallel}
  y_i &= g_{\theta^*}(x_i) + \epsilon_i.
\end{align}
where $g_{\theta}(x)$ is a regression model with true parameter $\theta = \theta^* \in B_p(0,r)$, such as $\sup_\theta \norm{g_\theta}_{L^\infty} < b'$. 
%We will introduce an additional form on  $g_\theta$ in the next paragraph.
$x_i \sim P_X$ is the i.i.d. covariate from a probability measure $P_X$ on $\R^d$ with log-concave density.
$\epsilon_i$ is the i.i.d. noise with density
%\begin{align*}
    $p(\epsilon) = C_1 \exp\qty{- C_2 \epsilon^2} \1_{[-b, b]}(\epsilon)$
%\end{align*}
for some known constants $C_1 , C_2  , b > 0$, and independent to $x_i$ for any $i$.
We estimate the parameter $\theta$ through the empirical risk minimization with a ridge penalty:
\begin{align}
    \hat\theta = \argmin_\theta \frac{1}{n} \sum_{i=1}^n ( y_i - g_{\theta} (x_i) )^2 + \tau \norm{\theta}^2. \label{eq:optim_regression}
\end{align}
This minimization problem is equivalent to the maximization of likelihood $f_\theta(z) \propto \mathrm{exp} (- C_2 \abs{y - g_\theta(x)}^2)$.

We specify the additive structure of $g_{\theta^*}$ by defining a $M$ partition of $p$ parameters.
Let $\Delta_1, \dots, \Delta_M$ be a partition of $[p] := \{1,2,...,p\}$ such that $\bigcup_{m=1}^M \Delta_m = [p]$ and $\Delta_{m} \cap \Delta_{m'} = \emptyset$ for $m \neq m'$. 
$\card(\Delta_m) \leq p/M + 1$ holds for any $m = 1,...,M$.
Let $g_{\Delta_m(\theta)}$ be a function that only depends on a sub parameter $\theta_{\Delta_m} = (\theta_\ell)_{\ell \in \Delta_m}$. 
Without loss of generality, we set
$j < j'$ for $j \in \Delta_m$ and $j' \in \Delta_{m'}$ if $m < m'$ hold.
We consider the following additive model:
\begin{align}\label{asmp:disjoint_g}
    g_{\theta}(x) = \frac{1}{M} \sum_{m=1}^M g_{\Delta_m(\theta)}(x).
\end{align}
These additive nonparametric regression models have been actively investigated \cite{stone1985additive,hastie1987generalized}.
This division by $M$ is introduced to keep the scale of each sub-model $g_{\Delta_m}(x)$ constant.

We introduce several settings for the nonlinear model; specifically, the boundedness and Lipschitz continuity are imposed on derivatives.
%We also bound up to third derivatives of $g_{\Delta_m}(x)$.
For $b = 1,2,3$, let $j_{1:b} = (j_1,...,j_b) \in \Delta_m^{b} := \times_{k=1}^b \Delta_m$ be a tuple of $b$ indexes from $\Delta_m$.
We define the derivative as $\partial_\theta^{j_{1:b}} := \partial_{\theta_{j_1}}\cdots \partial_{\theta_{j_b}} g_{\Delta_m}$.
For any $b = 1,2,3$, we assume that there exists a finite constant $c > 0$, such that
\begin{align}
    \max_{m \in [M]}\max_{j_{1:b} \in \Delta_m^b} \left\{\sup_{\theta' \in \Theta} \| \partial_\theta^{j_{1:b}} g_{\Delta_m(\theta')}\|_{L^\infty} \vee \sup_{\theta', \theta'' \in \Theta} \frac{\|{\partial_{\theta}^{j_{1:b}} g_{\Delta_m(\theta')} - \partial_{\theta}^{j_{1:b}} g_{\Delta_m(\theta'')}}\|_{L^\infty}}{\|\theta'_{\Delta_m} - \theta''_{\Delta_m}\|} \right\}  \leq c. \label{asmp:additive_reg}
\end{align}
These derivatives are bounded and Lipschitz continuous in terms of the sub-parameters from $\Delta_m$.
A rigorous notation will be provided in Section \ref{sec:appendix_appli}.
%{\bc [The detailed notation is moved to the appendix. Please check the correctness of the notation.]}
Then, we obtain the following result.
\begin{proposition}\label{prop:additive}
    Consider the regression problem \eqref{mod:parallel} with the model \eqref{asmp:disjoint_g}.
    Suppose Assumption \eqref{asmp:additive_reg} holds and set $M \succ p^{3/4}$.
%    $M / p^{3/4} \to \infty$ as $p \to \infty$.
%    Let $g_{\theta^*}$ be a regression function satisfying the previous assumptions.
    Then, the followings hold as $p \to \infty$:
    \begin{align*}
        \sum_{j=1}^p \qty(\beta_p^j)^2 = o(1),
%        \ \sum_{j} \qty(\beta_p^j)^6 = o(1),
        \ \sum_{j=1}^p \qty(\alpha_p^j)^2 = o(1), \ \nu_p^2 &= o(1), \mbox{~and~}\ \kappa_p = o(1).
    \end{align*}
\end{proposition}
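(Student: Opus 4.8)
The plan is to reduce everything to operator‑norm estimates on the derivative tensors of $g_\theta$, exploiting that the additive structure \eqref{asmp:disjoint_g} forces the Hessian and third‑derivative tensors of $g_\theta$ to be block‑diagonal with blocks of side $\card(\Delta_m)\le p/M+1$. First I would write out the log‑likelihood: since $f_\theta(z)\propto p_X(x)\exp(-C_2(y-g_\theta(x))^2)$, we have $\partial_\theta\log f_\theta(z)=2C_2(y-g_\theta(x))\,\partial_\theta g_\theta(x)$ and $\partial_\theta\partial_\theta^\top\log f_\theta(z)=2C_2\bigl(-(\partial_\theta g_\theta)(\partial_\theta g_\theta)^\top+(y-g_\theta)\,\partial_\theta\partial_\theta^\top g_\theta\bigr)$, so $w(z)=f_{\theta^*}^{-1}\partial_\theta\partial_\theta^\top f_{\theta^*}(z)=\partial_\theta\partial_\theta^\top\log f_{\theta^*}(z)+(\partial_\theta\log f_{\theta^*}(z))(\partial_\theta\log f_{\theta^*}(z))^\top$; differentiating once more in $\theta_j$ writes $U_p^j(\underline\theta,z)$, up to the constant $2C_2$ and signs, as the sum of the rank‑one pieces $(\partial_{\theta_j}\partial_\theta g)(\partial_\theta g)^\top$ and its transpose, the piece $(\partial_{\theta_j}g)\,\partial_\theta\partial_\theta^\top g$, and the piece $(y-g)\,\partial_{\theta_j}\partial_\theta\partial_\theta^\top g$.

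Next I would record the elementary estimates. By \eqref{asmp:disjoint_g} a mixed derivative of $g_\theta$ in coordinates from $\Delta_m$ equals $M^{-1}$ times the corresponding derivative of $g_{\Delta_m}$ and vanishes unless all coordinates lie in one block; with the boundedness half of \eqref{asmp:additive_reg} this yields $\|\partial_\theta g_{\underline\theta}\|\le c\sqrt p/M$, $\|\partial_{\theta_j}\partial_\theta g_{\underline\theta}\|\le (c/M)\sqrt{p/M+1}$, and $\|\partial_\theta\partial_\theta^\top g_{\underline\theta}\|_{op}\vee\|\partial_{\theta_j}\partial_\theta\partial_\theta^\top g_{\underline\theta}\|_{op}\le (c/M)(p/M+1)$ (the last because a block‑diagonal matrix with blocks of side $\le p/M+1$ and entries $\le c/M$ has operator norm at most $(c/M)(p/M+1)$, via the Frobenius norm of one block). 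I would also use $|y-g_{\theta^*}(x)|=|\epsilon|\le b$ and, on the support of $z$, $|y-g_{\underline\theta}(x)|\le 2b'+b$ (from $|\epsilon|\le b$ and $\sup_\theta\|g_\theta\|_{L^\infty}<b'$). Substituting into the four‑term expansion of $U_p^j$ gives $\beta_p^j\lesssim \tfrac{c^2}{M^2}\sqrt{(p/M+1)p}\vee\tfrac{c^2}{M^2}(p/M+1)\vee(2b'+b)\tfrac{c}{M}(p/M+1)$, and since $M\succ p^{3/4}$ makes $p/M+1=O(p^{1/4})$ the last term dominates, so $\beta_p^j=O\bigl((p/M+1)/M\bigr)$ uniformly in $j$. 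Running the same telescoping bound $AB-A'B'=(A-A')B+A'(B-B')$ with the Lipschitz half of \eqref{asmp:additive_reg}, together with $|g_{\underline\theta}-g_{\underline\theta'}|\le (c/\sqrt M)\|\underline\theta-\underline\theta'\|$ from Cauchy--Schwarz over the $M$ blocks, gives the identical bound for $\alpha_p^j$. Hence $\sum_{j=1}^p(\beta_p^j)^2+\sum_{j=1}^p(\alpha_p^j)^2=O\bigl(p(p/M+1)^2/M^2\bigr)$, which is $O(p^3/M^4)$ when $M=o(p)$ and $O(1/p)$ when $M\asymp p$; in both cases it is $o(1)$ precisely because $M\succ p^{3/4}$.

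For the Fisher‑residual quantities I would bound, at $\theta^*$, $\|w(z)\|_{op}\le(4C_2^2b^2+2C_2)\|\partial_\theta g_{\theta^*}\|^2+2C_2 b\|\partial_\theta\partial_\theta^\top g_{\theta^*}\|_{op}\le(4C_2^2b^2+2C_2)\tfrac{c^2p}{M^2}+2C_2 b\tfrac{c}{M}(p/M+1)$, which is $o(1)$ uniformly over the support of $z$ under $M\succ p^{3/4}$ (the first term is $O(p/M^2)=o(1)$ as $M\succ p^{1/2}$, the second $o(1)$ as above); thus $\kappa_p=\esssup_z\|w(z)\|_{op}=o(1)$. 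Finally, by convexity of the operator norm and symmetry of $w(z)$, $\nu_p^2=\|\Ep[w(z)w(z)^\top]\|_{op}\le\Ep\|w(z)w(z)^\top\|_{op}=\Ep\|w(z)\|_{op}^2\le\kappa_p^2=o(1)$.

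The hard part will be the bookkeeping in the bound on $U_p^j$ (and its $\underline\theta$‑increments): one must track which factors remain confined to a single block of side $p/M+1$, hence contribute a $\sqrt{p/M+1}$ or a $p/M+1$, as against the lone genuinely $p$‑dimensional vector $\partial_\theta g_\theta$, which contributes $\sqrt p$; the point is that $\partial_\theta g_\theta$ only ever appears multiplied by a block‑confined factor carrying an extra $1/M$, so the two rank‑one terms are subdominant. Tracing this through, the binding term is the third‑derivative block piece $(y-g)\,\partial_{\theta_j}\partial_\theta\partial_\theta^\top g$, whose square summed over the $p$ coordinates is of order $p(p/M)^2/M^2=p^3/M^4$, and forcing this to vanish is exactly the hypothesis $M\succ p^{3/4}$.
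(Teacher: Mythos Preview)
Your proposal is correct and follows essentially the same route as the paper's proof: both write out $U_p^j$ as the four-term sum of $(y-g)\,\partial_{\theta_j}\partial_\theta\partial_\theta^\top g$, $(\partial_{\theta_j}g)\,\partial_\theta\partial_\theta^\top g$, and the two rank-one pieces $(\partial_{\theta_j}\partial_\theta g)(\partial_\theta g)^\top$, bound each operator norm using the block-diagonal structure to obtain $\beta_p^j,\alpha_p^j=O((p+M)/M^2)$ uniformly in $j$, and then observe $\sum_j(\beta_p^j)^2=O\bigl(p(p+M)^2/M^4\bigr)=o(1)$ exactly under $M\succ p^{3/4}$; the bounds $\kappa_p,\nu_p^2=o(1)$ are handled identically via the two-term expansion of $w(z)$. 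One small slip: your Lipschitz constant for $g_\theta$ should be $c\sqrt{p}/M$ rather than $c/\sqrt{M}$, but this term is subdominant so the conclusion is unaffected.
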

This result indicates that the additive structure satisfies the regularity conditions, especially, Assumption \ref{asmp:taylor_residual} on the third derivative matrix $U_p^j$.
With the additive model \eqref{asmp:disjoint_g}, the interactions between the parameters are limited within each model and there are no interactions with other models.
That is, we obtain a $(k,\ell)$-th element $ [U_p^j]_{k,\ell}$ of $U_p^j$ such as
\begin{align*}
    [U_p^j]_{k,\ell} = 
    \begin{cases}
        \partial_{\theta_j}\partial_{\theta_k}\partial_{\theta_\ell} \Ep[M_n(\theta)] & \mbox{~if~} \exists m \in [M], \mbox{~s.t.~}j,k,\ell \in \Delta_m,\\
        0 & \mbox{~otherwise}.
    \end{cases}
\end{align*}
Hence, the matrix $U_p^j$ becomes sparse and satisfies Assumption \ref{asmp:taylor_residual} easily.
%Figure \ref{fig:hesse} shows its image.
%\begin{figure}
%    \centering
%    \includegraphics[width=0.3\hsize]{}
%    \caption{Illustration of $U_p^j$ with the additive model \eqref{asmp:disjoint_g}. }
%    \label{fig:hesse}
%\end{figure}

We can apply the result to deep neural networks.
The additive nonparametric regression form is equivalent to a parallel deep neural network, as illustrated in Figure \ref{fig:para_network}, and also there is no form constraint on the functions $g_{\Delta_m},m=1,..,M$ by the sub-networks.
Hence, our asymptotic risk analysis can be applied to deep neural networks when they are parallelized.

\subsection{Ensemble learning}

We study the ensemble learning scheme for the regression problem, which develops a predictive model by combining different weak learners.

We consider the following setting, which utilizes the formulation of the previous section.
Assume the data generating process in Section \ref{sec:additive} and also the data are generated by the additive model in \eqref{mod:parallel}.
To develop a predictive model, we define $M$ weak learners $g_{\theta_1}(x),...,g_{\theta_M}(x)$, where $\theta_m \in \Theta_m$ is a $p_m$-dimensional parameter vector for the $m$-th learner $g_{\theta_m}(x)$, with its parameter space $\Theta_m$.
It should be noted that there are $p = \sum_{m=1}^M p_m$ parameters in total.
We assume that the weak learners are bounded and Lipschitz continuous, similar to the constraint in \eqref{asmp:disjoint_g}.
Analogous to the optimization problem in \eqref{eq:optim_regression}, we can consider 
the following penalized maximum likelihood estimator:
\begin{align}
    \argmin_{\theta_1 \in \Theta_1, \dots, \theta_M \in \Theta_M} \frac{1}{n} \sum_{i=1}^n \abs{\frac{1}{M} \sum_{m=1}^M (y_i - g_{\theta_m}(x_i))}^2 + \tau \sum_{m=1}^M \norm{\theta_m}^2. \label{opt:ensemble}
\end{align}

We verify that Assumptions \ref{asmp:fisher_residual} and \ref{asmp:taylor_residual} hold with the learning problem.
Let $g_\theta(\cdot) = M^{-1} \sum_{m=1}^M g_m(\cdot)$ be the aggregation of weak learners.
The setting is almost the same as Proposition \ref{prop:additive}, we provide the following result without proof.
\begin{proposition}\label{prop:ensemble}
    Consider the optimization problem in \eqref{opt:ensemble} with the weak learners $g_1(x),...,g_M(x)$.
    Suppose the learners satisfy the property in \eqref{asmp:additive_reg} and set $M / p_m^{3} \to \infty$ for all $m = 1,...,M$ as $M \to \infty$.
    Then, a predictive model $g_\theta(\cdot) = M^{-1} \sum_{m=1}^M g_m(\cdot)$ satisfies Assumptions \ref{asmp:fisher_residual} and \ref{asmp:taylor_residual} as $M \to \infty$.
\end{proposition}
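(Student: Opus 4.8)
The plan is to recognize the ensemble estimator \eqref{opt:ensemble} as a reparametrized instance of the additive penalized estimator of Proposition \ref{prop:additive}, and then to recheck the four operator-norm bounds behind that proposition with the per-learner dimensions $p_m$ kept explicit, so that the uniform hypotheses there (blocks of size $\le p/M+1$ and $M\succ p^{3/4}$) get traded for the assumed rate $M/p_m^3\to\infty$.

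First I would stack $\theta=(\theta_1^\top,\dots,\theta_M^\top)^\top\in\R^p$ with $p=\sum_{m=1}^M p_m$, and take $\Delta_m\subset[p]$ to be the coordinate block carrying $\theta_m$, so $\card(\Delta_m)=p_m$ and $\{\Delta_m\}_{m=1}^M$ partitions $[p]$. Writing $g_{\Delta_m(\theta)}(x):=g_{\theta_m}(x)$ and $g_\theta:=\tfrac1M\sum_m g_{\Delta_m(\theta)}$, we have $\tfrac1M\sum_m(y_i-g_{\theta_m}(x_i))=y_i-g_\theta(x_i)$ and $\sum_m\|\theta_m\|^2=\|\theta\|^2$, so \eqref{opt:ensemble} is exactly \eqref{eq:optim_regression} under the additive model \eqref{asmp:disjoint_g}, and the assumed boundedness/Lipschitzness of the learners is precisely condition \eqref{asmp:additive_reg}. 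Hence it only remains to verify that, with non-uniform $p_m$, the quantities $\sum_j(\beta_p^j)^2$, $\sum_j(\alpha_p^j)^2$, $\nu_p^2$, and $\kappa_p$ all vanish.

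Next I would redo those estimates using that $g_{\Delta_m}$ depends only on $\theta_{\Delta_m}$ and that the likelihood satisfies $\partial_\theta\log f_\theta=2C_2(y-g_\theta)\,\partial_\theta g_\theta$. Three facts drive everything: (a) $\|\partial_\theta g_\theta\|^2=M^{-2}\sum_m\|\partial_{\theta_{\Delta_m}}g_{\Delta_m}\|^2=O(M^{-2}p)$; (b) $\partial_\theta\partial_\theta^\top g_\theta$ is block-diagonal along the $\Delta_m$'s with entrywise size $O(M^{-1})$, hence operator norm $O(M^{-1}\max_m p_m)$; and (c) for fixed $j\in\Delta_{m(j)}$ each term of $U_p^j(\underline\theta,z)=\partial_{\theta_j}\partial_\theta\partial_\theta^\top\log f_{\underline\theta}(z)$ is either $\Delta_{m(j)}\times\Delta_{m(j)}$-supported with operator norm $O(M^{-1}p_{m(j)})$ (from the $c$-bound on third derivatives and $|y-g_{\underline\theta}|=O(1)$) or a rank-one product of the full gradient $\partial_\theta g_\theta$ with a $\Delta_{m(j)}$-supported vector, of operator norm $O(M^{-2}\sqrt{p\,p_{m(j)}})\le O(M^{-3/2}\max_m p_m)$ since $p\le M\max_m p_m$. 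Thus $\beta_p^j=O(M^{-1}p_{m(j)}+M^{-3/2}\max_m p_m)$, and the same telescoping bound applied to $U_p^j(\underline\theta,z)-U_p^j(\underline\theta',z)$ via the Lipschitz half of \eqref{asmp:additive_reg} yields the same for $\alpha_p^j$; summing over $j$,
\begin{align*}
\sum_{j=1}^p(\beta_p^j)^2\vee\sum_{j=1}^p(\alpha_p^j)^2=O\!\left(M^{-2}\sum_{m=1}^M p_m^3\right)=O\!\left(M^{-1}\max_{m}p_m^3\right)\longrightarrow 0,
\end{align*}
which gives Assumption \ref{asmp:taylor_residual}. For Assumption \ref{asmp:fisher_residual}, the identity $\tfrac1f\partial_\theta\partial_\theta^\top f=\partial_\theta\partial_\theta^\top\log f+\partial_\theta\log f\,\partial_\theta^\top\log f$ evaluated at $\theta^*$, combined with (a), (b) and $\partial_\theta\log f_{\theta^*}(z)=2C_2\epsilon\,\partial_\theta g_{\theta^*}(x)$ with $|\epsilon|\le b$, gives $\kappa_p=\esssup_z\|w(z)\|_{op}=O(M^{-1}\max_m p_m)\to0$ and $\nu_p^2\le\Ep[\|w(z)\|_{op}^2]=O(M^{-2}\max_m p_m^2)\to0$, both trivially $o(p/\log p)$.

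The step I expect to be the main obstacle is the bookkeeping in the third paragraph: one must confirm that the "cross-block" contributions to $U_p^j$ and to $w(z)$ — those carrying the full gradient $\partial_\theta g_\theta$, whose norm is only $O(M^{-1}\sqrt p)$ rather than $O(M^{-1}\sqrt{p_{m(j)}})$ — remain dominated after squaring and summing, and that the non-uniformity of the $p_m$ does not disturb the bound $\sum_m p_m^3\le M\max_m p_m^3$. Once these operator-norm estimates are secured, Proposition \ref{prop:ensemble} follows from the reduction to Proposition \ref{prop:additive}.
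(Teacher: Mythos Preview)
Your proposal is correct and matches the paper's approach: the paper does not give a separate proof of Proposition~\ref{prop:ensemble}, stating only that ``the setting is almost the same as Proposition~\ref{prop:additive}, we provide the following result without proof.'' You have correctly identified the reduction (stacking $\theta=(\theta_1^\top,\dots,\theta_M^\top)^\top$ so that \eqref{opt:ensemble} becomes \eqref{eq:optim_regression} under \eqref{asmp:disjoint_g}) and, going beyond the paper, actually reworked the operator-norm bounds of the proof of Proposition~\ref{prop:additive} with the non-uniform block sizes $p_m$, arriving at $\sum_j(\beta_p^j)^2\vee\sum_j(\alpha_p^j)^2=O(M^{-1}\max_m p_m^3)\to 0$ and $\kappa_p=O(M^{-1}\max_m p_m)\to 0$, which is exactly what is needed.

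One small imprecision: your dichotomy in step~(c) (``either $\Delta_{m(j)}\times\Delta_{m(j)}$-supported \dots\ or a rank-one product of $\partial_\theta g_\theta$ with a $\Delta_{m(j)}$-supported vector'') does not cover the term $(\partial_{\theta_j}g_\theta)\,\partial_\theta\partial_\theta^\top g_\theta$, which is a scalar times the full block-diagonal Hessian and is supported on \emph{all} blocks, not just $\Delta_{m(j)}$. Its operator norm, however, is $O(M^{-2}\max_m p_m)$, which is dominated by your $O(M^{-3/2}\max_m p_m)$ bound, so the final estimate for $\beta_p^j$ and the conclusion are unaffected.
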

In this result, no constraints are imposed on the form of each learner, except for the boundedness and Lipschitz continuity; hence, many different models are available, including deep models.
One important condition is that the number of parameters $p_m$ for each learner does not increase excessively with the number of learners $M$.

\subsection{Residual Network and Minimax Risk}

We analyze a minimax prediction risk of a neural network called \textit{Residual Network} (ResNet) \cite{he2016deep} by applying the results on parallel deep neural networks. %in the previous section. 
ResNet has a specific structure called skip connections, which can increases the number of layers easily and is known for its ability to make accurate predictions.

We define a ResNet model as follows.
Let $L, M \in \N$ be hyper-parameters and $d \in \N$ is a width of ResNet which is fixed for brevity. % and $A_{m,\ell} \in \R^{d \times d}$ and $b_{n,\ell} \in \R^d$ be parameters for $m=1,...,M$ and $\ell = 1,...,L$.
For $m =1,...,M$ and $\ell = 1,...,L$, we define a linear map $\rho_{m,\ell}: \R^d \to \R^d$ with a parameter vector $\eta_{m,\ell}$.
Then, with an activation function $\sigma: \R^d \to \R^d$, we define a function by a deep neural network with $L$ layers as
\begin{align*}
    \rho_{\Delta_m}(x) :=  \sigma \circ \rho_{m,L} \circ \sigma  \circ \rho_{m,L-1} \circ \cdots \circ \sigma \circ \rho_{m,1}(x).
\end{align*}
for $m = 1,...,M$.
We define a ResNet model $g_\eta: \R^d \to \R$ with an identity map $\mathrm{id}(\cdot)$ and an additional parameter vector $\eta_0 \in \R^{d}$  as
\begin{align*}
    \rho_\eta(x) := \eta_{0}^\top(\rho_{\Delta_M} + \mathrm{id}) \circ (\rho_{\Delta_{M-1}} + \mathrm{id}) \circ \cdots \circ (\rho_{\Delta_1} + \mathrm{id}) (x).
\end{align*}
Here, $\eta := \{(\eta_{m,\ell})_{m,\ell = 1}^{M,L},\eta_0 \}$ denotes a tuple of all the parameters.
The coordinate $(\rho_{\Delta_m} + \mathrm{id})$ is called a residual block and $M$ denotes a number of residual blocks.
The identity maps are referred to as skip connections. % which brings various statistical and computational advantages of the usage of ResNet.

We discuss an asymptotic minimax predictive risk of ResNet under the regression setting \eqref{mod:parallel}.
A key fact by \cite{oono2019approximation} is that ResNet can realize parallel deep neural networks, which is also referred to a block sparse model, when $g_{m,\ell}$ is a convolution layer.
%using its connection to   with the least square estimator $\hat{g} = g_{\hat{\theta}}$ in \eqref{eq:optim_regression}.
%A key technique is the fact that ResNet can realize parallel deep neural networks (e.g. ).
%It should be noted that this result is for convolutional networks, but it can be generalized to ordinary neural networks.
%{\bc [Check]}
%\cite{oono2019approximation} proves a class of parallel neural networks in Section \ref{sec:additive} corresponds to a class of ResNet. 
%Specifically, for certain parallel neural networks, there exists a neural network in a class of ReLU convolutional neural networks such that it realizes the parallel neural network. i.e., 
With the convolution setting, let $\mG^{\text{(PNN)}}_{M,p}$ be a class of parallel neural networks with $M$ sub-models and $p$ parameters as Section \ref{sec:additive}, and $\mG^{\text{(RN)}}_{M,q}$ be a class of ResNet with $M$ blocks and $q$ parameters.
For minimax risk analysis, we define observed data $D_n := \{(x_1,y_1),...,(x_n,y_n)\}$, and estimators $\hat \rho = \hat \rho(D_n) \in \mG^{\text{(RN)}}_{M,q}$ and $\hat g = \hat g(D_n) \in \mG^{\text{(PNN)}}_{M,p}$ for each model, which maps the observed data to a function by the model.

According to Theorem 5 in \cite{oono2019approximation}, $\mG^{\text{(PNN)}}_{M,p} \subset \mG^{\text{(RN)}}_{M,q}$ holds with a relation $q = p + p'$ for $p' = O(M)$.  
Hence, we obtain the following inequality with the estimator $g_{\hat{\theta}}$ from \eqref{eq:optim_regression} with a parallel neural network:
\begin{align}
    \inf_{\hat \rho\in \mG^{\text{(RN)}}_{M,q}} \| \hat \rho - g_{\theta^*} \|^2_{L^2} \leq \inf_{\hat g \in \mG^{\text{(PNN)}}_{M,p}} \| \hat g - g_{\theta^*} \|^2_{L^2} \leq \|{ g_{\hat{\theta}} - g_{\theta^*}}\|^2_{L^2}, \label{ineq:minimax}
\end{align}
for any $n,p,M$ and $q = p + p'$. 
Here, the infimums are taken from any measurable estimators $\hat \rho$ and $\hat g$.
%Thus we can use the risk of the empirical risk minimizer in parallel neural networks as a proxy to the minimum risk of ResNet. 
Based on the inequality, we obtain the following result:
\begin{corollary}\label{cor:resnet}
    Consider the regression problem \eqref{mod:parallel}  and $\mG^{\mathrm{(RN)}}_{M,q}$ whose sub-models $\{\rho_{\Delta_m}\}_{m=1}^M $ are bounded and Lipschitz continuous as \eqref{asmp:additive_reg}.
    Suppose Assumption \ref{asmp:basic}, \ref{asmp:cross_variance}, and $\sup_x \norm{\partial_{\theta} \partial_{\theta}^\top g_{\theta^*} (x)}_{op} = O_\Pp(1)$ hold.
    Then, for $M \succ q^{3/4}$, we obtain
%    for $M / q^{3/4} \to \infty$ as $q \to \infty$, we obtain
\begin{align*}
    \limsup_{q,n \to \infty, q/n \to \gamma} \inf_{\hat \rho \in \mG^{\mathrm{(RN)}}_{M,q}} \norm{\hat \rho - g_{\theta^*}}^2_{L^2} 
    \lesssim \lim_{a \to 0} h_{\gamma,  \overline{\tau}}(a) + r^2 + r^4,
\end{align*}
for any $\Bar{\tau} \geq 0$. % and $\gamma = \gamma' - \lim_{p, n \to \infty, q/n \to \gamma'} p'/n$.
\end{corollary}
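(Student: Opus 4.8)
The plan is to derive Corollary \ref{cor:resnet} directly from the chain of inequalities \eqref{ineq:minimax} together with the prediction-risk bound of Corollary \ref{cor:reg} applied to the parallel neural network estimator $g_{\hat\theta}$ of \eqref{eq:optim_regression}. Indeed, \eqref{ineq:minimax} already gives
\begin{align*}
    \inf_{\hat\rho \in \mG^{\mathrm{(RN)}}_{M,q}} \norm{\hat\rho - g_{\theta^*}}_{L^2}^2 \;\le\; \norm{g_{\hat\theta} - g_{\theta^*}}_{L^2}^2 ,
\end{align*}
so it suffices to check that $g_{\hat\theta}$, viewed as a parallel network with $p$ parameters and $M$ sub-models, satisfies the hypotheses of Corollary \ref{cor:reg}, and that the limiting aspect ratio of the parallel model is again $\gamma$.

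First I would transport the assumptions to the parallel representation. Assumption \ref{asmp:basic}, Assumption \ref{asmp:cross_variance}, and the Hessian bound $\sup_x \norm{\partial_\theta \partial_\theta^\top g_{\theta^*}(x)}_{op} = O_\Pp(1)$ are imposed in the statement, while twice-differentiability of $g_\theta$ near $\theta^*$ follows from smoothness of the layer maps and of the activation. Assumptions \ref{asmp:fisher_residual} and \ref{asmp:taylor_residual} are exactly the conclusion of Proposition \ref{prop:additive}, whose hypothesis $M \succ p^{3/4}$ must be extracted from the assumed $M \succ q^{3/4}$: by Theorem 5 in \cite{oono2019approximation} the embedding satisfies $q = p + p'$ with $p' = O(M)$, and since each residual block carries only a constant number of parameters we have $M \lesssim p$, hence $p' = O(M) = o(p)$ and $p \sim q$. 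This simultaneously yields $M \succ q^{3/4} \Rightarrow M \succ p^{3/4}$ and $q/n \to \gamma \Rightarrow p/n \to \gamma$, and also shows that the boundedness and Lipschitz condition \eqref{asmp:additive_reg} imposed on the residual blocks $\{\rho_{\Delta_m}\}$ carries over to the parallel sub-models with a uniform constant.

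Next I would fix the penalty. Proposition \ref{prop:additive} gives $\nu_p^2,\kappa_p,\sum_j (\alpha_p^j)^2,\sum_j (\beta_p^j)^2 \to 0$, hence $s_p \to 0$; therefore for any target $\overline\tau \ge 0$ one can choose a sequence $\tau$ with $\tau \to \overline\tau$ and $\tau^2 \succ s_p$ — take $\tau \equiv \overline\tau$ when $\overline\tau > 0$, and a slowly vanishing $\tau$ with $\sqrt{s_p} \prec \tau \to 0$ (e.g. $\tau = \sqrt{s_p}\,\log p$) when $\overline\tau = 0$. With this choice, Corollary \ref{cor:reg} applies to $g_{\hat\theta}$ and gives
\begin{align*}
    \limsup_{p,n\to\infty,\,p/n\to\gamma} \norm{g_{\hat\theta} - g_{\theta^*}}_{L^2}^2 \;\le\; C_\sigma\qty( \lim_{a\to 0} h_{\gamma,\overline\tau}(a) + r^2 + r^4 ),
\end{align*}
and combining this with the displayed inequality, absorbing $C_\sigma$ into $\lesssim$, completes the proof.

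I expect the main obstacle to be precisely this bookkeeping between the two model classes: making "$\mG^{\mathrm{(PNN)}}_{M,p} \subset \mG^{\mathrm{(RN)}}_{M,q}$ with $q = p + O(M)$" quantitatively sharp enough that (a) $p/n \to \gamma$, (b) $M \succ q^{3/4}$ forces $M \succ p^{3/4}$, and (c) the regularity constants of \eqref{asmp:additive_reg} survive the translation. Once these are pinned down, nothing beyond the already-established Corollary \ref{cor:reg} and Proposition \ref{prop:additive} is needed.
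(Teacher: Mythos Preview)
Your proposal is correct and follows precisely the route the paper indicates: combine the chain \eqref{ineq:minimax} with Corollary \ref{cor:reg} applied to the parallel estimator $g_{\hat\theta}$, using Proposition \ref{prop:additive} to supply Assumptions \ref{asmp:fisher_residual} and \ref{asmp:taylor_residual}. The paper in fact omits the proof, citing exactly these three ingredients, so your write-up is more detailed than what the paper provides; your bookkeeping observations about transferring $M\succ q^{3/4}$ to $M\succ p^{3/4}$ (immediate since $p\le q$) and matching the aspect ratios via $q=p+O(M)$ are the only points that need any attention, and you have identified them correctly.
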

This result is straightforward from \eqref{ineq:minimax}, Corollary \ref{cor:reg}, and Proposition \ref{prop:additive}, hence its proof is omitted.
Since we bound the minimax risk with ResNet by that of parallel deep neural networks as a proxy, we achieve the asumptotic minimax risk without assuming Assumption \ref{asmp:fisher_residual} and \ref{asmp:taylor_residual}.

%Although in Section \ref{sec:additive}, we are focusing on the empirical risk minimization with penalty on the norm of parameters, it is expected that from Proposition \ref{prop:additive}, 
%\begin{align*}
%    \limsup_{q,n \to \infty, q/n \to \gamma'} \min_{\hat g \in \mF^{\text{(CNN)}}_q} \norm{\hat g - g_{\theta^*}}^2_{L^2} &\lesssim \limsup_{p,n \to \infty, p/n \to \gamma \in (0, \infty)} \min_{\hat g \in \mF^{\text{(NN)}}_p} \norm{\hat g - g_{\theta^*}}^2_{L^2}\\
%    &\lesssim \lim_{a \to 0} h_{\gamma,  \overline{\tau}}(a) + r^2 + r^4
%\end{align*}
%holds when $\overline{\tau} = 0$.

\section{Experimental Study for Parallel Deep Neural Network}

We verify our theoretical findings by conducting real data experiments. 
Specifically, we test the validity of the result on parallel deep neural networks in Section \ref{sec:application} by investigating experimental error variances.

We consider a classification problem with the CIFAR 10 dataset \cite{krizhevsky2009learning}, which contains pairs of an input image and an output label from $10$ classes.
From the dataset, we utilize $n = 25,000$ images for training and $10,000$ images for testing.
We design the following architecture of deep neural networks for the classification problem.
For a non-parallel deep neural network, we utilize the ResNet \cite{he2016deep} for images with $18$ layers.
For a parallel deep neural network, we set $M=3$ and build a new neural network with three non-parallel networks side by side.
To control the number of parameters of both non-parallel and parallel networks, we vary the width (the number of filters) of a convolution layer in the ResNet from $1,2,...$ to $64$.
We set the loss function as the cross-entropy loss, that is, we consider the following negative log-likelihood:
\begin{align*}
    M_n(\theta) = - \sum_{i=1}^n \sum_{k=1}^K \mone\{y_i = k\} \log( g_\theta(x_i)_k),
\end{align*}
with an observed $K$-label $y_i \in \{1,2,...,K\}$, and a $[0,1]^K$-valued normalized model $g_\theta(\cdot) = (g_\theta(\cdot)_1,...,g_\theta(\cdot)_K)$ such that $\sum_{k=1}^K g_\theta(\cdot)_k = 1$.
With the neural networks as a model and the loss, we solve the parameter optimization problem by using the stochastic gradient descent with a momentum of $0.9$ and its learning rate of $0.1$.
We further apply a penalization with the coefficient of $0.0005$.
We replicate the procedure for $10$ times and report the mean and standard deviation.
The settings mainly follows those of \cite{yang2020rethinking}.

To study the asymptotic behavior, we report the variance of the asymptotic risk of the non-parallel/parallel deep neural networks.
Because it is difficult to track the values of the parameters in neural networks, we analyze the loss function instead.
Hence, we apply the bias-variance decomposition of the cross-entropy loss developed by \cite{yang2020rethinking} and calculate a variance term of the loss using the images for the test.

\begin{figure}[htbp]
    \centering
    \begin{minipage}{0.47\hsize}
    \centering
    \captionsetup{width=0.9\hsize}
    \includegraphics[width=0.85\hsize]{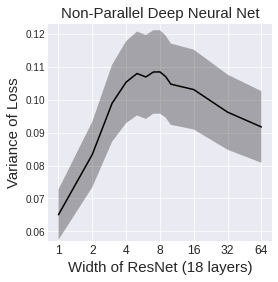}
    \end{minipage}
    \begin{minipage}{0.47\hsize}
    \centering
    \captionsetup{width=0.9\hsize}
    \includegraphics[width=0.85\hsize]{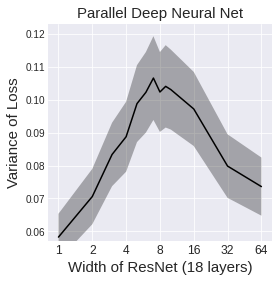}
    \end{minipage}
    \caption{Variance of the cross-entropy loss against the width of the ResNet with $18$ layers as the model size. 
    The values are calculated using the CIFAR 10 dataset with the non-parallel (left) and the parallel (right) deep neural networks.  \label{fig:simulation}}
\end{figure}

Figure \ref{fig:para_network} presents the mean and standard deviation of the variance of the loss with the non-parallel (left) and parallel (right) deep neural networks.
The horizontal axis shows a logarithm of the width of the ResNet with $18$ layers, and the vertical axis shows the loss variance.
The grey region presents the standard deviation.
From this result, we can deduce the following: (i) In both cases, the variance rises and then falls. 
In other words, a phenomenon similar to double descent is observed.
(ii) In both cases, the descent begins at a common location, with a width of $8$.
(iii) The behavior of the parallel network is more similar to that of a double descent.
These results are in line with our main theorems stating that general nonlinear models, such as deep neural networks, also exhibit the double descent and bounded asymptotic risk.
Further, they support our theoretical finding that parallelization is a sufficient condition for double descent.

\section{Discussion}

\subsection{Interpretation of Results}
In this section, we provide additional discussion regarding our results.

The penalty term $\tau \|\theta\|^2$ for the estimator is not always a regularization for the asymptotic risk. 
Whether the asymptotic variance follows the double descent phenomena depends solely on its limit $\overline{\tau} = \lim_{n,p\to \infty} \tau$.
If $\overline{\tau} = 0$, then the asymptotic variance is not regularized and double descent occurs. 
Even in our proof, the non-asymptotic $\tau$ is only used to guarantee the positive definiteness of the empirical Fisher information matrix; hence, the non-asymptotic value of $\tau$ does not affect the asymptotic variance.

Among the assumptions in Section \ref{sec:assumption}, Assumption \ref{asmp:taylor_residual} regarding the Taylor residuals is the most important, which expresses the degree of nonlinearity of models.
When the number of parameters increases, this assumption is satisfied if the derivative tensor created by the third derivatives has smaller eigenvalues.
In other words, if the change in the tensor has a volume in a different direction from the existing eigenvectors, it becomes easier to satisfy this assumption.
A simple scenario in which this is achieved is the case where interrelationships between parameters are sparse, such that the Fisher information matrix is block-diagonal.

A typical approach of the above block-diagonal differential tensor is the examples given in Section \ref{sec:application}, such as parallel deep neural networks and ensemble learning.
Simply put, dividing a large number of parameters into small groups is an effective way to reduce asymptotic risks, as the parameters influence each other only within each group.
A proper partitioning on models can control the volume of differential tensors and achieve a small asymptotic risk, whereas densely connected neural networks cannot satisfy this situation.
This may be useful as a guideline for designing the architecture of large models in the future.

\subsection{Validity of Assumption \ref{asmp:cross_variance}} \label{sec:numerical}
We provide a numerical validation of Assumption \ref{asmp:cross_variance}, which is difficult to verify theoretically
%It is hard to verify the assumption \ref{asmp:cross_variance}. Instead of a general theoretical argument, we here provide a numerical verification of this assumption for some specific models. 
We generate synthetic data from the following linear and nonlinear models:
\begin{align*}
    &\text{(M1)}\ \  y_i = x_i^\top \theta^* + \epsilon_i, \mbox{~~and~~}
    \text{(M2)}\ \  y_i = \frac{1}{p}\sum_{j=1}^p x_{ij} e^{\theta_j^* x_{ij}} + \epsilon_i,
\end{align*}
where $x_{ij}$ independently follows a truncated normal distribution $\mathcal{TN}_{1}(0, 1)$ and $\epsilon_i$ also follows truncated normal distribution $\mathcal{TN}_1(0, 1)$. 
%Note that that is the first model (M1), $F^* \propto I_p$.
For ratios $\gamma' = p/n  \in \{ 2, 5, 10\}$, we set $p \in \{10,20,...,500\}$ and $n = p / \gamma'$.
For $\tau$, we set $\tau = \overline{\tau} = 0.01$.
Subsequently, we generate the Jacobi matrix $\hat J_{n,p}$ $10$ times and calculate the following off-diagonal term that appears in Assumption \ref{asmp:cross_variance}:
\begin{align}
    \frac{1}{n^2} \sum_{i < j} J_i^\top S_{n,p} J_j.\label{eq:cross1}
\end{align}
%where $\hat J_{n,p}$ depends on model specification. 

We plot the mean and standard deviation of \eqref{eq:cross1} from the replication in Figure \ref{fig:cross0}.
The horizontal axis is the number of parameters, $p$, and the vertical axis shows the value of \eqref{eq:cross1}.
%It is observed that under both models, the probability of the sum of off-diagonal terms (\ref{eq:cross1}) being apart from $0$ becomes smaller as the number of parameters $p$ grows. 
We can observe that the term \eqref{eq:cross1} concentrates around $0$ as $p$ grows.
For the nonlinear model (M2), the speed of the convergence to $0$ is faster.
%the term (\ref{eq:cross1}) drastically concentrates around $0$ .

\begin{figure}[htbp]
    \begin{subfigure}[t]{.48\textwidth}
        \centering
        \includegraphics[width=0.97\hsize]{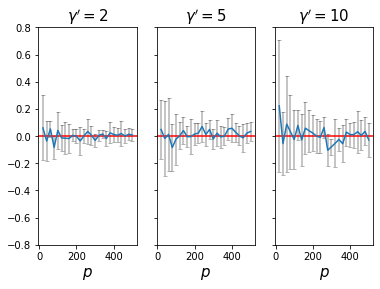}
        \caption{Linear model (M1)}
    \end{subfigure}
    \begin{subfigure}[t]{.48\textwidth}
        \centering
        \includegraphics[width=\hsize]{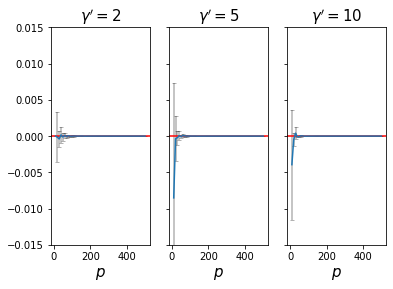}
        \caption{Nonlinear model (M2)}
    \end{subfigure}
    \caption{Value of the term \eqref{eq:cross1} with respect to $p$. The mean (blue curve) and standard deviation (grey bar) from the $10$ replications are plotted. 
%    with $c = 0.1, 0.2, 0.5$ or equivalently $\gamma = 10, 5, 2$. The standard errors within each replication to $p$ is plotted.
    }
    \label{fig:cross0}
\end{figure}

\section{Conclusion and Future Research}

We analyze the asymptotic risk of the penalized maximum likelihood estimator in the large model limit.
Our theorems describe errors using a wide range of models, including deep neural networks, whereas previous studies have only analyzed linear-in-feature models. 
The derived asymptotic risk bounds can account for both the double descent and the regularized risk, depending on the setting of the penalty term. 
We further establish regularity conditions for likelihood models to follow our theorem and demonstrate that parallel deep neural networks and ensemble learning satisfy those conditions.

Our results may contribute to analysis on modern large-scale models according to the regular conditions of asymptotic risk analysis.
Until now, asymptotic risk analysis has only been able to analyze simple linear-in-feature models, and thus could only provide abstract implications for actual complex models and methods such as deep learning.
However, our study establish a way to analyze such complex models, hence we can rigorously analyze asymptotic risks of various modern statistics and machine learning methods.
For example, it may be possible to explore and design an architecture of deep neural networks that are suitable for the double descent phenomenon.

One fruitful future direction of this study is handling the singularity of models, i.e. when the limit of the Fisher information matrix has zero eigenvalues.
In large-scale models, the Fisher information matrices are sometimes singular, thus it is not easy to apply our theory in a straightforward way.
To solve this problem, we need to introduce a specific theory that can handle the singularity of models.
It is challenging and an important future research direction.

%\newpage 

\appendix

\section{Calculation on the Extended Stieltjes transform} \label{sec:stieltjes}
We provide a simple calculation on the extended Stieltjes transform term $\lim_{a \to 0} h_{\gamma,\overline{\tau}}(a) $, and provide a proof of Proposition \ref{prop:double_descent}, Example \ref{example:point_spectrum}, and some calculation for Figure \ref{fig:reguralized}.
Although we consider the limit as $p$ and $n$ grows, we omit $p, n \to \infty$ and just write $\lim_{p/n \to \gamma}$ for simplicity.

\subsection{\texorpdfstring{$\overline{\tau}=0$}{LG} case}
We provide the following proof.
\begin{proof}[Proof of Proposition \ref{prop:double_descent}]
Since we can set $\tau = 0$, we have $h^{(0)}(a) = \frac{1}{a}$.
Using this form, we obtain a formula of $h(a) = h_{\gamma,\overline{\tau}}(a)$ as
\begin{align}
    h(a) = - \frac{1}{a - \frac{1}{\gamma(1 + h(a))}}. \label{eq:ha}
\end{align}

For $\gamma \in (0,1)$ case, this form \eqref{eq:ha} and some calculation yields
%\begin{align*}
    $-a h(-a)^2 + (1-1/\gamma + 1) h(-1) = 0$.
%\end{align*}
We set $a \searrow 0$ and obtain $h(0) = \gamma/(1-\gamma)$.

For $\gamma \in (1,\infty)$ case, we solve the quadratic equation \eqref{eq:ha} and obtain
\begin{align*}
    h(a) = \frac{-(a-1/\gamma + 1) + \sqrt{(a - 1/\gamma + 1)^2 - 4a}}{2a}.
\end{align*}
By the l'Holital's rule, we obtain
\begin{align*}
    \lim_{a \to 0} h(a) = \lim_{a \to 0} \frac{-1 + \frac{2(a - 1/\gamma + 1) - 4}{2 \sqrt{(a - 1/\gamma + 1)^2 - 4a}}}{2} = \frac{1}{\gamma - 1}.
\end{align*}
Then, we obtain the statement.
\end{proof}

\subsection{\texorpdfstring{$\overline{\tau}>0$}{LG} case}
We consider the parameter setting and derive several formulation with various spectral measures $\xi$ by $\lim_{p \to \infty} F_p^*$.

\textbf{Dirac measure case}:
We firstly consider that $\xi$ is a Dirac measure at $\check{\lambda}$ and derive the formulation in Example \ref{example:point_spectrum}.
By the setting, we obtain
\begin{align*}
    h_{\gamma, \overline{\tau}}^{(0)}(a) = \frac{1}{ \overline{\tau} \check{\lambda}/\gamma -a}.
\end{align*}
Using this form, we obtain 
\begin{align*}
    h(a) = h_{\gamma, \overline{\tau}}(a) = \frac{1}{{\overline{\tau}\check{\lambda}}/{\gamma} - (a - 1/(\gamma(1+h(a)))}.
\end{align*}
We solve this equation and achieve
\begin{align*}
    h(a) = \frac{-(\overline{\tau}\check{\lambda}/\gamma - a + 1/\gamma - 1) + \sqrt{(\overline{\tau}\check{\lambda}/\gamma - a + 1/\gamma - 1)^2 + 4(\overline{\tau}\check{\lambda}/\gamma - a)}}{2 (\overline{\tau}\check{\lambda}/\gamma - a)},
\end{align*}
%\begin{align*}
%    \lim_{a \to 0} h_{\gamma,\overline{\tau}}(a) &= \frac{-(\check{\lambda}\overline{\tau}/\gamma + 1/\gamma - 1) + \sqrt{(\check{\lambda}\overline{\tau}/\gamma + 1/\gamma - 1)^2 + 4 \check{\lambda}\overline{\tau}/\gamma}}{2 \check{\lambda}\overline{\tau}/\gamma}.
%\end{align*}
and the l'Hopital's rule generates the statement.

\textbf{Uniform measure case}: 
We consider that $\xi$ is a uniform measure on $[\underline{c}, \overline{c}]$ with $\underline{c}>\overline{c} > 0$.
Then, a simple calculation gives
\begin{align*}
     h_{\gamma, \overline{\tau}}^{(0)}(a) = \frac{\gamma}{\overline{\tau} (\overline{c} - \underline
     c)} \log \left(1 + \frac{\overline{\tau} (\overline{c} - \underline
     c)/\gamma}{\overline{\tau}\underline{c}/\gamma + a} \right).
\end{align*}
Using the form of $h(a) = h_{\gamma,\overline{\tau}}(a)$, we obtain the following equation with $a=0$:
\begin{align*}
    \exp \left( \frac{\tau (\overline{c} - \underline
     c)}{\gamma}h(0) \right) = 1 + \frac{\overline{\tau}(\overline{c} - \underline
     c)}{\overline{\tau} \underline{c} + (1+h(0))^{-1}}.
\end{align*}
We can find a root of this equation and achieve the middle panel in Figure \ref{fig:reguralized}.

%\begin{example}[Uniform Spectrum] \label{example:uniform}
%    Consider the setting in Theorem \ref{thm:variance} and set $\overline{\tau} > 0$.
   
%    Then, we obtain $\lim_{a \to 0} h_{\gamma,\overline{\tau}}(a) = \omega$ where $\omega$ is a root of
%    \begin{align*}
%        \exp(\tau \Delta \omega / \gamma) = 1 + \tau \Delta  / (\gamma \tau \underline{c} + \gamma (1+b)^{-1})
%    \end{align*}
%\end{example}

\textbf{Semi-circular measure case}:
We consider that $\xi$ is a semi-circular measure with its center $c > 1$ and radius $1$, that is, $\xi(A) = \frac{2}{\pi} \int_A \sqrt{1-(x-c)^2} 1_{[c-1,c+1]}(x) dx$ for a Borel subset $A \subset \R$.
With this setting, we prepare a shifted semi-circular measure $\xi'(A) = \frac{2}{\pi} \int_A \sqrt{1-x^2} 1_{[-1,1]}(x) dx$ and obtain
\begin{align*}
    h^{(0)}_{\gamma, \overline{\tau}} (a) = \int_\R \frac{1}{\overline{\tau}t / \gamma + \overline{\tau}c/\gamma - a} d \xi'(t) = 2 \overline{\tau}c/\gamma - 2a - 2\sqrt{(\overline{\tau}c / \gamma - a)^2 - 1}.
\end{align*}
The last inequality follows the Stieltjes transform for $\xi'$ (e.g., see \cite{borot2017introduction}).
We substitute it into the definition of $h(a) =h_{\gamma,\overline{\tau}}(a) $ with $a=0$, and obtain the following equation:
\begin{align*}
    h(0)^3\frac{1}{4} + h(0)^2 \frac{\gamma - \overline{\tau} c}{4\gamma} + h(0)\frac{1 + \gamma - \overline{\tau}c}{\gamma} + 1 = 0.
\end{align*}
We find its root $h(0)$ and plot it in the right panel in Figure \ref{fig:reguralized}.

\section{Proof of Corollary \ref{cor:reg}}
\begin{proof}[Proof of Corollary \ref{cor:reg}]
We calculate the Taylor expansion of $g_\theta$ around $\theta^*$ as
\begin{align*}
    g_{\hat\theta}(x) - g_{\theta^*}(x) = \partial_\theta^\top g_{\theta^*}(x) (\hat\theta - \theta^*) + (\hat\theta - \theta^*)^\top \partial_\theta \partial_\theta^\top g_{\check\theta}(x)(\hat\theta - \theta^*),
\end{align*}
where $\check\theta_j$ lies in the interval between $\theta^*_j$ and $\hat\theta_j$.
Owing to the nonlinear model with truncated normal noise, we see $\Ep[\partial_\theta g_{\theta^*}(x) \partial_\theta^\top g_{\theta^*}(x)] = \Ep[\partial_\theta \log f_{\theta^*}(x) \partial_\theta^\top \log f_{\theta^*}(x) / \epsilon^2] = F_p^* / \Ep[\epsilon^2]$ holds.
Therefore, we obtain
\begin{align*}
    \Ep\qty[|{\partial_\theta^\top g_{\theta^*}(x)(\hat\theta - \theta^*)}|^2 \mid (x_1, y_1), \dots, (x_n, y_n)] = \frac{1}{\Ep[\epsilon^2]}(\hat\theta - \theta^*)^\top F_p^* (\hat\theta - \theta^*).
\end{align*}
By triangle inequality,
\begin{align*}
    \|{g_{\hat\theta} - g_{\theta^*}}\|_{L^2}^2 &\leq \frac{2}{\Ep[\epsilon^2]}\|\hat{\theta} - \theta^*\|_{F^*_p}^2 + 32r^4 \sup_x \|{\partial_{\theta} \partial_{\theta}^\top g_{\theta^*} (x)}\|_{op}^2.
\end{align*}
Combined with Theorem \ref{thm:main} and assumption $\sup_x \|{\partial_{\theta} \partial_{\theta}^\top g_{\theta^*} (x)}\|_{op}^2 = O_\Pp(1)$, we obtain the desired result.
\end{proof}

\section{Proof of Lemma \ref{lem:independence}}

\begin{proof}[Proof of Lemma \ref{lem:independence}]
    We prove the lemma by showing the convergence in moments. 
    Under Assumption \ref{asmp:cross_variance}, the second moment of $(1/n^2) \sum_{i,j=1,i\neq j}^n J_i^\top S_{n,p} J_j$ is calculated as follows:
    \begin{align}
        &\frac{1}{n^4}\sum_{i,j=1,i\neq j}^n \sum_{u,v=1,u\neq v}^n \Ep[J_i^\top S_{n,p} J_j J_u^\top S_{n,p} J_v]\notag  \\
        &= \frac{1}{n^4}\sum_{i,j=1,i\neq j}^n \sum_{u,v=1,u\neq v}^n \Ep[\Ep[J_i^\top S_{n,p} J_j J_u^\top S_{n,p} J_v | J_i, J_j, J_u, J_v] ]\notag  \\
        &= \frac{1}{n^4}\sum_{i,j=1,i\neq j}^n \sum_{u,v=1,u\neq v}^n \Ep[J_i^\top L_{n,p} J_j J_u^\top L_{n,p} J_v + U_{p, i, j} ]\notag  \\
        &= \frac{1}{n^4} \sum_{i,j=1,i\neq j}^n \tr(\Ep[J_i J_i^\top L_{n,p} J_j J_j^\top L_{n,p}]) + o(1).
    \end{align}
    Because $\Ep[J_i J_i^\top] = F_p^*$, we obtain
    \begin{align*}
        \Var\qty(\frac{1}{n^2} \sum_{i,j=1,i\neq j}^n J_i^\top S_{n,p} J_j) &\leq \frac{n(n-1)}{n^4} \tr\qty(L_{n,p}^2 {F_p^*}^2) + o(1)\\
        &\leq \frac{n(n-1)}{n^4} \overline{\lambda}^2 \tr\qty(L_{n,p}^2) + o(1)\\
        &= o(1).
    \end{align*}
    The last equality holds since $\tr(L_{n,p}^2) = o(p^2)$ by assumption.
    The conclusion follows from the fact that convergence in moment implies convergence in probability.
\end{proof}

\section{Proof of Main Results}

We define a normalized vector $\Tilde{J}_i :=(1/\sqrt{p}) {F^*_p}^{-1/2} \hat J_i $, and a $p \times n$ matrix $\Tilde{J}_{n,p} := (\tilde{J}_1, \dots, \tilde{J}_n) = (1/\sqrt{p}) {F^*_p}^{-1/2} \hat J_{n, p}$. 
Note that $\tilde{J}_i$ is an independent and identically distributed random vector with mean $\mathbb{E}[\tilde{J}_i] = O_{p \times n}$ and covariance $\Cov(\tilde{J}_i) = I_p / p$.

We start the proof from the basis decomposition \eqref{eq:mle_diff}.
We restate it as
\begin{align*}
     \theta^* - \hat\theta = V_0(V_1 + V_2 + B_0), %\label{eq:basic2}
\end{align*}
where the terms are recalled as
\begin{align*}
    &V_0 = (\partial_\theta \partial_\theta^\top M_n(\theta^*) + \tau I_p)^{-1} (\hat{F}_{n,p} + \tau I_p),\\
    &V_1 = (\hat{F}_{n,p} + \tau I_p)^{-1}\partial_\theta M_n(\theta^*), \\
    &V_2 = (\hat{F}_{n,p} + \tau I_p)^{-1}R,\\
    &B_0 = (\hat{F}_{n,p} + \tau I_p)^{-1}\tau \theta^*.
\end{align*}

\subsection{Bound Individual Terms}

As the first step, we provide the following lemma to analyze the effect of $V_0$.
\begin{lemma}\label{lem:bound_v0}
    Suppose Assumption \ref{asmp:fisher_residual} holds.
    Then, for any $\tau > 0$, the following inequality holds as $n,p \to \infty$:
    \begin{align*}
        \left\|F_p^{* 1/2} V_0 F_p^{* -1/2}\right\|_{op} \leq 1 + \sqrt{\lambda^*}.
    \end{align*}
\end{lemma}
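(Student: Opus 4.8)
The plan is to bound $\|F_p^{*1/2} V_0 F_p^{*-1/2}\|_{op}$ by isolating the Fisher-residual contribution and then controlling it with the matrix Bernstein inequality under Assumption \ref{asmp:fisher_residual}. First I would write $\partial_\theta\partial_\theta^\top M_n(\theta^*) = \hat F_{n,p} + \hat W_{n,p}$ with $\hat W_{n,p} = n^{-1}\sum_{i=1}^n w(z_i)$, so that
\begin{align*}
    V_0 = (\hat F_{n,p} + \hat W_{n,p} + \tau I_p)^{-1}(\hat F_{n,p} + \tau I_p) = I_p - (\hat F_{n,p} + \hat W_{n,p} + \tau I_p)^{-1}\hat W_{n,p}.
\end{align*}
Conjugating by $F_p^{*1/2}$ and applying the triangle inequality gives
\begin{align*}
    \|F_p^{*1/2} V_0 F_p^{*-1/2}\|_{op} \leq 1 + \|F_p^{*1/2}(\hat F_{n,p} + \hat W_{n,p} + \tau I_p)^{-1}\hat W_{n,p} F_p^{*-1/2}\|_{op},
\end{align*}
so the task reduces to showing the second term is at most $\sqrt{\lambda^*} = \sqrt{\overline{\lambda}/\underline{\lambda}}$.

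Next I would control $\|\hat W_{n,p}\|_{op}$. Since $\Ep[\hat W_{n,p}] = 0$ by the standard log-likelihood identity (Lemma 5.3 in \cite{lehmann2006theory}), each summand $w(z_i)/n$ is bounded in operator norm by $\kappa_p/n$ and the sum of the matrix variances is bounded by $\nu_p^2/n$, so the matrix Bernstein inequality yields
\begin{align*}
    \Pp\qty(\|\hat W_{n,p}\|_{op} \geq t) \leq 2p\exp\qty(\frac{-nt^2/2}{\nu_p^2 + \kappa_p t/3}).
\end{align*}
Taking $t = \tau/2$ and invoking the hypothesis $\tau^2 \succ s_p$ (which forces $n\tau^2/(\nu_p^2 + \kappa_p\tau) \succ \log p$), the right side tends to $0$, so with probability approaching $1$ we have $\|\hat W_{n,p}\|_{op} \leq \tau/2$, hence $\hat F_{n,p} + \hat W_{n,p} + \tau I_p \succeq \hat F_{n,p} + (\tau/2) I_p \succeq (\tau/2) I_p$ is positive definite. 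Actually, to get a clean bound independent of $\tau$, I would instead argue directly: on this event $\hat F_{n,p} + \hat W_{n,p} + \tau I_p \succeq (1/2)(\hat F_{n,p} + \tau I_p)$ as positive semidefinite matrices (using $\hat W_{n,p} \succeq -(\tau/2) I_p \succeq -(1/2)(\hat F_{n,p} + \tau I_p)$), so its inverse satisfies $(\hat F_{n,p} + \hat W_{n,p} + \tau I_p)^{-1} \preceq 2(\hat F_{n,p} + \tau I_p)^{-1}$.

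Finally I would assemble the pieces. Write $A := \hat F_{n,p} + \tau I_p$ and bound
\begin{align*}
    \|F_p^{*1/2}(A + \hat W_{n,p})^{-1}\hat W_{n,p} F_p^{*-1/2}\|_{op} \leq \|F_p^{*1/2}\|_{op}\,\|(A+\hat W_{n,p})^{-1}\|_{op}\,\|\hat W_{n,p}\|_{op}\,\|F_p^{*-1/2}\|_{op},
\end{align*}
and use $\|F_p^{*1/2}\|_{op} \leq \sqrt{\overline{\lambda}}$, $\|F_p^{*-1/2}\|_{op} \leq 1/\sqrt{\underline{\lambda}}$ from Assumption \ref{asmp:basic}(iv), together with $\|(A+\hat W_{n,p})^{-1}\|_{op} \leq 2\|A^{-1}\|_{op} \leq 2/\tau$ and $\|\hat W_{n,p}\|_{op} \leq \tau/2$ on the high-probability event; this gives a bound of $\sqrt{\overline{\lambda}/\underline{\lambda}} = \sqrt{\lambda^*}$, completing the estimate. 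The main obstacle is the matrix Bernstein step: one must verify that the choice $t=\tau/2$ is legitimate, i.e. that $\tau^2 \succ s_p$ is exactly the condition making $2p\exp(-nt^2/2/(\nu_p^2+\kappa_p t/3)) \to 0$, which requires tracking how $s_p = (\nu_p^2\log p/p) \vee (\kappa_p\log p/p)^2 \vee \cdots$ interacts with $p/n \to \gamma$; the rest is routine operator-norm bookkeeping.
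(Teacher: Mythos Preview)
Your proposal is correct and follows essentially the same route as the paper: write $V_0 = I - (\hat F_{n,p}+\hat W_{n,p}+\tau I_p)^{-1}\hat W_{n,p}$, apply the matrix Bernstein inequality with $t=\tau/2$ to get $\|\hat W_{n,p}\|_{op}\le \tau/2$ with probability $\to 1$ (using $\tau^2\succ s_p$), and then split the remaining operator norm as $\|F_p^{*1/2}\|_{op}\,\|(\,\cdot\,)^{-1}\|_{op}\,\|\hat W_{n,p}\|_{op}\,\|F_p^{*-1/2}\|_{op}\le \sqrt{\overline\lambda}\cdot(2/\tau)\cdot(\tau/2)\cdot(1/\sqrt{\underline\lambda})=\sqrt{\lambda^*}$. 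The paper bounds $\|(\hat F_{n,p}+\hat W_{n,p}+\tau I_p)^{-1}\|_{op}\le 2/\tau$ directly via $\lambda_{\min}\ge \tau/2$ rather than through your intermediate operator ordering $(A+\hat W_{n,p})^{-1}\preceq 2A^{-1}$, but this is a cosmetic difference and you end up using the same $2/\tau$ bound anyway.
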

\begin{proof}[Proof of Lemma \ref{lem:bound_v0}]
As preparation, we recall the definition of the Fisher residual $\hat W_{n, p}$ as
\begin{align*}
    \hat W_{n, p} & = \frac{1}{n}\sum_{i=1}^m w(z_i)\\
    &= \frac{1}{n} \sum {\frac{1}{f_{\theta^*}(z_i)^2} \qty(\partial_\theta f_{\theta^*}(z_i)) \qty(\partial_\theta f_{\theta^*}(z_i))^\top} - \frac{1}{n}\sum {\frac{1}{f_{\theta^*}(z_i)} \partial_\theta \partial_\theta^\top f_{\theta^*}(z_i)} \\
    &=\hat F_{n, p}  -  \partial_\theta \partial_\theta^\top M_n(\theta^*) .
\end{align*}

Recall $V_0 = ( \hat F_{n, p} - \hat W_{n, p} + \tau I_p)^{-1} (\hat{F}_{n,p} + \tau I_p)$. We bound the following norm as
\begin{align*}
    \left\|F_p^{* 1/2} V_0  F_p^{* -1/2}\right\|_{op} &= \norm{{F^*_p}^{1/2} \qty( \hat F_{n, p} - \hat W_{n, p} + \tau I_p)^{-1} \qty( \hat{F}_{n,p} + \tau I_p) F_p^{* -1/2}}_{op}\\
    &\leq 1 + \underbrace{\norm{{F^*_p}^{1/2} \qty(\hat F_{n, p} - \hat W_{n, p} + \tau I_p)^{-1} \hat W_{n, p} {F^*_p}^{-1/2}}_{op}}_{=: V_{0,1}}.
\end{align*}
Here, we utilize an inequality for square matrices $A$ and $B$ as $\|{A^{-1} B}\|_{op} \leq 1 + \|{A^{-1} (B - A)}\|_{op}$, with substituting $A \leftarrow ( \hat F_{n, p} - \hat W_{n, p} + \tau I_p) {F^*_p}^{-1/2}$ and $B \leftarrow ( \hat F_{n, p} + \tau I_p){F^*_p}^{-1/2}$.
 %$$\leq 1 + \|{A^{-1}}\|_{op} \|{B - A}\|_{op}$

To bound the term $V_{0,1}$, we have to show that $\hat F_{n, p} - \hat W_{n, p} + \tau I_p$ is positive definite matrix with some $\tau$ with probability approaching to $1$.
To this end, we employ the following matrix version of the Bernstein inequality and bound an operator norm of $\hat W_{n, p}$.
The following lemma is a slight extension of the results from \cite{tropp2015introduction}.

\begin{lemma}\label{lem:matrixbernstein}
    Let $X_i$ be a sequence of independent and identically distributed random Hermitian matrices with dimension $p$. Assume $\Ep[X_i] = 0$. Define $v^2 := \norm{\Ep[X_1^2]}_{op}$. Suppose 
    \begin{align*}
        \norm{\Ep[X_i^k]}_{op} \leq \frac{1}{2} k! v^2 {C_L}^{k-2}
    \end{align*}
    for some $C_L > 0$.% {\bc [Changed $L$ to $C_L$.]}
    Then,
    \begin{align*}
        \Pp\qty(\norm{\frac{1}{n} \sum_{i=1}^n X_i}_{op} \geq t) \leq 2 p \exp\qty(\frac{-nt^2/2}{v^2 + C_L t /3})
    \end{align*}
    holds for all $t \geq 0$.
\end{lemma}

We apply Lemma \ref{lem:matrixbernstein}  to $\hat W_{n, p} =n^{-1} \sum_{i=1}^n w(z_i)$ and obtain the following for any $t > 0$:
\begin{align*}
    P\qty(\norm{\hat W_{n, p}}_{op} \geq t) \leq 2p \exp\qty( \frac{-n t^2 / 2}{\nu_p^2 + \kappa_p t / 3} ) =: p_W(t).
\end{align*}
We substitute $t \leftarrow \tau / 2$ so that $ - \hat W_{n, p} + (\tau/2) I_p$ is positive semi-definite with probability at least 
\begin{equation}
     1-p_W(\tau/2) = 1 - 2p \exp\qty( \frac{-n \tau^2 / 8}{\nu_p^2 + \kappa_p \tau / 6} ).\label{eq:prob1}
\end{equation}
%This choice of $\tau$ is to simply bound the the term (T1). %{\bc [This size of $\tau$ should be reflected on the THeorem 2.]}
%Since $\tau \to \bar\tau$, it can be any value as long as it ensures $ - \hat W_{n, p} + (\tau/2) I_p$ is positive semi-definite.
%{\bc [I put a minus on $\hat W_{n, p}$. Is it right?]}

On the event with probability $1-p_W(\tau/2)$, we obtain that $\hat F_{n, p} - \hat W_{n,p} + \tau I_p$ is positive definite because $\hat F_{n,p}$ is defined as the sum of positive semi-definite matrices.
Recall that $\hat F_{n, p} = (1/n) \sum_{i=1}^n J_i J_i^\top$ and $J_i J_i^\top$ is positive semi-definite by the quadratic form, then $\hat F_{n, p}$ is always a positive semi-definite matrix.
%{\bc
%Hence it holds that
%\begin{align*}
%    \norm{{F^*_p}^{1/2} \qty(\hat F_{n, p} - \hat W_{n, p} + \tau I_p)^{-1} \hat W_{n, p} {F^*_p}^{-1/2}}_{op} = \norm{(\hat F_{n, p} - \hat W_{n, p} + \tau I_p)^{-1} \hat W_{n, p}}_{op},
%\end{align*}
%since for positive definite matrices $A$ and $B$, $\norm{AB}_{op} = \norm{BA}_{op}$.
% [I think $\hat{W}_{n,p}$ is not always positive definite, so this equality does not hold.]}
%{\bc [Mention the positive definiteness of $\hat{F}$ by the inner product form.]}

%{\rc [$\hat{F} = n^{-1} \sum_i f_i f_i^T$]}
%$x^\top f_i f_i x = (x^\top f_i)^2 \geq 0$
 
We are now ready to bound (T1) by using the positive definiteness.
\if0
Note that $\norm{ABA^{-1}}_{op} \leq \norm{B}$ holds for any symmetric positive definite matrix $A$.
To see this, let $A = P \Lambda P^\top$ be the orthogonal decomposition of $A$. Then,
\begin{align*}
    \norm{ABA^{-1}}_{op} &= \norm{P \Lambda P^\top B P \Lambda^{-1} P^\top}_{op}\\
    &\leq \norm{P}_{op} \norm{\Lambda P^\top B P \Lambda^{-1}}_{op} \norm{P^\top}_{op}\\
    &= \norm{\Lambda P^\top B P \Lambda^{-1}}_{op}\\
    &= \norm{P^\top B P}_{op}\\
    &\leq \norm{B}_{op},
\end{align*}
where we used $\Lambda P^\top B P \Lambda^{-1} = P^\top B P$ for diagonal matrix $\Lambda$. {\rc Not TRUE}
\fi
We utilize the decomposition of an operator norm and obtain the following with probability at least  $1-p_W(\tau/2)$:
\begin{align*}
%    &\norm{{F^*_p}^{1/2} (\hat F_{n, p} - \hat W_{n, p} + \tau I_p)^{-1} \hat W_{n, p}{F^*_p}^{-1/2}}_{op}\\
    V_{0,1}&\leq \norm{{F^*_p}^{1/2}}_{op} \norm{(\hat F_{n, p} - \hat W_{n, p} + \tau I_p)^{-1}}_{op}\norm{\hat W_{n, p}}_{op} \norm{{F^*_p}^{-1/2}}_{op}\\
    &\leq \sqrt{\frac{\overline{\lambda}}{\underline{\lambda}}} \qty(\lambda_{\min}(\hat F_{n, p} + (\tau/2) I_p - \hat W_{n, p} + (\tau/2) I_p))^{-1} \tau / 2\\
    &\leq \sqrt{\lambda^*}
\end{align*}
where the last inequality follows from $\lambda_{\min}(\hat F_{n, p} + (\tau/2) I_p - \hat W_{n, p} + (\tau/2) I_p) \geq \tau / 2$.
%{\bc [Updated this inequality. Please check.]}
%where in the second last inequality we used Weyl's inequality that $\lambda_{\min}(A + B) \geq \lambda_{\min}(A) + \lambda_{\min}(B)$.

Finally, we derive the limit of the probability $1-p_W(\tau/2)$ as $n,p \to \infty$.
From Assumption \ref{asmp:fisher_residual} and $\tau^2 \succ s_p$, we have 
\begin{align*}
    \frac{n\tau^2}{\nu_p^2 \log p} \to \infty \ \ \text{and} \ \  \frac{n\tau}{\kappa_p \log p} \to \infty,
\end{align*}
hence $p_W(\tau/2) = o(1/p)$. %{\bc [Updated the limit and $\tau$. PLease check.]}
Therefore, the probability \eqref{eq:prob1} is $1$ at the limit.
\end{proof}

\begin{lemma}\label{lem:bound_v1}
    Suppose Assumption \ref{asmp:basic} and \ref{asmp:cross_variance} holds.
    Then, there exists a variable $V_{1,3}$ such as
    \begin{align*}
        \|V_1\|_{F_p^*}^2 \leq V_{1,3} \to \lambda^* \lim_{a \to 0} h_{\gamma, \Bar{\tau}}(a),
    \end{align*}
    in probability, as $n,p \to \infty$.
\end{lemma}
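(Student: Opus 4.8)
The plan is to carry out the decomposition $\|V_1\|_{F_p^*}^2 = V_{1,1}+V_{1,2}$ from the proof outline: the cross term $V_{1,2}$ is killed by Assumption~\ref{asmp:cross_variance}, while the main term $V_{1,1}$ is reduced to a resolvent trace of a whitened, ridge--regularized sample covariance matrix, whose limit is supplied by the extended Marchenko--Pastur law, Lemma~\ref{lem:marchenko}.

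First I would make the decomposition explicit. Since $\partial_\theta M_n(\theta^*) = -n^{-1}\hat J_{n,p}\mathbf{1}_n$ with $\mathbf{1}_n = (1,\dots,1)^\top\in\R^n$, the identity $\|b\|_{F_p^*}^2 = \tr(F_p^{*1/2}bb^\top F_p^{*1/2})$ gives
\[
\|V_1\|_{F_p^*}^2 = \frac1{n^2}\tr\!\big(S_{n,p}\,\hat J_{n,p}\mathbf{1}_n\mathbf{1}_n^\top\hat J_{n,p}^\top\big),\qquad S_{n,p} = (\hat F_{n,p}+\tau I_p)^{-1}F_p^*(\hat F_{n,p}+\tau I_p)^{-1}.
\]
Splitting $\mathbf{1}_n\mathbf{1}_n^\top = I_n + (\mathbf{1}_n\mathbf{1}_n^\top - I_n)$ and using $\hat J_{n,p}\hat J_{n,p}^\top = n\hat F_{n,p}$, the diagonal part is $V_{1,1} = n^{-1}\tr(S_{n,p}\hat F_{n,p})$ and the off-diagonal part is $V_{1,2} = n^{-2}\sum_{i\ne j}J_i^\top S_{n,p}J_j$, which is exactly the quantity in \eqref{eq:cross0}; hence $V_{1,2}\to0$ in probability by Assumption~\ref{asmp:cross_variance}.

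For $V_{1,1} = n^{-1}\tr\!\big(F_p^*(\hat F_{n,p}+\tau I_p)^{-1}\hat F_{n,p}(\hat F_{n,p}+\tau I_p)^{-1}\big)$ I would whiten the score: set $u_i := (F_p^*)^{-1/2}J_i$, which by Assumption~\ref{asmp:basic}(ii) is i.i.d.\ with a log-concave density (an affine image of a log-concave vector is log-concave), $\E[u_i] = 0$, $\Cov(u_i) = I_p$, and write $\hat G_{n,p} := n^{-1}\sum_{i=1}^n u_iu_i^\top$, so that $\hat F_{n,p} = (F_p^*)^{1/2}\hat G_{n,p}(F_p^*)^{1/2}$ and $\hat F_{n,p}+\tau I_p = (F_p^*)^{1/2}\big(\hat G_{n,p}+\tau (F_p^*)^{-1}\big)(F_p^*)^{1/2}$. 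Substituting, conjugating out the $(F_p^*)^{\pm1/2}$ factors by cyclicity of the trace, and then bounding the remaining $F_p^*$-weights by $F_p^*\preceq\overline{\lambda} I_p$ and $(F_p^*)^{-1}\preceq\underline{\lambda}^{-1}I_p$ (the central matrix being positive semidefinite), I expect the bound
\[
V_{1,1}\ \le\ \frac{\lambda^*}{n}\,\tr\!\Big(\hat G_{n,p}\big(\hat G_{n,p}+\tau (F_p^*)^{-1}\big)^{-2}\Big)\ =:\ W_{n,p},\qquad \lambda^* = \overline{\lambda}/\underline{\lambda},
\]
which is exactly where the constant $\lambda^*$ of the statement enters. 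Now $\hat G_{n,p}$ is the sample covariance of $n$ i.i.d.\ isotropic log-concave vectors and $\tau(F_p^*)^{-1}$ is a non-random perturbation, so $\hat G_{n,p}+\tau(F_p^*)^{-1}$ is of the form to which Lemma~\ref{lem:marchenko} applies, with aspect ratio $c = \lim n/p = 1/\gamma$ and with $\nu$ the limiting spectral distribution of the perturbation, which is governed by $\xi$ and $\overline{\tau}$; since $\tr(\hat G_{n,p}(\hat G_{n,p}+\tau(F_p^*)^{-1})^{-2})$ is expressible through the resolvent of $\hat G_{n,p}+\tau(F_p^*)^{-1}$ and its $z$-derivative, Lemma~\ref{lem:marchenko} identifies its limit via the fixed-point equation $f(z) = f^{(0)}(z - c/(1+f(z)))$, which after the change of variables $z\leftrightarrow a$ coincides with the equation $h_{\gamma,\overline{\tau}}(a) = h^{(0)}_{\gamma,\overline{\tau}}(a - 1/(\gamma(1+h_{\gamma,\overline{\tau}}(a))))$ defining $h_{\gamma,\overline{\tau}}$ (with $f^{(0)}$ identified with $h^{(0)}_{\gamma,\overline{\tau}}$). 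Reading this off at the hard edge gives $W_{n,p}\to\lambda^*\lim_{a\to0}h_{\gamma,\overline{\tau}}(a)$ in probability, and then $V_{1,3} := W_{n,p} + V_{1,2}$ satisfies $\|V_1\|_{F_p^*}^2\le V_{1,3}$ and $V_{1,3}\to\lambda^*\lim_{a\to0}h_{\gamma,\overline{\tau}}(a)$, which is the claim.

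I expect the main obstacle to be the passage to the hard edge $a\to0$: Lemma~\ref{lem:marchenko} yields only weak convergence of the spectral measures (equivalently, convergence of the Stieltjes transform off the real axis), whereas $W_{n,p}$ is essentially that transform evaluated at the origin, continuous there only if the limiting measure carries no mass near $0$. When $\overline{\tau}>0$ this is automatic, since $\hat G_{n,p}+\tau(F_p^*)^{-1}\succeq (\tau/\overline{\lambda}) I_p$ keeps the spectrum bounded away from $0$ at the right scale; when $\overline{\tau}=0$ the regularization vanishes and one must instead use Assumption~\ref{asmp:basic}(v) — $\mathrm{rank}(\hat F_{n,p}) = \min\{n,p\}$ and $\lambda_{\min}^{+}(\hat F_{n,p})\ge\lambda_+$ — together with the factor $\hat G_{n,p}$, which annihilates the (near-)null directions of $\hat G_{n,p}+\tau(F_p^*)^{-1}$, to bound $W_{n,p}$ uniformly in small $\tau$ and thereby interchange $p\to\infty$ with $\tau\searrow0$ (here the standing requirement $\tau^2\succ s_p$ guarantees $\tau$ does not shrink too fast). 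A secondary, more routine task is checking the precise normalization conventions in Lemma~\ref{lem:marchenko} and carrying out the algebra that identifies the limiting Stieltjes transform with $h_{\gamma,\overline{\tau}}$.
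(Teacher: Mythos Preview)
Your decomposition, the handling of $V_{1,2}$ via Assumption~\ref{asmp:cross_variance}, and the whitening step are exactly what the paper does. Two points deserve comment.

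First, the algebra: after substituting $\hat F_{n,p}=(F_p^*)^{1/2}\hat G_{n,p}(F_p^*)^{1/2}$, the $(F_p^*)^{\pm1/2}$ factors cancel \emph{exactly} by cyclicity, giving the equality
\[
V_{1,1}=\frac{1}{n}\tr\!\Big(\hat G_{n,p}\big(\hat G_{n,p}+\tau(F_p^*)^{-1}\big)^{-2}\Big);
\]
there are no ``remaining $F_p^*$-weights'' to bound, so $\lambda^*$ does not enter here. Your inequality is still valid because $\lambda^*\ge1$, so nothing breaks, but this is not where the constant comes from in the paper. The paper next uses the identity $(A+B)^{-2}A=(A+B)^{-1}-(A+B)^{-2}B$ and $\tr((A+B)^{-2}B)\ge0$ to drop to the pure resolvent
\[
V_{1,1}\le\frac{1}{p}\tr\!\Big(\big(\tilde J_{n,p}\tilde J_{n,p}^\top+(n/p)\tau(F_p^*)^{-1}\big)^{-1}\Big)=\int\frac{1}{\lambda}\,d\mu_{n,p}(\lambda),
\]
which \emph{is} a functional of the spectral measure alone.

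Second, and more substantive: your plan to read off the limit of $W_{n,p}$ from ``the resolvent and its $z$-derivative'' has a gap. Because $\hat G_{n,p}$ and $(F_p^*)^{-1}$ do not commute, the trace $\tr(\hat G_{n,p}(\hat G_{n,p}+\tau(F_p^*)^{-1})^{-2})$ is not a functional of the empirical spectral distribution of $\hat G_{n,p}+\tau(F_p^*)^{-1}$; writing it as $\tr((\cdot)^{-1})-\tau\tr((F_p^*)^{-1}(\cdot)^{-2})$ leaves a weighted trace that Lemma~\ref{lem:marchenko} does not control. The paper's reduction to the pure resolvent sidesteps this entirely. For the hard edge it then uses exactly the device you anticipate: rewrite the bound as $\int(1/\lambda)\mone\{\lambda\ge c\}\,d\mu_{n,p}$ with $c=\lambda_+$ from Assumption~\ref{asmp:basic}(v) (or $c=\overline\tau/(4\gamma\overline\lambda)$ when $\overline\tau>0$), so that the test function is bounded and weak convergence applies. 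The factor $\lambda^*$ in the paper appears only in the regime $\gamma>1$, $\overline\tau=0$, via a separate manipulation (inserting $(F_p^*)^{-1}$ and subtracting the explicit null-space contribution $(p-n)((n/p)\tau)^{-1}$); in the other regimes the bound has constant $1$, and the $\lambda^*$ in the lemma's statement is recovered at the end simply from $\lambda^*\ge1$.
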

\begin{proof}[Proof of Lemma \ref{lem:bound_v1}]
We recall the decomposition in Section \ref{sec:proof_outline} as
\begin{align*}
    \|V_1\|_{F_p^*}^2 
     &=\tr\qty(( \hat F_{n, p} + \tau I_p)^{-1}{F^*_p} ( \hat F_{n, p} + \tau I_p)^{-1} n^{-2} \hat J_{n, p} \hat J_{n, p}^\top   ) \\
     & \quad +\tr\qty(( \hat F_{n, p} + \tau I_p)^{-1}{F^*_p} ( \hat F_{n, p} + \tau I_p)^{-1}(\partial_\theta M_n(\theta^*) \partial_\theta M_n(\theta^*)^\top - n^{-2} \hat J_{n, p} \hat J_{n, p}^\top)  ) \\
     &=: V_{1,1} + V_{1,2}.
\end{align*}
We bound each of them.

Our primary interest is to derive a bound for $V_{1,1}$ at the limit with $n,p \to \infty$ and $p/n \to \gamma$.
Recall $\Tilde{J}_{n,p}$ is defined as $\Tilde{J}_{n,p} = (\Tilde{J}_1, \dots, \Tilde{J}_n) = (1/\sqrt{p}) {F^*_p}^{-1/2} \hat J_{n, p}$.
Then, $\Tilde{J}_i$ is isotropic. i.e. $\mathbb{E}[\Tilde{J}_i] = 0$, $\Cov(\Tilde{J}_i) = I_p / p$.
We obtain ${F^*_p}^{-1/2} \hat J_{n, p} \hat J_{n, p}^\top {F^*_p}^{-1/2} = p \Tilde{J}_{n,p} \Tilde{J}_{n,p}^\top$.
To handle the limit, we apply the extended version of the Marchenko-Pastur law in Lemma \ref{lem:marchenko}.
Let $\mu_{n,p}$ be a spectral measure of $\Tilde{J}_{n,p} \Tilde{J}_{n,p}^\top + (n/p) \tau {F^*_p}^{-1}$, and
$\mu$ be a spectral measure of its limit as in Lemma \ref{lem:marchenko} with substitution $T_{n,p} \leftarrow \Tilde{J}_{n,p}$ and $E_p \leftarrow (n/p)\tau {F^*_p}^{-1}$.
%We consider the limit with the setting $p > n$, and omit its reverse case without explanation since its operation is almost the same.

When $\gamma < 1$, we assume $p < n$ holds without loss of generality.
We bound the term $V_{1,1}$ as
\begin{align}
    V_{1,1}&=\frac{1}{n}\tr\qty(\qty( \hat F_{n, p} + \tau I_p)^{-1} {F^*_p} \qty( \hat F_{n, p} + \tau I_p)^{-1} \frac{1}{n} \hat J_{n, p} \hat J_{n, p}^\top ) \notag \\
    &= \frac{1}{n}\tr\qty({F^*_p}^{1/2}\qty( \hat F_{n, p} + \tau I_p)^{-1} {F^*_p}^{1/2} {F^*_p}^{1/2} \qty( \hat F_{n, p} + \tau I_p)^{-1} {F^*_p}^{1/2} {F^*_p}^{-1/2}\frac{1}{n} \hat J_{n, p} \hat J_{n, p}^\top {F^*_p}^{-1/2}) \notag \\
    &= \frac{1}{p}\tr\qty(\qty(\Tilde{J}_{n,p} \Tilde{J}_{n,p}^\top + (n/p) \tau {F^*_p}^{-1})^{-2} \Tilde{J}_{n,p} \Tilde{J}_{n,p}^\top) \notag \\
    &= \frac{1}{p}\tr\qty(\qty(\Tilde{J}_{n,p} \Tilde{J}_{n,p}^\top + (n/p) \tau {F^*_p}^{-1})^{-1}) - \frac{1}{p}\tr\qty(\qty(\Tilde{J}_{n,p} \Tilde{J}_{n,p}^\top + (n/p) \tau {F^*_p}^{-1})^{-2} (n/p) \tau {F^*_p}^{-1}) \notag \\
    &\leq \frac{1}{p}\tr\qty(\qty(\Tilde{J}_{n,p} \Tilde{J}_{n,p}^\top + (n/p) \tau {F^*_p}^{-1})^{-1}) \notag \\
    &= \int \frac{1}{\lambda}\1_{\lambda \geq \lambda_+} \dd{\mu_{n,p}(\lambda)}, \notag
\end{align}
where the last equality follows from Assumption \ref{asmp:basic}.

When $\gamma > 1$, we assume $p > n$ holds.
For $\overline{\tau} > 0$, we can further assume $\tau > \overline{\tau}/2$ and $p/n < 2\gamma$.
By a similar argument as above
\begin{align}
    V_{1,1} &\leq \int \frac{1}{\lambda}\1_{\lambda \geq \overline{\tau}/ (4 \gamma \overline{\lambda}) } \dd{\mu_{n,p}(\lambda)}. \notag
\end{align}
For $\overline{\tau} = 0$, we bound $V_{1,1}$ as
\begin{align}
    V_{1,1} &= \frac{1}{p}\tr\qty(\qty(\Tilde{J}_{n,p} \Tilde{J}_{n,p}^\top + (n/p) \tau {F^*_p}^{-1})^{-2} \Tilde{J}_{n,p} \Tilde{J}_{n,p}^\top) \notag \\
    &\leq \frac{\overline{\lambda}}{p}\tr\qty({F_p^*}^{-1}\qty(\Tilde{J}_{n,p} \Tilde{J}_{n,p}^\top + (n/p) \tau {F_p^*}^{-1})^{-2} \Tilde{J}_{n,p} \Tilde{J}_{n,p}^\top) \notag \\
    &= \frac{\overline{\lambda}}{p}\tr\qty(\qty(\Tilde{J}_{n,p} \Tilde{J}_{n,p}^\top {F_p^*} + (n/p) \tau I_p)^{-1} ) - \frac{\overline{\lambda}}{p}\tr\qty(\qty(\Tilde{J}_{n,p} \Tilde{J}_{n,p}^\top {F_p^*} + (n/p) \tau I_p)^{-2}  (n/p) \tau I_p). \label{eq:key0}
\end{align}
The inequality follows from $\tr(AB) \leq \lambda_{\max}(A) \tr(B)$ for positive semi-definite matrices $A$ and $B$. % ${F_p^*}^{-1} \succeq \overline{\lambda}^{-1} I_p$ and $\tr(A B) \geq \tr(A' B)$ for $A \succeq A' \succeq O$ and $B \succeq O$.
Since $\Tilde{J}_{n,p} \Tilde{J}_{n,p}^\top {F_p^*}$ is rank $n$, we have
\begin{align*}
    \tr\qty(\qty(\Tilde{J}_{n,p} \Tilde{J}_{n,p}^\top {F_p^*} + (n/p) \tau I_p)^{-2}  (n/p) \tau  I_p)
    \geq (p - n) \qty(\frac{n}{p} \tau)^{-1}.
\end{align*}
%Using $\lambda_i(\Tilde{J}_{n,p} \Tilde{J}_{n,p}^\top {F^*_p}) \geq \overline{\lambda} \lambda_i(\Tilde{J}_{n,p} \Tilde{J}_{n,p}^\top) $,
Using this inequality and Condition (v) in Assumption \ref{asmp:basic}, we can evaluate \eqref{eq:key} as
\begin{align}
    \eqref{eq:key0} &\leq \frac{\overline{\lambda}}{p} \sum_{i=1}^p \frac{1}{\lambda_i(\Tilde{J}_{n,p} \Tilde{J}_{n,p}^\top {F^*_p} + (n/p)\tau I_p)} - \overline{\lambda} \frac{p - n}{p} \qty(\frac{n}{p} \tau)^{-1} \notag \\
    &= \frac{\overline{\lambda}}{p} \sum_{i=1}^p \frac{1}{\lambda_i(\Tilde{J}_{n,p} \Tilde{J}_{n,p}^\top {F^*_p} + (n/p)\tau I_p)} \1_{\lambda_i(\Tilde{J}_{n,p} \Tilde{J}_{n,p}^\top F_p^*) > 0} \notag\\
    &= \frac{\overline{\lambda}}{p} \sum_{i=1}^p \frac{1}{\lambda_i(\Tilde{J}_{n,p} \Tilde{J}_{n,p}^\top {F^*_p} + (n/p)\tau I_p)} \1_{\lambda_i(\Tilde{J}_{n,p} \Tilde{J}_{n,p}^\top) \geq \lambda_+} \notag\\
    &\leq \frac{\overline{\lambda}}{\underline{\lambda}} \frac{1}{p} \sum_{i=1}^p \frac{1}{\lambda_i(\Tilde{J}_{n,p} \Tilde{J}_{n,p}^\top + (n/p)\tau {F^*_p}^{-1} )} \1_{\lambda_i(\Tilde{J}_{n,p} \Tilde{J}_{n,p}^\top) \geq \lambda_+} \notag\\
    &= \lambda^* \int \frac{1}{\lambda}\1_{\lambda \geq \lambda_+} \dd{\mu_{n,p}(\lambda)}. \notag
\end{align}
In both cases, to bound $V_{1,1}$, it is important to bound an integral with the following form:
\begin{align}
    \int \frac{1}{\lambda}\1_{\lambda \geq c} \dd{\mu_{n,p}(\lambda)} \label{eq:key}
\end{align}
with a constant $c \in \{\lambda_+,\overline{\tau}/ (4 \gamma \overline{\lambda}) \} $ such as $c > 0$.
Because $\lambda \mapsto (1/\lambda)\1_{\lambda \geq c}$ is a bounded function, we can use Lemma \ref{lem:marchenko} to obtain
\begin{align}
    \lim_{p,n \to \infty, p/n \to \gamma} \int \frac{1}{\lambda}\1_{\lambda \geq c} \dd{\mu_{n,p}(\lambda)} = \int \frac{1}{\lambda}\1_{\lambda \geq c} \dd{\mu(\lambda)} \leq \int \frac{1}{\lambda} \dd{\mu(\lambda)}. \label{eq:limit_stil}
\end{align}
In summary, for the case with $\gamma > 1$ and $\overline{\tau} = 0$, we have $\limsup V_{1,1} \leq \lambda^*\int (1/\lambda) \dd{\mu(\lambda)}$.
For the other cases, $\limsup V_{1,1} \leq \int (1/\lambda) \dd{\mu(\lambda)}$ holds.
The bound in \eqref{eq:limit_stil} corresponds to the limit of the Stieltjes transform of $\mu$, that is, we obtain
\begin{align*}
    \int \frac{1}{\lambda} \dd{\mu(\lambda)} &= \lim_{a \to 0} h_{\gamma, \Bar{\tau}}(a).
\end{align*}
\if0

In the setting where $p > n$ and $\tau \to 0$, $\lim_{p/n \to \gamma} \mu_{n,p}([0, \epsilon]) > 0$ holds for any $\epsilon > 0$ and the integral may not converge to $\int 1/\lambda \dd{\mu}$. To guarantee the convergence, we need Assumption \ref{}. 
Since $\tr(AB) \geq \tr(A) \lambda_{\min}(B) = \tr(A) / \lambda_{\max}(B^{-1})$ for positive definite matrices $A$ and $B$, the term $V_{1,1}$ becomes
\begin{align}
    V_{1,1}&=\frac{1}{n}\tr\qty(\qty( \hat F_{n, p} + \tau I_p)^{-1} {F^*_p} \qty( \hat F_{n, p} + \tau I_p)^{-1} \frac{1}{n} \hat J_{n, p} \hat J_{n, p}^\top ) \notag \\
    &= \frac{1}{n}\tr\qty({F^*_p} \qty( \hat F_{n, p} + \tau I_p)^{-1} ) - \frac{\tau}{n}\tr\qty( {F^*_p} \qty( \hat F_{n, p} + \tau I_p)^{-2} ) \notag \\
    &\leq \frac{1}{n}\tr\qty({F^*_p} \qty( \hat F_{n, p} + \tau I_p)^{-1} ). \label{eq:key}
\end{align}
Our primary interest is to derive a bound for \eqref{eq:key} at the limit with $p \to \infty$.
We apply the extended version of the Marchenko-Pastur law in Lemma \ref{lem:marchenko}.
Recall $\Tilde{J}_{n,p}$ is defined as $\Tilde{J}_{n,p} = (\Tilde{J}_1, \dots, \Tilde{J}_n) = (1/\sqrt{p}) {F^*_p}^{-1/2} \hat J_{n, p}$.
Then, $\Tilde{J}_i$ is isotropic. i.e. $\mathbb{E}[\Tilde{J}_i] = 0$, $\Cov(\Tilde{J}_i) = I_p / p$.
We obtain ${F^*_p}^{-1/2} \hat J_{n, p} \hat J_{n, p}^\top {F^*_p}^{-1/2} = p \Tilde{J}_{n,p} \Tilde{J}_{n,p}^\top$.

%{\bc [Check $\Tilde{J}_i$ is defined before.]}
%Write all the eigenvalues of $A \in \R^{p\times p}$ as $\lambda_1(A), \lambda_2(A), \dots, \lambda_p(A)$.
%{\bc [The eigenvalue notation $\lambda$ should be defined at the end of Section 1.]}

%We apply Lemma \ref{lem:marchenko} to \eqref{eq:key}.
Finally, we asymptotically bound \eqref{eq:key}.
Let $\mu_{n,p}$ be a spectral measure of $\Tilde{J}_{n,p} \Tilde{J}_{n,p}^\top + (n/p) \tau {F^*_p}^{-1}$, and
$\mu$ be a spectral measure of its limit as stated in Lemma \ref{lem:marchenko} with substitution $T_{n,p} \leftarrow \Tilde{J}_{n,p}$ and $E_p \leftarrow (n/p)\tau {F^*_p}^{-1}$.
We evaluate it with Assumption \ref{asmp:basic} as
\begin{align}
    \eqref{eq:key} &=\frac{1}{n} \trace\qty({F^*_p}^{1/2} \qty(\frac{1}{n}\hat J_{n,p} \hat J_{n,p}^\top + \tau I_p)^{-1} {F^*_p}^{1/2}) \notag \\
    &= \frac{1}{p} \trace\qty(\qty(\Tilde{J}_{n,p} \Tilde{J}_{n,p}^\top + (n/p) \tau {F^*_p}^{-1})^{-1}) \notag \\ % \label{eq:trace_final}
    &= \frac{1}{p} \sum_{i=1}^p \frac{1}{\lambda_i(\Tilde{J}_{n,p} \Tilde{J}_{n,p}^\top + (n/p)\tau {F^*_p}^{-1})} \notag \\
    &= \int \frac{1}{\lambda} \dd{\mu_{n,p}(\lambda)} \notag.
\end{align}
In the setting where $p > n$ and $\tau \to 0$, $\lim_{p/n \to \gamma} \mu_{n,p}([0, \epsilon]) > 0$ holds for any $\epsilon > 0$ and the integral may not converge to $\int 1/\lambda \dd{\mu}$. To guarantee the convergence, we need Assumption \ref{}. 
{\bf Additional Assumption: We assume the convergence rate of $\mu_{n, p}$ around $0$. Suppose there exists some $\eta > 0$ such that $\mu_{n,p}([\lambda_{\min}, \lambda_{\min} + \eta])/\lambda_{\min} \to 0$ in probability holds.}
{\rc This is not good because this implies $\mu_{n,p}((0, \eta]) \to 0$.}
\fi
\if0
\begin{align}
    \limsup_{p/n\to\gamma} \int \frac{1}{\lambda} \dd{\mu_{n,p}(\lambda)} \to \int \frac{1}{\lambda} \dd{\mu(\lambda)}.\label{eq:limit_stil}
\end{align}
\fi
\if0
Here for notational simplicity, write $\lambda_{\min} = \lambda_{\min}(\Tilde{J}_{n,p} \Tilde{J}_{n,p}^\top + (n/p)\tau {F^*_p}^{-1})$. %When $p/n \to \gamma > 1$, we can assume $p > n$ without loss of generality. 
%Note that $\lambda_{\min} = (n/p) \tau / \lambda_{\max}(F_p^*)$ when $p > n$ because $\Tilde{J}_{n,p} \Tilde{J}_{n,p}^\top$ is rank deficient. Then,
Observe that

\begin{align*}
    \int \frac{1}{\lambda} \dd{\mu_{n,p}} &= \int_{\lambda_{\min}}^{\lambda_{\min}+\eta} \frac{1}{\lambda} \dd{\mu_{n,p}} + \int_{\lambda_{\min}+\eta}^{\infty} \frac{1}{\lambda} \dd{\mu_{n,p}}\\
    &< \frac{\mu_n([\lambda_{\min}, \lambda_{\min} + \eta])}{\lambda_{\min}} + \int_{\eta}^{\infty} \frac{1}{\lambda} \dd{\mu_{n,p}},
\end{align*}
Combined with Assumption ***, this yields
\begin{align*}
    \limsup_{p/n \to \infty} \int \frac{1}{\lambda} \dd{\mu_{n,p}} &\leq \limsup_{p/n \to \infty} \int_{\eta}^{\infty} \frac{1}{\lambda} \dd{\mu_{n,p}}.
\end{align*}
Next we show $\lim_{p/n \to \infty} \int_{\eta}^\infty 1/\lambda \dd{\mu_{n,p}} = \int_{\eta}^\infty 1/\lambda \dd{\mu}$.
Let $q_m$ be a function defined as 
\begin{align*}
    q_m(\lambda) = \sum_{j=1}^{2^m-1} \frac{j \eta}{2^m} \1_{[2^m/((j+1)\eta), 2^m/(j\eta)]}(\lambda).
\end{align*}
Then it is easily shown that $q_m(\lambda) \uparrow 1/\lambda$ on $[\eta, \infty)$ and $\sup_{\lambda \in [\eta, \infty)} \abs{q_m(\lambda) - 1/\lambda} \leq 1/2^m$. Thus
\begin{align*}
    \abs{ \int_\eta^\infty \frac{1}{\lambda} \dd{\mu} - \int_\eta^\infty \frac{1}{\lambda} \dd{\mu_{n,p}} }
    &\leq \abs{ \int_\eta^\infty \frac{1}{\lambda} \dd{\mu} - \int_\eta^\infty q_m(\lambda) \dd{\mu} }\\
    &\quad+ \abs{ \int_\eta^\infty q_m(\lambda) \dd{\mu} - \int_\eta^\infty q_m(\lambda) \dd{\mu_{n,p}} }\\
    &\quad+ \abs{ \int_\eta^\infty q_m(\lambda) \dd{\mu_{n,p}} - \int_\eta^\infty \frac{1}{\lambda} \dd{\mu_{n,p}} }.
\end{align*}
Since $\mu_{n,p}$ and $\mu$ are probability measures, we can bound the first and third term in the right hand side by $1/2^m$ for any $n$ and $p$. Since $q_m$ is a step function, the second term converges to $0$ by Lemma \ref{lem:marchenko} for any $m$. In summary
\begin{align*}
    \limsup_{p/n \to \gamma} \abs{ \int_\eta^\infty \frac{1}{\lambda} \dd{\mu} - \int_\eta^\infty \frac{1}{\lambda} \dd{\mu_{n,p}} } 
    &\leq \frac{1}{2^{m-1}}.
\end{align*}
Since $m$ is arbitrary, this shows $\int_\eta^\infty 1/\lambda \dd{\mu_{n,p}} \to \int_\eta^\infty 1/\lambda \dd{\mu}$.Finally we obtain
\begin{align}
    \limsup_{p/n \to \gamma} \int \frac{1}{\lambda} \dd{\mu_{n,p}} &\leq \lim_{p/n \to \gamma} \int_\eta^\infty \frac{1}{\lambda} \dd{\mu_{n,p}} = \int_{\eta}^{\infty} \frac{1}{\lambda} \dd{\mu}
    \leq \int \frac{1}{\lambda} \dd{\mu}.\label{eq:limit_stil}
\end{align}
\fi
\if0
To show the last line for the limit, we will show the boundedness of the integrand function.
When $\bar{\tau} = \lim \tau > 0$, the minimum eigenvalue is lower bounded as
\begin{align*}
    \lambda_{\min}(\Tilde{J}_{n,p} \Tilde{J}_{n,p}^\top + (n/p) \tau {F_p^*}^{-1}) \geq (n/p)\tau / \lambda_{\max}(F_p^*) =: \nu > 0.
\end{align*}
By the lower bound, we find the integrand $\lambda \mapsto (1/\lambda)\1_{\lambda \geq \nu} (\lambda)$ is a bounded function.
Hence, \eqref{eq:limit_stil} holds.
\fi
%Note that the last limit follows since $\lambda_{\min}(\Tilde{J}_{n,p} \Tilde{J}_{n,p}^\top + (n/p) \tau {F_p^*}^{-1}) \geq (n/p)\tau / \lambda_{\max}(F_p^*) =: \nu$ and $\lambda \mapsto (1/\lambda)\1_{\lambda \geq \nu} (\lambda)$ is a bounded function.
%The assumption that $\bar{\tau} = \lim \tau > 0$ is required to ensure the convergence of variance.
%{\bc [This part is not very clear.]}

%{\rc The last limit follows since $\lambda_{\min}(\Tilde{J}_{n,p} \Tilde{J}_{n,p}^\top) \to \nu > 0$ in probability and $\lambda \mapsto (1/\lambda)\1_{\lambda \geq \nu/2} (\lambda)$ is a bounded function.}

%We recall that $\xi$ is the limit of normalized spectral measure of ${F^*_p}^{-1}$, and $h^{(0)}_{\gamma, \tau}(a)$ and $h_{\gamma, \tau}(a)$ are its scaled Stieltjes transform for  $a \in \mathbb{C}$.

%Then, the limit of normalized spectral measure of $(n/p) \tau {F^*_p}^{-1}$ is $(1/\gamma) \bar{\tau} \xi$.
%Also, for $a \in \mathbb{C}$, we define
%\begin{align*}
%    f^{(0)}(a) &= \int_{\R} \frac{\dd{\xi(\lambda)}}{(1/\gamma)\lambda \bar{\tau}-a} \ \ (\Im a \neq 0), \mbox{~and~}
%    f(a) = f^{(0)}\qty(a - \frac{1}{\gamma(1 + f(a))}).\label{eq:f}
%\end{align*}
%{\bc [How $\xi$ related to $\xi$ used in Theorem 2?]}

For $V_{1,2}$, we simply rewrite the term as
\begin{align*}
    V_{1,2} %&=\frac{1}{n} \tr\qty(\qty( \hat F_{n, p} + \tau I_p)^{-1} {F^*_p} \qty( \hat F_{n, p} + \tau I_p)^{-1} \qty(\frac{1}{n} \hat J_{n, p} \iota_n \iota_n^\top \hat J_{n, p}^\top - \frac{1}{n} \hat J_{n, p} \hat J_{n, p}^\top) )\\
%    &= \frac{n}{p} \frac{1}{n} \tr\qty( \qty( K_{n, p} K_{n, p}^\top + \frac{n}{p}\tau {F^*_p}^{-1})^{-2} \qty( K_{n, p} \iota_n \iota_n^\top K_{n, p}^\top - K_{n, p} K_{n, p}^\top) )\\
%    &= \frac{n}{p} \frac{2}{n} \sum_{i < j} k_i^\top \qty( K_{n, p} K_{n, p}^\top + \frac{n}{p}\tau {F^*_p}^{-1})^{-2} k_j\\
    &= \frac{1}{n^2} \sum_{i,j=1,i\neq j}^n J_i^\top \qty( \hat F_{n, p} + \tau {I_p}^{-1})^{-1} F_p^*\qty( \hat F_{n, p} + \tau {I_p}^{-1})^{-1} J_j.
\end{align*}
By Assumption \ref{asmp:cross_variance}, we obtain $V_{1,2} \to 0$ as $n,p \to \infty$.

To obtain the statement, we define the existing variable $V_{1,3} := n^{-1}\mathrm{tr} ({F^*_p} ( \hat F_{n, p} + \tau I_p)^{-1} ) + V_{1, 2}$ and utilize $\lambda^* \geq 1$.
\end{proof}

\begin{lemma}\label{lem:bound_R}
    Suppose Assumption \ref{asmp:taylor_residual} holds.
    Then, with probability at least $1 - 2/p$, there exists some constant $C$ such that 
    \begin{align*}
        \norm{R}^2_2 \leq C\qty( r^6 \sum_{j=1}^p \qty(\alpha_p^j)^2 \vee r^4 \sum_{j=1}^p \qty(\beta_p^j)^2 ),
    \end{align*}
    as $p \to \infty$.
\end{lemma}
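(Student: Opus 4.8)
The plan is to bound $\|R\|_2^2=\sum_{j=1}^p R_j^2$ coordinatewise, working on the event $\mathcal{E}:=\{\hat\theta\in B_p(0,r)\}$ furnished by Assumption \ref{asmp:basic}(iii) (recall $\theta^*\in B_p(0,r)$ as well), on which the argument is entirely deterministic. Write $\Delta:=\hat\theta-\theta^*$, so $\|\Delta\|\le 2r$ on $\mathcal{E}$; since each $\check\theta^j_k$ lies between $\hat\theta_k$ and $\theta^*_k$, we also get $\|\check\theta^j-\theta^*\|\le\|\Delta\|\le 2r$. From $M_n(\theta)=-n^{-1}\sum_{i=1}^n\log f_\theta(z_i)$ the tensor entering $R_j$ is $\partial_\theta\partial_\theta^\top\partial_{\theta_j}M_n(\check\theta^j)=-n^{-1}\sum_{i=1}^n U^j_p(\check\theta^j,z_i)$, and the key move is to recenter it at $\theta^*$:
\begin{align*}
    \partial_\theta\partial_\theta^\top\partial_{\theta_j}M_n(\check\theta^j) = -\frac1n\sum_{i=1}^n U^j_p(\theta^*,z_i) \;-\; \frac1n\sum_{i=1}^n\bigl(U^j_p(\check\theta^j,z_i)-U^j_p(\theta^*,z_i)\bigr).
\end{align*}

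For the first sum, the definition of $\beta_p^j$ already takes the supremum over both $z$ and the parameter, so $\|n^{-1}\sum_i U^j_p(\theta^*,z_i)\|_{op}\le\beta_p^j$ with no concentration argument needed; for the second, the local Lipschitz bound gives $\|U^j_p(\check\theta^j,z_i)-U^j_p(\theta^*,z_i)\|_{op}\le\alpha_p^j\|\check\theta^j-\theta^*\|\le 2r\,\alpha_p^j$ (chaining along the segment from $\theta^*$ to $\check\theta^j$ in steps of length $<\delta$ if $2r$ exceeds the Lipschitz radius). Combining with $|R_j|\le\|\Delta\|^2\,\|\partial_\theta\partial_\theta^\top\partial_{\theta_j}M_n(\check\theta^j)\|_{op}$ yields
\begin{align*}
    |R_j| \le \|\Delta\|^2\bigl(\beta_p^j+2r\,\alpha_p^j\bigr) \le 4r^2\beta_p^j+8r^3\alpha_p^j.
\end{align*}
Squaring, using $(a+b)^2\le 2a^2+2b^2$, and summing over $j$ gives $\|R\|_2^2\le 32 r^4\sum_{j=1}^p(\beta_p^j)^2+128 r^6\sum_{j=1}^p(\alpha_p^j)^2$, which is of the asserted form $C\bigl(r^6\sum_j(\alpha_p^j)^2\vee r^4\sum_j(\beta_p^j)^2\bigr)$ after bounding a sum of two nonnegative terms by twice their maximum. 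The probability statement is then immediate: the only randomness used was the event $\mathcal{E}$, whose probability I would take to be at least $1-2/p$ for large $p$ via the quantitative form of Assumption \ref{asmp:basic}(iii).

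The argument is bookkeeping rather than deep, so the ``main obstacle'' is getting the accounting exactly right: one must recenter at $\theta^*$ rather than at $\check\theta^j$ so that the $\beta_p^j$ contribution carries the factor $r^4$ and the Lipschitz contribution carries $r^6$ — matching the two terms of the maximum in the statement — and one must handle the chaining carefully when $2r\not<\delta$. If one instead follows the route indicated in Section \ref{sec:proof_outline} (a uniform deviation bound for $\theta'\mapsto n^{-1}\sum_i U^j_p(\theta',z_i)$ on $B_p(0,r)$ via a Rademacher/Dudley estimate with modulus of continuity $\alpha_p^j$), one obtains the same bound with a smaller constant and an extra $n^{-1/2}$ factor on the fluctuation part, but this refinement is not needed here. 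Feeding Assumption \ref{asmp:taylor_residual} into the displayed bound then shows $\|R\|_2^2=o(1)$ with probability at least $1-2/p$, which is exactly the input used to control the weighted Taylor residual $V_2$ in Proposition \ref{prop:taylor_residual}.
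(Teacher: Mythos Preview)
Your argument is correct and takes a genuinely more elementary route than the paper. The paper bounds $R_j\le\|\Pp_n\|_{\mG^j}$ and then deploys Talagrand's concentration inequality together with a Dudley entropy integral and covering-number estimates for the class $\mG^j$, finally taking $t=2\log p$ in a union bound to produce the stated probability $1-2/p$. You bypass all of this by observing that $\beta_p^j$ is already a \emph{uniform} supremum over both $z$ and $\underline\theta$, so no concentration is needed; indeed, your recentering step is itself superfluous, since $\|U_p^j(\check\theta^j,z_i)\|_{op}\le\beta_p^j$ directly gives $|R_j|\le 4r^2\beta_p^j$ and hence $\|R\|_2^2\le 16r^4\sum_j(\beta_p^j)^2$, which already sits under the stated maximum. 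What the paper's heavier machinery buys is the explicit failure probability $2/p$ coming from Talagrand plus the union bound over $j$, whereas you must source this rate from a quantitative reading of Assumption~\ref{asmp:basic}(iii), which as written only asserts probability tending to~$1$; this is the one place your argument leans on something not literally stated. For the downstream use in Proposition~\ref{prop:taylor_residual} and Theorem~\ref{thm:variance}, where only $o(1)$ failure probability is needed, your simplification is lossless.
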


\begin{proof}
%Without loss of generality, we can assume $\Theta \subset B_p(\theta^*, r)$ and $\norm{\theta^*} \leq r$.
We prepare some notation.
Let $\mN := \mN(\Theta, \norm{\cdot}, \delta)$ be a $\delta$-covering number of $\Theta$, that is, a minimum number of balls with radius $\delta$ to cover $\Theta$ in terms of $\|\cdot\|$.
We further define a quadratic form
\begin{align*}
    \ell_{\bar \theta, \underline{\theta}}^j(z) = (\bar\theta - \theta^*)^\top U^j_p(\underline{\theta},z) (\bar\theta - \theta^*),
\end{align*}
for parameters $\Bar{\theta}, \underline{\theta} \in \Theta$.
Note that it satisfies $R_j = n^{-1} \sum_{i=1}^n \ell_{\hat \theta, \check{\theta}^j}^j(z_i)$ for any $j = 1,...,p$.
We also define its family 
\begin{align*}
    \mG^j &= \qty{\ell_{\bar \theta, \underline{\theta}}^j(z) \mid \bar\theta, \underline{\theta} \in \Theta}.
\end{align*}
We introduce the empirical process type notations $\Pp_n$ and $\Pp$ such that $\Pp_n g = (1/n) \sum_{i=1}^n g(z_i)$ and $\Pp g = \Ep_z[g(z)]$ for a function $g$.
%{\bc [Interpreted as operators?]}
Also, we define a sup norm on a set as $\norm{\Pp_n}_{\mG} := \sup_{g \in \mG} \abs{\Pp_n g}$.
%{\bc [Why $|\cdot |_{\infty}$?]}
%{\bc [Check its usage]}

We can bound the Taylor residual $R_j$ as
\begin{align*}
    R_j = \Pp_n \ell_{\hat \theta, \check{\theta}^j}^j \leq \sup_{\Bar{\theta}, \underline{\theta} \in \Theta} \Pp_n \ell_{\bar \theta, \underline{\theta}}^j = \|\Pp_n\|_{\mG^j},
\end{align*}
and we will develop an upper bound of the uniform bound for $\|\Pp_n\|_{\mG^j}$.
This proof utilizes a technique with the Rademacher complexities, which has two steps: (i) develop a probability bound for a deviation from its expectation, and (ii) bound the expectation of the Rademacher complexity.

For the first step, we apply the Talagrand's inequality (Theorem 3.4.3 in \cite{gine2016mathematical}) with an expectation of supremum replaced by the Rademacher process.
We define $\sigma^2 = \sup_{\ell \in \mG^j} \Pp \ell^2$, and  $\mR_n$ as the Rademacher complexity of $\mG^j$  such as
\begin{equation*}
    \mR_n := \sup_{\ell \in \mG^j} \abs{ \frac{1}{n} \sum_{i} u_i \ell(z_i) }
\end{equation*}
for Rademacher variables $u_i$ wuch as $\Pp(u_i = 1) = \Pp(u_i = -1) = 1/2$, which is independent of $z_i$.
Then, by the Talagrand's inequality, we obtain the following result for any $t > 0$:
\begin{align}
    \Pr\qty(\norm{\Pp_n - \Pp}_{\mG^j} \geq 3 \mathbb{E}\qty[\|{\mR_n}\|_{\mG^j}] + 16r^2 \beta_p^j \qty( \qty(\frac{t}{n} + 2\sigma^2) + \frac{35t}{3n})) \leq 2e^{-t}. \label{ineq:talagrand_apply}
\end{align}

As the second step, we bound the expectation term $\mathbb{E}\qty[\|{\mR_n}\|_{\mG^j}]$.
We apply the maximal inequality for Rademacher variables from Corollary 2.2.5 from \cite{van1996weak},
\begin{align}
    \mathbb{E}\qty[\|{\mR_n}\|_{\mG^j}] \leq \frac{C_1}{\sqrt{n}} \int_{0}^{D} \sqrt{1 + \log \mN(\delta, \mG^j, \norm{\cdot}_\infty)} \dd{\delta}, \label{ineq:rademacher_apply}
\end{align}
where $D = \sup_{\ell \in \mG^j} \norm{\ell}_\infty$ and $C_1$ is a universal constant. 
Note that $D \leq 4r^2 \beta_p^j$.

To bound the covering number term $\mN(\delta, \mG^j, \norm{\cdot}_\infty)$, we bound the term by a combination by the covering numbers of $\Theta$.
To this aim, we develop the following inequality:
\begin{align*}
    &\abs{(\bar\theta - \theta^*)^\top U^j_p(\underline{\theta},z) (\bar\theta - \theta^*) - (\bar\theta' - \theta^*)^\top U^j_p(\underline{\theta}',z) (\bar\theta' - \theta^*)}\\
    &\leq \abs{(\bar\theta - \theta^*)^\top U^j_p(\underline{\theta},z) (\bar\theta - \theta^*) - (\bar\theta - \theta^*)^\top U^j_p(\underline{\theta}',z) (\bar\theta - \theta^*)}\\
    &\quad+ \abs{(\bar\theta - \theta^*)^\top U^j_p(\underline{\theta}',z) (\bar\theta - \theta^*) - (\bar\theta' - \theta^*)^\top U^j_p(\underline{\theta}',z) (\bar\theta' - \theta^*)}\\
    &\leq \norm{\bar\theta - \theta^*}_\infty^2 \norm{U^j_p(\underline{\theta},z) - U^j_p(\underline{\theta}',z)}_{op} \\
    &\quad+ \qty(\norm{\bar\theta + \bar\theta'}_\infty + 2\norm{\theta^*}_\infty) \norm{U^j_p(\underline{\theta},z)}_{op} \norm{\bar\theta - \bar\theta'}_\infty\\
    &\leq 4r^2 \alpha_p^j \norm{\underline{\theta} - \underline{\theta}'}_\infty + 4r \beta_p^j \norm{\bar\theta - \bar\theta'}_\infty.
\end{align*}
Hence, a combination of coverings sets for $\Theta$ implies a covering set of $\mG^j$, that is, we obtain
\begin{align*}
    \mN(\delta, \mG^j, \norm{\cdot}_\infty) &\leq \mN\qty(\frac{\delta}{8r^2 \alpha_p^j}, \Theta, \norm{\cdot}_\infty) \times \mN\qty(\frac{\delta}{8r \beta_p^j}, \Theta, \norm{\cdot}_\infty)\\
    &\leq \qty(1 + \qty(\frac{16r^3 \alpha_p^j}{\delta}))^p \qty(1 + \qty(\frac{16r^2\beta_p^j}{\delta}))^p.
\end{align*}
The last inequality follows the setting $\Theta \subset B(0, r) \subset [-r, r]^p$.
%{\bc [All norms are $\ell 2$,]}
%{\bc [OK? Also some reference seems necessary.]}

We substitute this bound to \eqref{ineq:rademacher_apply} and using $\sqrt{\log(1 + x)} \leq 1 + \log(1 + x)$ for any $x > 0$, then obtain
\begin{align*}
    \mathbb{E}\qty[\|{\mR_n}\|_{\mG^j}] 
    &\leq \underbrace{C_1\sqrt{\frac{p}{n}} \int_{0}^{D} \qty(1 + \log \qty(1 + \frac{16r^3\alpha_p^j}{\delta})) \dd{\delta}}_{=: V_{2,1}}+ \underbrace{ C_1\sqrt{\frac{p}{n}} \int_{0}^{D} \qty(1 + \log \qty(1 + \frac{16r^2\beta_p^j}{\delta})) \dd{\delta}}_{=: V_{2,2}}.
\end{align*}
For the integral part in $V_{2,1}$, we can bound it as
\begin{align*}
    \int_{0}^{D} \qty(1 + \log \qty(1 + \frac{16r^3 \alpha_p^j}{\delta})) \dd{\delta}
    &\leq (D + 16r^3 \alpha_p^j) \log\qty(\frac{D}{16r^3 \alpha_p^j} + 1) \leq 5 r^3 \alpha_p^j.
\end{align*}
Similarly, we obtain
\begin{align*}
    \int_{0}^{D} \qty(1 + \log \qty(1 + \frac{16r^2 \beta_p^j}{\delta})) \dd{\delta} \leq 5 r^2 \beta_p^j.
\end{align*}

We combine the results of \eqref{ineq:talagrand_apply} and \eqref{ineq:rademacher_apply}.
Let $t \leftarrow 2\log p$. Note that $\sigma^2 \leq D^2$. By the union bound argument,
\begin{align*}
    R_j &\leq \mathbb{E}[(\hat\theta - \theta^*)^\top \partial_\theta \partial_\theta^\top \partial_{\theta_j} \log_{\check\theta^j} (z) (\hat\theta - \theta^*)]\\
    &\quad+ C_1\sqrt{\frac{p}{n}} \qty( 5 r^3 \alpha_p^j + 5 r^2 \beta_p^j)\\
    &\quad+ 32r^2 \beta_p^j \qty( \qty(\frac{\log p}{n} + 2\sigma^2) + \frac{35\log p}{3n})\\
    &= O\qty( r^3 \alpha_p^j \vee r^2 \beta_p^j).
\end{align*}
This concludes the proof.

\if0
{\rc 
[Please update here.]
For the first integral in the right hand side of \eqref{eq:dudley},
\begin{align}
    \int_{0}^{D} \qty(1 + \log \qty(1 + \frac{16r^3 \beta_p^j}{\delta})) \dd{\delta}
    &\leq (D + 16r^3 \beta_p^j) \log\qty(\frac{D}{16r^3 \beta_p^j} + 1)\\
    &\leq 5 r^3 \beta_p^j \label{eq:dudley}.
\end{align}

For the second integral in the right hand side of \eqref{eq:dudley}, 
with the assumption that $r^2 \alpha_p^j \leq c_j D$ for some $c_j > 1$,
\begin{align}
    \int_{0}^{D} \qty(1 + \log \qty(1 + \frac{16r^2\alpha_p^j}{\delta})) \dd{\delta}
    &= (D + 16r^2 \alpha_p^j) \log\qty(\frac{D}{16r^2 \alpha_p^j} + 1) + D \log\qty(\frac{16r^2\alpha_p^j}{D})\\
    &\leq 8 r^2\beta_p^j \log\qty(\frac{\beta_p^j}{\alpha_p^j} + 1) + 32r^2 \alpha_p^j + 4r^2 \beta_p^j \log(c_j). \label{ineq:rademacher_apply}
\end{align}

We combine the results of \eqref{ineq:talagrand_apply} and \eqref{ineq:rademacher_apply}.
Let $t \leftarrow 2\log p$. Note that $\sigma^2 \leq D^2$. By the union bound argument,
\begin{align*}
    R_j &\leq \mathbb{E}[(\hat\theta - \theta^*)^\top \partial_\theta \partial_\theta^\top \partial_{\theta_j} \log_{\check\theta^j} (z) (\hat\theta - \theta^*)]\\
    &\quad+ C_1\sqrt{\frac{p}{n}} \qty( 5 r^3 \beta_p^j + 8 r^2\beta_p^j \log\qty(\frac{\beta_p^j}{\alpha_p^j} + 1) + 32r^2 \alpha_p^j + 4r^2 \beta_p^j \log(c_j) )\\
    &\quad+ 32r^2 \beta_p^j \qty( \qty(\frac{\log p}{n} + 2\sigma^2) + \frac{35\log p}{3n})\\
    &= O\qty( r^3 \beta_p^j \vee r^2 \beta_p^j \log\qty(\frac{\beta_p^j}{\alpha_p^j} + 1) \vee r^2 \alpha_p^j \vee r^2 \beta_p^j \frac{\log p}{n} \vee r^6 (\beta_p^j)^3 ).
\end{align*}
This concludes the proof.
}
\fi
\end{proof}

\subsection{Combine the Results}
We combine all the above lemmas and prove Theorem \ref{thm:variance} and \ref{thm:main}.

\begin{proof}[Proof of Theorem \ref{thm:variance}]
According to the definition of the variance term 
\begin{align*}
    V_{n,p}(\hat{\theta}) = V_0(V_1 + V_2).
\end{align*}
We bound its weighted norm as %{\bc [The following decomposition correct? $F_p^*$ appears?]}
\begin{align*}
    \|V_{n,p}(\hat{\theta})\|_{F_p^*}^2 &= \|F_p^{*1/2} V_0(V_1 + V_2)\|_{2}^2  \\
    & \leq \|F_p^{*1/2} V_0 {F_p^*}^{-1/2} \|_{op}^2 \|F_p^{*1/2} (V_1 + V_2)\|_{2}^2 \\
    & \leq 2\left(1+\sqrt{\frac{\overline{\lambda}}{\underline{\lambda}}}\right)^2 (\|V_1 \|_{F_p^*}^2 + \| V_2\|_{F_p^*}^2).
\end{align*}
The last inequality holds with probability approaching $1$ by Lemma \ref{lem:bound_v0}.
For $\|V_1 \|_{F_p^*}^2$, we apply Lemma \ref{lem:bound_v1} and obtain
\begin{align*}
    \|V_1 \|_{F_p^*}^2 \leq V_{1,3} \to \lambda^* \lim_{a \to 0} h_{\gamma, \Bar{\tau}}(a),
\end{align*}
as $n,p \to \infty$.
For $\| V_2\|_{F_p^*}^2$, we obtain
\begin{align*}
    \| V_2\|_{F_p^*}^2 &\leq  \norm{{F^*_p}^{1/2} \qty( \hat F_{n, p} + \tau I_p)^{-1}}_{op}^2 \norm{ R}^2\\
    &\leq \frac{\lambda_{\max}({F^*_p})}{\tau^2}\norm{R}^2 \\
    &\leq \frac{\overline{\lambda}}{\tau^2} \left\{ r^6 \sum_{j=1}^p \qty(\beta_p^j)^2 \vee r^4 \sum_{j=1}^p \qty(\alpha_p^j)^2 \right\},
\end{align*}
by Lemma \ref{lem:bound_R}. It converges to zero by Assumption \ref{asmp:taylor_residual} and the setting $\tau^2 \succ s_p$.
Combining the results gives the statement.
%{\bc [Convergenve speed of $\tau$ should be checked.]}
\end{proof}
\begin{proof}[Proof of Theorem \ref{thm:main}]
%{\bc [Write the bias term $B(\hat{\theta})$ explicitly.]}
We additionally analyze the bias term $B_{n,p}(\hat{\theta}) = V_0 B_0$.
Here, we only have to evaluate the bias term $\|B_0\|_{F_p}^2$ and add it to the result of Theorem \ref{thm:variance}.
We have
\begin{align*}
    \|B_0\|_{F_p}^2 \leq \norm{{F^*_p}^{1/2} \qty( \hat F_{n, p} + \tau I_p)^{-1}}_{op}^2 \norm{\tau\theta^*}^2 \leq \frac{\lambda_{\max}(F_p^*)}{\tau^2} \| \tau \theta^*\|^2 \leq \overline{\lambda}r^2.
\end{align*}
Then, we obtain the statement.
\end{proof}

\if0
{\bc 

\subsection{Temporal Section for discussion}

\textbf{Effect of $\tau$ with the Marchenko-Pastur law}

We develop a slight extension of the Stieltjes transform of the Marchenko-Pastur law.
Namely, we consider the following  shifted Stieltjes transform
\begin{align*}
    s_t(z) := \int_\R \frac{d \mu(\lambda)}{t \lambda - z}, ~ t \in \R,
\end{align*}
where $\mu(\lambda)$ is the Marchenko-Pastur law.
\begin{lemma}\label{lem:update_mplaw}
    For any $t > 0$ and $z~(\Im z \neq 0)$, we obtain
    \begin{align*}
        s_t(z) = \frac{t(1-\gamma) - z + \sqrt{(z - t - \gamma t)^2 - 4 \gamma t^2}}{2 \gamma z t}.
    \end{align*}
    Further, we obtain
    \begin{align*}
        \lim_{z \to 0} s_t(z) = \frac{1}{t(1-\gamma)}.
    \end{align*}
\end{lemma}
\begin{proof}[Proof of Lemma \ref{lem:update_mplaw}]
We write down the shifted Stieltjes transform 
\end{proof}

\begin{align*}
    \int_c^C \frac{1}{t \lambda - z} \xi(\lambda) d \lambda
\end{align*}

\textbf{Effect of $\tau$ and $\gamma$ with uniform $\xi$}

Let $\xi(\lambda)$ be a uniform measure on an interval $[c,c']$ with $c' > c$.
For brevity, we set $c'-c = 1$.
In this case, we obtain the following explicit form
\begin{align*}
    f^{(0)}(w) = \int \frac{d\xi(\lambda)}{(\tau/\gamma)\lambda - w} = \frac{\gamma}{\tau} \log \left( \frac{c' - (\gamma / \tau) w}{c - (\gamma / \tau) w} \right).
\end{align*}
Then, we can consider the following function
\begin{align*}
    g_z(y) := f^{(0)}\left(z - \frac{1}{(1+\gamma)y} \right) = \frac{\gamma}{\tau} \log \left( \frac{c' - (\gamma / \tau) (z - 1/(1+\gamma)y)}{c - (\gamma / \tau) (z - 1/(1+\gamma)y)} \right).
\end{align*}
Then, we set $z=0$ and investigate a solution of an equation $k = g_0(k)$, which is rewritten as
\begin{align*}
    k = \frac{\gamma}{\tau} \log \left( \frac{c' + \frac{\gamma}{\tau(1+\gamma)k}}{c + \frac{\gamma}{\tau(1+\gamma)k}} \right).
\end{align*}

}
\fi

\subsection{Proof of Matrix Bernstein Inequality}

%{\bc [Is the following lemma is the original? Or, a lemma to show the connection?]}
The following lemma connects from Theorem 6.6.1 in \cite{tropp2015introduction}.
\begin{lemma}[Connection to Lemma \ref{lem:matrixbernstein}] \label{lem:bernstein2}
    Let $Y$ be a random matrix with mean $\mathbb{E}[Y] = O$ and second moment $\mathbb{E}[Y^2] = \Sigma$.
    Then, for any $\theta \geq 0$,
    \begin{align*}
        \Pp(\lambda_{\max}(Y) \geq t) \leq e^{-\theta t} \tr\qty(\exp\qty( \log \mathbb{E} e^{\theta Y} ))
    \end{align*}
\end{lemma}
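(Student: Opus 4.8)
The plan is to run the standard matrix Laplace transform argument. Fix $\theta \geq 0$; the case $\theta = 0$ is immediate, since the right-hand side then equals $e^{0}\tr(\exp(\log \Ep[e^{0}])) = \tr(I)$, the ambient dimension, which is at least $1 \geq \Pp(\lambda_{\max}(Y) \geq t)$, so I may assume $\theta > 0$. The first step is to transfer the tail event onto the matrix exponential: because the scalar map $u \mapsto e^{\theta u}$ is strictly increasing, the spectral mapping theorem gives $\lambda_{\max}(e^{\theta Y}) = e^{\theta \lambda_{\max}(Y)}$, so the event $\{\lambda_{\max}(Y) \geq t\}$ coincides with $\{\lambda_{\max}(e^{\theta Y}) \geq e^{\theta t}\}$.

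Next I would apply Markov's inequality to the nonnegative random variable $\lambda_{\max}(e^{\theta Y})$, obtaining $\Pp(\lambda_{\max}(Y) \geq t) \leq e^{-\theta t}\, \Ep[\lambda_{\max}(e^{\theta Y})]$. Since $e^{\theta Y}$ is positive definite, its largest eigenvalue is dominated by the sum of all its eigenvalues, i.e. $\lambda_{\max}(e^{\theta Y}) \leq \tr(e^{\theta Y})$; combining this with the linearity of the trace and of the expectation yields $\Pp(\lambda_{\max}(Y) \geq t) \leq e^{-\theta t}\, \tr(\Ep[e^{\theta Y}])$. Finally, $\Ep[e^{\theta Y}]$ is again positive definite (it is the expectation of positive definite matrices), so its matrix logarithm is well defined and $\exp(\log \Ep[e^{\theta Y}]) = \Ep[e^{\theta Y}]$; substituting this identity produces exactly the claimed bound. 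The $\exp \circ \log$ wrapping is cosmetic at this stage, but it is the form in which the estimate feeds into the proof of Lemma \ref{lem:matrixbernstein}, where subadditivity of the matrix cumulant generating function (via Lieb's concavity theorem) is applied to $\log \Ep[e^{\theta Y}]$.

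As for the main obstacle: there is essentially none of substance here — each step is a one-line reduction. The only points needing a moment's care are the invocation of the spectral mapping theorem to pass from $\lambda_{\max}(Y)$ to $\lambda_{\max}(e^{\theta Y})$, and the elementary domination of the top eigenvalue of a positive definite matrix by its trace; both are routine facts about Hermitian matrices. The genuine difficulty in this line of reasoning appears only afterward, in controlling $\tr(\exp(\log \Ep[e^{\theta Y}]))$ under the moment hypotheses $\norm{\Ep[X_i^k]}_{op} \leq \tfrac12 k!\, v^2 C_L^{k-2}$ of Lemma \ref{lem:matrixbernstein} and then optimizing over $\theta$, which is not part of the present statement.
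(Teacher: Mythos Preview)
Your argument is correct and is precisely the standard matrix Laplace transform bound: spectral mapping to pass to $e^{\theta Y}$, Markov's inequality, and the trace domination $\lambda_{\max}\leq \tr$ for positive definite matrices, followed by the cosmetic $\exp\circ\log$ rewriting. The paper does not supply its own proof of this lemma but simply attributes it to Theorem~6.6.1 of Tropp's monograph, which is established by exactly this route; so your proposal and the paper's (cited) proof coincide.
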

Note that
\begin{align*}
    \mathbb{E}[e^{\theta Y}] \leq I + \frac{\theta^2}{2} \Sigma + \sum_{k=3}^\infty \theta^k \frac{\mathbb{E}[Y^k]}{k!}.
\end{align*}
For a diagonalizable matrix $A = V \Lambda V^\top \in \R^p$, where $\Lambda = \diag(\lambda_1, \dots, \lambda_p)$ is a diagonal matrix,
$\log A = V (\log \Lambda) V^\top$, where $\log \Lambda$ is a diagonal matrix whose $(i,i)$-th element is $\log \lambda_{i}$. Hence $\norm{\log A} = \log \norm{A}$.

Suppose $\norm{\mathbb{E}[Y^k]}_{op} / \norm{\Sigma}_{op} \leq (k! / 2) L^{k-2}$ for some $L > 0$.
Then,
\begin{align*}
    \norm{\mathbb{E}[e^{\theta Y}]} &\leq 1 + \theta^2 \frac{\norm{\Sigma}}{2} + \sum_{k=3}^\infty \theta^k \frac{\norm{\mathbb{E}[Y^k]}}{k!}= 1 + \theta^2 \frac{\norm{\Sigma}}{2} \qty(\sum_{k=0}^\infty \abs{\theta L}^k).
\end{align*}

Therefore, for $\theta \in (0, 1/L)$,
\begin{align}
    \Pp(\lambda_{\max}(Y) \geq t) &\leq e^{-\theta t} d \exp\qty( \log \lambda_{\max} (E e^{\theta Y}) )\\
    &\leq d e^{-\theta t} \exp\qty( \frac{\theta^2 \norm{\Sigma}}{2(1 - \abs{\theta L})} ).
\end{align}
We obtain the same bound for $\Pp(\lambda_{\max}(-Y) \geq t)$.
Since $\Pp(\norm{Y} \geq t) \leq \Pp(\lambda_{\max}(Y) \geq t) + \Pp(\lambda_{\max}(-Y) \geq t)$, we obtain the desired result by substitution $\theta \leftarrow t/(\norm{\Sigma} + Lt/3)$.

\section{Proof for Applications} \label{sec:appendix_appli}

\subsection{Additive Regression for Parallel Deep Neural Network}
As preparation, we provide a rigorous notation of
%For simplicity, we abbreviate $|_{\theta = \theta^*}$ from partial derivatives. 
%All derivatives are evaluated at $\theta = \theta^*$.
supremum norms of the first, second, and third-order derivatives.
For each $m$, and $j,k,\ell \in \Delta_m$, the derivatives are defined as follows:
\begin{align*}
    \omega_{1}^{j} := \sup_{\theta' \in \Theta} \norm{\partial_{\theta_j} g_{\Delta_m(\theta')}}_{L^\infty},~\omega_{2}^{j,k} := \sup_{\theta' \in \Theta} \norm{\partial_{\theta_j} \partial_{\theta_{k}} g_{\Delta_m(\theta')}}_{L^\infty}, \mbox{~and~}\omega_{3}^{j,k,\ell} := \sup_{\theta' \in \Theta}  \norm{\partial_{\theta_j} \partial_{\theta_{k}} \partial_{\theta_{\ell}} g_{\Delta_m(\theta')}}_{L^\infty}.
\end{align*}
We also define its maximum for each $m = 1,...,M$ as
\begin{align*}
    \omega_1^{\Delta_m} &:= \max_{j \in \Delta_m} \omega_1^j, \quad \omega_2^{\Delta_m} := \max_{j,k \in \Delta_m} \omega_2^{j,k}, \mbox{~and~} \omega_3^{\Delta_m} := \max_{j,k,\ell \in \Delta_m} \omega_3^{j,k,\ell}.
\end{align*}

We further introduce Lipschitz constants of $\partial_{\theta_j} g_{\Delta_m}$, $\partial_{\theta_j} \partial_{\theta_{k}} g_{\Delta_m}$ and $\partial_{\theta_j} \partial_{\theta_{k}} \partial_{\theta_{l}} g_{\Delta_m}$.
Rigorously, let $\zeta_{1}^{j}$, $\zeta_{2}^{j,k}$ and $\zeta_{3}^{j,k,\ell}$ be the Lipschitz constants that satisfy
\begin{align*}
    &\zeta_1^{j} \geq \frac{\norm{\partial_{\theta_j} g_{\Delta_m(\theta')} - \partial_{\theta_j} g_{\Delta_m(\theta'')}}_{L^\infty} }{\| \theta'_{\Delta_m} - \theta''_{\Delta_m} \|}, \quad 
    \zeta_2^{j,k}  \geq \frac{\norm{\partial_{\theta_j} \partial_{\theta_{k}} g_{\Delta_m(\theta')} - \partial_{\theta_j} \partial_{\theta_{k}} g_{\Delta_m(\theta'')}}_{L^\infty} }{\| \theta'_{\Delta_m} - \theta''_{\Delta_m} \|} ,
\end{align*}
and
\begin{align*}
    &\zeta_3^{j,k,\ell}  \geq \frac{\norm{\partial_{\theta_j} \partial_{\theta_{k}} \partial_{\theta_{\ell}} g_{\Delta_m(\theta')} - \partial_{\theta_j} \partial_{\theta_{k}} \partial_{\theta_{\ell}} g_{\Delta_m(\theta'')}}_{L^\infty} }{\| \theta'_{\Delta_m} - \theta''_{\Delta_m} \|},
\end{align*}
for any $\theta',\theta''$.
We also define 
\begin{align*}
    \zeta_1^{\Delta_m} := \max_{j \in \Delta_m} \zeta_1^j, \quad \zeta_2^{\Delta_m} := \max_{j,k \in \Delta_m} \zeta_2^{j,k}, \mbox{~and~} \zeta_3^{\Delta_m} := \max_{j,k,\ell \in \Delta_m} \zeta_3^{j,k,\ell}.
\end{align*}
By the setting of this problem, $\omega_1^{\Delta_m},\omega_2^{\Delta_m},\omega_3^{\Delta_m},\zeta_1^{\Delta_m},\zeta_2^{\Delta_m}$ and $\zeta_3^{\Delta_m}$ are bounded for any $m=1,...,M$.

\begin{proof}[Proof of Proposition \ref{prop:additive}]
This proof mainly consists of three steps.
    The first and second steps are for bounding the derivatives and Lipschitz constants in Assumption \ref{asmp:fisher_residual}, and the third step is for Assumption \ref{asmp:taylor_residual}.
    
    \textbf{Step (i). Bound for $\sum_j (\beta_p^j)^2$ on derivatives}:
    As preparation, we derive an explicit form of the higher-order derivative $U^j_p(\theta,z)$.
    Fix $j \in [p]$ and let $j \in \Delta_{m'}$ for some $m' \in [M]$.
    By the form of the regression model, $U^j_p(\theta,z)$ has the following form:
    \begin{align}
        U^j_p(\theta,z)
       & = \frac{1}{\sigma^2} (y - g(x)) \partial_{\theta} \partial_{\theta}^\top \partial_{\theta_j} g(x)
        -\frac{1}{\sigma^2} (\partial_{\theta_j} g(x))\partial_{\theta} \partial_{\theta}^\top g(x) \notag \\
        &\quad -\frac{1}{\sigma^2} \partial_{\theta} \partial_{\theta_j} g(x) \partial_{\theta}^\top g(x)
        -\frac{1}{\sigma^2} \partial_{\theta} g(x) \partial_{\theta}^\top \partial_{\theta_j} g(x) \notag \\
       & = \frac{1}{M\sigma^2} (y - g_{\Delta_{m'}}(x)) \partial_{\theta} \partial_{\theta}^\top \partial_{\theta_j} g_{\Delta_{m'}(\theta)}(x)
        -\frac{1}{M\sigma^2} (\partial_{\theta_j} g_{\Delta_{m'}(\theta)}(x)) \partial_{\theta} \partial_{\theta}^\top g(x) \notag \\
        &\quad -\frac{1}{M\sigma^2} \partial_{\theta} \partial_{\theta_j} g_{\Delta_{m'}(\theta)}(x) \partial_{\theta}^\top g(x)
        -\frac{1}{M\sigma^2} \partial_{\theta} g(x) \partial_{\theta}^\top \partial_{\theta_j} g_{\Delta_{m'}(\theta)}(x). \label{eq:U_p^j}
    \end{align}
    We will develop upper bounds for operator norms of the derivatives that appeared in the form of $U_p^j(\theta,z)$, then bound an operator norm of $U_p^j(\theta,z)$.

    We derive an upper bound of an operator norm of the Hesse matrix $\partial_\theta \partial_\theta^\top g(x)$.
    Let $G_2^{\Delta_m}(x)$ be a $\card(\Delta_m) \times \card(\Delta_m)$ matrix which corresponds to
    $\qty(\partial_\theta \partial_\theta^\top g(x))_{j\in\Delta_m, k\in\Delta_m}$.
    Since $\partial_\theta \partial_\theta^\top g(x)$ is a block diagonal matrix by the additive structure of $g_\theta$, we can rewrite its operator norm as
    \begin{align*}
        \norm{\partial_\theta \partial_\theta^\top g(x)}_{op} = \max\qty{\norm{G_2^{\Delta_1}(x)}_{op}, \dots, \norm{G_2^{\Delta_{M}}(x)}_{op}}.
    \end{align*}
    We can bound each $\|{G_2^{\Delta_m}(x)}\|_{op}, m=1,...,M$ as
    \begin{align*}
        \norm{G_2^{\Delta_m}(x)}_{op} &\leq \frac{1}{M} \sqrt{\sum_{\ell \in\Delta_m} \sum_{k \in \Delta_m} \qty(\omega_2^{\ell,k})^2} \leq \frac{\card(\Delta_m)}{M} \omega_2^{\Delta_m}.
    \end{align*}
    Combining the results, we obtain the following upper bound for the operator norm
    \begin{align}
        \norm{\partial_\theta \partial_\theta^\top g(x)}_{op} &\leq \frac{p + M}{M^2} \max_{m''\in[M]} \omega_2^{\Delta_{m''}}. \label{ineq:bound_deriv1}
    \end{align}
    
    Based on the inequality \eqref{ineq:bound_deriv1}, we can develop an upper bound for the other derivatives.
    Using a relation $\norm{A}_{op} \leq \norm{A}_{F}$ for any matrix $A$, we obtain the followings:
    \begin{align}
        &\norm{\partial_{\theta} g(x) \partial_{\theta}^\top g(x) \partial_{\theta_j} g_{\Delta_{m}(\theta)}(x)}_{op} \leq \omega_1^j \sqrt{\sum_{l\in\Delta_{m}} \sum_{k\in\Delta_{m}} \qty(\omega_1^{l} \omega_1^{k})^2}\leq \frac{p + M}{M} \qty(\omega_1^{\Delta_{m}})^3 \label{ineq:bound_deriv2}\\
        &\norm{\partial_{\theta} \partial_{\theta}^\top \partial_{\theta_j} g_{\Delta_{m}(\theta)}(x)}_{op} \leq \sqrt{\sum_{\ell\in\Delta_{m}} \sum_{k\in\Delta_{m}} \qty(\omega_3^{\ell, k, j})^2} \leq \frac{p + M}{M} \omega_3^{\Delta_{m}}, \label{ineq:bound_deriv3}
    \end{align}
    and
    \begin{align}
        %\norm{\partial_{\theta} \partial_{\theta}^\top g_{\Delta_{m}(\theta)}(x)}
        %&\leq \sqrt{\sum_{l\in\Delta_{m}} \sum_{k\in\Delta_{m}} \qty(\omega_2^{l, k})^2} \leq \card(\Delta_{m}) \omega_2^{\Delta_{m}}\\
        \norm{\partial_{\theta} \partial_{\theta_j} g_{\Delta_{m}(\theta)}(x) \partial_{\theta}^\top g(x)}_{op}\leq \frac{1}{M}\sqrt{\sum_{\ell\in\Delta_{m}} \sum_{k\in[p]} \qty(\omega_2^{\ell, j} \omega_1^{k})^2} \leq \frac{p+M}{M^{3/2}} \omega_2^{\Delta_m} \max_{m''\in[M]} \omega_1^{\Delta_{m''}}. \label{ineq:bound_deriv4}
    \end{align}
    
    We put  \eqref{ineq:bound_deriv2}, \eqref{ineq:bound_deriv3} and \eqref{ineq:bound_deriv4} into an operator norm of $U^j_p(\theta,z)$ associated with the form \eqref{eq:U_p^j}.
    By the setting $M / p^{3/4} \to \infty$, we can assume $M / p^{3/4} \geq 1$ without loss of generality.
    Then, we obtain
    \begin{align*}
        &\sup_{\theta} \sup_{z} \norm{U^j_p(\theta,z)}_{op}\\
        &\leq \frac{p+M}{M^2} \biggl(b (\omega_1^{\Delta_{m'}})^3 + b \omega_3^{\Delta_{m'}} + \frac{1}{M} \omega_1^{\Delta_{m'}} \max_{m''\in[M]} \omega_2^{\Delta_{m''}}+ 2\frac{1}{M^{1/2}} \omega_2^{\Delta_m} \max_{m''\in[M]} \omega_1^{\Delta_{m''}}\biggr)\\
        &= O \left( \frac{p+M}{M^2}\right),
    \end{align*}
    by the boundedness of $\omega_1^{\Delta_m},\omega_2^{\Delta_m} $ and $\omega_3^{\Delta_m}$.
    Since we have 
    \begin{align*}
        \frac{p + M}{M^2} \leq \frac{1}{p^{1/2}}\qty(\qty(\frac{p^{3/4}}{M})^2 + \frac{1}{p^{1/4}}) &= o\qty(\frac{1}{p^{1/2}}),
    \end{align*}    
    we obtain  $\sum_{j=1}^p (\beta_p^j)^2 = o(1)$.

%    Since 
%    \begin{align*}
%        \frac{p + M}{M^2} \leq \frac{1}{p^{1/6}}\qty(\qty(\frac{p^{7/12}}{M})^2 + \frac{1}{p^{7/12}}) &= o\qty(\frac{1}{p^{1/6}}),\\
%        \frac{p + M}{M^2} \leq \frac{1}{p^{1/2}}\qty(\qty(\frac{p^{3/4}}{M})^2 + \frac{1}{p^{1/4}}) &= o\qty(\frac{1}{p^{1/2}}),
%    \end{align*}
%    and
%    \begin{align*}
%        b \omega_3^{\Delta_{m'}} +  \frac{1}{M} \omega_1^{\Delta_{m'}} \max_{m''\in[M]} \omega_2^{\Delta_{m''}}
%        + 2\frac{1}{M^{1/2}} \omega_2^{\Delta_m} \max_{m''\in[M]} \omega_1^{\Delta_{m''}} &= O(1),
%    \end{align*}
%    it holds that
%    \begin{align*}
%        \sum_{j} \qty(\beta_p^j)^6 &= o(1),\\
%        \sum_{j} \qty(\beta_p^j)^2 &= o(1).
%    \end{align*}

    \textbf{Step (ii). Bound for $\sum_j (\alpha_p^j)^2$ on Lipschitz constants}:
    We consider a ${\card(\Delta_m) \times \card(\Delta_m)}$ matrix $H_2^{\Delta_m}(x) := (\partial_\theta \partial_\theta^\top (g_{\underline{\theta}(x)} - g_{\underline{\theta}'}(x)))_{j\in\Delta_m, k\in\Delta_m}$ for $\underline{\theta},\underline{\theta}'$, then develop an upper bound for an operator norm of $H_2^{\Delta_m}(x)$.
    Since $\partial_\theta \partial_\theta^\top (g_{\underline{\theta}(x)} - g_{\underline{\theta}'}(x)))$ is a block diagonal matrix, we obtain
    \begin{align}
        \norm{\partial_\theta \partial_\theta^\top \qty(g_{\underline{\theta}}(x) - g_{\underline{\theta}'}(x))}_{op} &= \max\qty{\norm{H_2^{\Delta_1}(x)}_{op}, \dots, \norm{H_2^{\Delta_{M}}(x)}_{op}} \notag \\
        &\leq \max_{m \in [M]} \frac{1}{M} \sqrt{\sum_{l\in\Delta_m} \sum_{k \in \Delta_m} \qty(\zeta_2^{l,k} \norm{\underline{\theta} - \underline{\theta}'})^2} \notag \\
        &\leq \frac{p + M}{M^2} \max_{m''\in[M]} \zeta_2^{\Delta_{m''}} \norm{\underline{\theta} - \underline{\theta}'}. \label{ineq:bound_lip1}
    \end{align}
    We apply \eqref{ineq:bound_lip1} and the bounds for derivative \eqref{ineq:bound_deriv1}, we bound the following difference of the third derivatives as
    \begin{align}
        \norm{\partial_{\theta} \partial_{\theta}^\top \partial_{\theta_j} g_{\Delta_{m'}(\underline{\theta})}(x) - \partial_{\theta} \partial_{\theta}^\top \partial_{\theta_j} g_{\Delta_{m'}(\underline{\theta}')}'(x)}_{op}& \leq \sqrt{\sum_{\ell \in\Delta_{m'}} \sum_{k\in\Delta_{m'}} \qty(\zeta_3^{\ell, k, j} \norm{\underline{\theta} - \underline{\theta}'})^2} \notag \\
        & \leq \frac{p + M}{M} \zeta_3^{\Delta_{m'}} \norm{\underline{\theta} - \underline{\theta}'}, \label{ineq:bound_lip2}
    \end{align}
    and also the difference of the following product of first and second derivatives as
    \begin{align}
        &\norm{(\partial_{\theta_j} g_{\Delta_{m'}(\underline{\theta})}(x)) \partial_{\theta} \partial_{\theta}^\top g_{\underline{\theta}}(x) - (\partial_{\theta_j} g_{\Delta_{m'}(\underline{\theta}')}'(x))\partial_{\theta} \partial_{\theta}^\top g_{\underline{\theta}'}(x)}_{op} \notag \\
        & \leq \abs{\partial_{\theta_j} g_{\Delta_{m'}(\underline{\theta}')}(x)} \norm{\partial_{\theta} \partial_{\theta}^\top g_{\underline{\theta}}(x) - \partial_{\theta} \partial_{\theta}^\top g_{\underline{\theta}'}(x)}_{op} \notag \\
        &  \quad + \norm{\partial_{\theta} \partial_{\theta}^\top g_{\underline{\theta}}(x)}_{op} \abs{\partial_{\theta_j} g_{\Delta_{m'}(\underline{\theta})}(x) - \partial_{\theta_j} g_{\Delta_{m'}(\underline{\theta}')}(x)} \notag \\
        & \leq \frac{p + M}{M^2} \qty(\omega_1^{\Delta_{m'}} \max_{m''\in[M]} \zeta_2^{\Delta_{m''}} + \zeta_1^{\Delta_{m'}} \max_{m''\in[M]} \omega_2^{\Delta_{m''}}) \norm{\underline{\theta} - \underline{\theta}'}, \label{ineq:bound_lip3}\\
        & \mbox{and } \notag \\
        &\norm{\partial_{\theta} \partial_{\theta_j} g_{\Delta_{m'}(\underline{\theta})}(x) \partial_{\theta}^\top g_{\underline{\theta}}(x) - \partial_{\theta} \partial_{\theta_j} g_{\Delta_{m'}(\underline{\theta})}(x) \partial_{\theta}^\top g_{\underline{\theta}'}(x)}_{op} \notag \\
        & \leq \norm{\partial_{\theta} \partial_{\theta_j} g_{\Delta_{m'}(\underline{\theta})}(x) \partial_{\theta}^\top g_{\underline{\theta}}(x) - \partial_{\theta} \partial_{\theta_j} g_{\Delta_{m'}(\underline{\theta})}(x) \partial_{\theta}^\top g_{\underline{\theta}'}(x)}_{op} \notag \\
        & \quad + \norm{\partial_{\theta} \partial_{\theta_j} g_{\Delta_{m'}(\underline{\theta})}(x) \partial_{\theta}^\top g_{\underline{\theta}'}(x) - \partial_{\theta} \partial_{\theta_j} g_{\Delta_{m'}(\underline{\theta}')}(x) \partial_{\theta}^\top g_{\underline{\theta}'}(x)}_{op} \notag \\
        & \leq \sqrt{\sum_{\ell\in\Delta_{m'}} \sum_{k\in[p]} \qty(\omega_2^{\ell,j} \zeta_1^{k} \norm{\underline{\theta} - \underline{\theta}'})^2} + \sqrt{\sum_{\ell\in\Delta_{m'}} \sum_{k\in[p]} \qty(\zeta_2^{\ell,j} \omega_1^{k} \norm{\underline{\theta} - \underline{\theta}'} )^2} \notag \\
        & \leq \frac{p+M}{M^{3/2}} \qty(\omega_2^{\Delta_{m'}} \max_{m''\in[M]} \zeta_1^{\Delta_{m''}} + \zeta_2^{\Delta_{m'}} \max_{m''\in[M]} \omega_1^{\Delta_{m''}}) \norm{\underline{\theta} - \underline{\theta}'}. \label{ineq:bound_lip4}
    \end{align}
    
    We combine the form \eqref{eq:U_p^j} and the derivative bounds \eqref{ineq:bound_lip1}, \eqref{ineq:bound_lip2}, \eqref{ineq:bound_lip3}, \eqref{ineq:bound_lip4}, then obtain
    \begin{align*}
        &\sup_{\underline{\theta}, \underline{\theta}'} \sup_{z} \norm{U^j_p(\underline{\theta},z) - U^j_p(\underline{\theta}',z)}_{op}\\
        & \leq \frac{p + M}{M^2 \sigma^2} \norm{\underline{\theta} - \underline{\theta}'} \Biggl(b \zeta_3^{\Delta_{m'}} + \frac{1}{M} \qty(\omega_1^{\Delta_{m'}} \max_{m''\in[M]} \zeta_2^{\Delta_{m''}} + \zeta_1^{\Delta_{m'}} \max_{m''\in[M]} \omega_2^{\Delta_{m''}})\\
        &\quad \quad + \frac{2}{M^{1/2}} \qty(\omega_2^{\Delta_{m'}} \max_{m''\in[M]} \zeta_1^{\Delta_{m''}} + \zeta_2^{\Delta_{m'}} \max_{m''\in[M]} \omega_1^{\Delta_{m''}})\Biggr) \\
        &= O\left( \frac{p + M}{M^2}\right),
    \end{align*}
    by the boundedness of the coefficients $\omega_1^{\Delta_{m'}}, \omega_2^{\Delta_{m'}} \zeta_1^{\Delta_m},\zeta_2^{\Delta_m}, \zeta_3^{\Delta_m}$, noise variance $\sigma^2$, and the parameter space.
    Since $(p + M)/M^2 = o(1/\sqrt{p})$ holds, we obtain  $\sum_j (\alpha_p^j)^2 = o(1)$.
    %\begin{align*}
    %    &b \zeta_3^{\Delta_{m'}} + \frac{1}{M} \qty(\omega_1^{\Delta_{m'}} \max_{m''\in[M]} \zeta_2^{\Delta_{m''}} + \zeta_1^{\Delta_{m'}} \max_{m''\in[M]} \omega_2^{\Delta_{m''}})\\
    %    &\quad + \frac{2}{M^{1/2}} \qty(\omega_2^{\Delta_{m'}} \max_{m''\in[M]} \zeta_1^{\Delta_{m''}} + \zeta_2^{\Delta_{m'}} \max_{m''\in[M]} \omega_1^{\Delta_{m''}}) = O(1),
    %\end{align*}
    %we obtain
    %\begin{align*}
    %    \sum_j \qty(\alpha_p^j)^2 = o(1).
    %\end{align*}
    
    \textbf{Step (iii). Bound for Assumption \ref{asmp:fisher_residual}}:
    We derive an explicit form of $w(z_i)$ in this setting:
    \begin{align*}
         w(z_i) &= (1/f(z_i))\partial_\theta \partial_\theta^\top f(z_i)\\
         &= \frac{1}{\sigma^4} \qty((y_i - g_{\theta}(x_i))^2 - \sigma^2) \partial_\theta g_{\theta}(x_i) \partial_\theta^\top g(x_i) + \frac{1}{\sigma^2}(y_i - g_{\theta}(x_i)) \partial_\theta \partial_\theta^\top g_{\theta}(x_i).
    \end{align*}
    Since the Hesse matrix is written as
    \begin{align*}
        \partial_\theta g_{\theta}(x_i) \partial_\theta^\top g_{\theta}(x_i) &= \frac{1}{M^2} \sum_{m'' \in [M]} \partial_\theta g_{\Delta_{m''}(\theta)}(x_i) \partial_\theta^\top g_{\Delta_{m''}(\theta)}(x_i),
    \end{align*}
    we can obtain the following inequality:
    \begin{align*}
        \norm{\partial_\theta g_\theta(x_i) \partial_\theta^\top g_\theta(x_i)}_{op} \leq \frac{1}{M^2} \frac{p + M}{M} \max_{m''\in[M]} \qty(\omega_1^{\Delta_{m''}})^2.
    \end{align*}
    Then, we obtain the following bound by using the boundedness of the parameters and noise variance and $p/M = o(1)$:
    \begin{align*}
        \norm{w(z_i)}_{op} \leq \frac{p + M}{M^2\sigma^4} \qty( ( b^2 + \sigma^2 ) \frac{1}{M} \max_{m''\in[M]} \qty(\omega_1^{\Delta_{m''}})^2 + b \sigma^2 \max_{m''\in[M]} \omega_2^{\Delta_{m''}}) = o(1).
    \end{align*}
    Also, we obtain $\norm{\mathbb{E}[w(z_i) w(z_i)^\top]}_{op} \leq \mathbb{E}[\norm{w(z_i)}_{op}^2] = o(1)$.
\end{proof}

%We shall estimate $\theta^*$ by penalized least squares.
%\begin{align*}
%    \hat\theta = \argmin_{\theta} \sum_i (y_i - g_\theta(x_i))^2 + \tau\sigma^2 \norm{\theta}.
%\end{align*}
%Note that this is equivalent to estimating $\hat\theta$ by penalized MLE.

%(*) The log-concavity of $j_{1}$ is satisfied if $\partial_{\theta} g(x_1)$ is log-concave.
%Hence if $g_{\theta^*}(x_i) = x_i^\top \theta^*$ and $x_1$ follows a normal distribution,
%log-concavity of $j_{1}$ is satisfied.

\if0
\section{Proof Sketch}

\begin{itemize}
    \item $\{Y_\alpha\}_{\alpha=1}^m$: i.i.d. random isotropic vectors in $\R^n$
    \item $\{\tau_\alpha\}_{\alpha=1}^m$ coefficients
    \item $L_Y \in \R^{n \times n}$: rank-1 matrix depends on $Y \in \R^n$ such as
    \begin{align*}
        L_Y X = \langle X,Y \rangle Y, \forall X \in \R^n
    \end{align*}
    \item $M_n \in \R^{n \times n}$: a random matrix such as
    \begin{align*}
        M_n = \sum_{\alpha = 1}^m \tau_\alpha L_{Y_\alpha}
    \end{align*}
    \item $H_n^{(0)} \in \R^{n \times n}$: a deterministic matrix
    \item $H_n \in \R^{n \times n}$: a random matrix such as
    \begin{align*}
        H_n = H_n^{(0)}  + M_n
    \end{align*}
    \item Set $m=m_n$ and consider $c_n := m_n / n \to c \geq 0, n \to \infty$
\end{itemize}

To show the result, we provide several notations.
With an integer $p$, we define a truncated coefficient
\begin{align*}
    \tau_\alpha^{(p)} = \tau_\alpha 1\{|\tau_\alpha| \leq p\},
\end{align*}
and define the random matrix
\begin{align*}
    H_n^{(p)} = H_n^{(0)} + \sum_{\alpha = 1}^{m_n}  \tau_\alpha^{(p)} L_{Y_\alpha}.
\end{align*}
Let $N_n^{(p)}$ be the spectrum counting measure of $H_n^{(p)}$ and $f^{(p)}(x) = \int_\R \frac{1}{\lambda - z} N^{(p)}(d\lambda)$ be its Stieltjes transform.

The following lemma is one of the key results for the main theorem.
\begin{lemma}[Proposition 3.4] \label{lem:transform_p}
    The following relation holds:
    \begin{align*}
        f^{(p)} (z) = f^{(0)} \left(z - c\int_{-p}^p \frac{\tau}{1 + \tau f^{(p)}(z)} \sigma(d\tau) \right)
    \end{align*}
\end{lemma}
\begin{proof}[Proof Outline of Lemma \ref{lem:transform_p}]
We define a resolvent of $H_n^{(p)}$ and $H_n^{(0)}$ as
\begin{align*}
    G= G(z) = (H_n^{(p)}-zI)^{-1} \mbox{~~and~~} \mG = \mG(z) = (H_n^{(0)} - zI)^{-1}.
\end{align*}
Also, we define $G_\alpha = G|_{Y_\alpha=0}$.

Since the second resolvent identity $\mG = G - \mG(H_n^{(0)} - H_n^{(p)}) G$ yields
\begin{align*}
    G = \mG - \sum_{\alpha = 1}^m \tau_\alpha GL_{Y_\alpha} \mG.
\end{align*}
Also, it yields $G = \mG - \tau \mG L_{Y_\alpha} \mG (1 + \tau \langle \mG Y_\alpha,Y_\alpha \rangle)^{-1}$ for $m=1$, we obtain
\begin{align*}
    G L_{Y_\alpha} = G_\alpha L_{Y_\alpha} (1 + \tau_\alpha^{(p)} \langle G_\alpha Y_\alpha, Y_\alpha \rangle)^{-1},
\end{align*}
for each $\alpha$.
Then, we simply obtain
\begin{align}\label{eq:1}
    \Bar{G} := \Ep[G] = \mG - \sum_{\alpha = 1}^m \Ep\left[ \frac{\tau_\alpha^{(p)}}{1 + \tau_\alpha^{(p)} \langle G_\alpha Y_\alpha, Y_\alpha \rangle} G_\alpha L_{Y_\alpha} \right] \mG. 
\end{align}

We will replace $\langle G_\alpha Y_\alpha, Y_\alpha \rangle$ and $G_\alpha L_{Y_\alpha}$ in \eqref{eq:1}.
Due to the isotropic property of $Y_\alpha$, we have $\Var(\langle G_\alpha Y_\alpha, Y_\alpha \rangle) \to 0$ and $\Var(G_\alpha L_{Y_\alpha}) \to 0$ as $n \to \infty$ and $\Ep[\langle G_\alpha Y_\alpha, Y_\alpha \rangle] = n^{-1} \mathrm{Tr} \Ep[G_\alpha]$ and $\Ep[G_\alpha L_{Y_\alpha}] = n^{-1} \Ep[G_\alpha]$.
Also, combining the second resolvent identity yields $n^{-1} (G- G_\alpha) \to 0$ as $n \to \infty$. 
To guarantee the convergence, the log-concave assumption is required.
Hence, we replace $\langle G_\alpha Y_\alpha, Y_\alpha \rangle$ by $n^{-1} \mathrm{Tr} \Ep[G]$ and $G_\alpha L_{Y_\alpha}$ by $\Ep[G]$, then we have
\begin{align*}
    \Ep[G] = \mG - \sum_{\alpha = 1}^m  \frac{\tau_\alpha^{(p)}}{1 + \tau_\alpha^{(p)} n^{-1} \mathrm{Tr} \Ep[G]} \Ep[G] \mG + o(1).
\end{align*}
We rewrite $f_n^{(p)}(z) := n^{-1} \mathrm{Tr} \Ep[G(z)] = \int_\R \frac{1}{\lambda - z} \Ep[H_n^{(p)}](d\lambda)$, then 
\begin{align*}
    \Ep[G] = \mG - \sum_{\alpha = 1}^m  \frac{\tau_\alpha^{(p)}}{1 + \tau_\alpha^{(p)} f_n^{(p)}(z)} \Ep[G] \mG + o(1).
\end{align*}
Note that the support of the measure is $[-p,p]$ due to the definition of $\tau_\alpha^{(p)}$.
Here, we apply the first resolvent identity $G(z) - G(z') = (z-z')G(z) G(z')$ and obtain
\begin{align*}
    \Ep[G(z)] &= \mG \left(z -  \sum_{\alpha = 1}^m  \frac{\tau_\alpha^{(p)}}{1 + \tau_\alpha^{(p)} f_n^{(p)}(z)}\right) \\
    &=\mG \left(z -  c_n \int_{-p}^p \frac{\tau}{1 + \tau f_n^{(p)}(z)} \sigma_m(d\tau)\right).
\end{align*}
Taking trace and multiplying $n^{-1}$ implies
\begin{align*}
    f_n^{(p)}(z) = f_n^{(0)} \left(z -  c_n \int_{-p}^p \frac{\tau}{1 + \tau f_n^{(p)}
    (z)} \sigma_m(d\tau) \right).
\end{align*}
\end{proof}

\fi
\bibliographystyle{plain}
\bibliography{main}

\end{document}